\newcommand{\Msum}{M_{\rm sum}}
\newcommand{\Msumu}{ M_{\mathrm{sum},u} }
\newcommand{\Msumv}{ M_{\mathrm{sum},v} }
\begin{document}

\preprint{APS/123-QED}

\title{How noise affects memory in linear recurrent networks}



\author{JingChuan Guan}
\email{kan@isi.imi.i.u-tokyo.ac.jp}

\author{Tomoyuki Kubota}

\author{Yasuo Kuniyoshi}

\author{Kohei Nakajima}

\affiliation{%
Intelligent Systems and Informatics Laboratory, Graduate School of Information Science and Technology, The University of Tokyo, 7-3-1 Hongo, Bunkyo-Ku, Tokyo, Japan.
}

\date{\today}

\begin{abstract}
The effects of noise on memory in a linear recurrent network are theoretically investigated.
Memory is characterized by its ability to store previous inputs in its instantaneous state of network, which receives a correlated or uncorrelated noise.
Two major properties are revealed:
First, the memory reduced by noise is uniquely determined by the noise's power spectral density (PSD). 
Second, the memory will not decrease regardless of noise intensity if the PSD is in a certain class of distribution (including power law).
The results are verified using the human brain signals, showing good agreement.

\begin{description}
\item[Keywords]
noise, memory, information processing, RNN, autocorrelation.
\end{description}
\end{abstract}

\maketitle

\textit{Introduction}.---Understanding the effects of noise on information processing is a crucial problem in comprehending any physical system.
For instance, in the field of quantum computation, 
the interaction between a quantum device and environment occurs with noise in the device, which impairs the accuracy of quantum computation~\cite{google2023suppressing, georgescu202025}. 
In nature, living organisms process information from the external environment
by extracting the necessary inputs from a large amount of signals containing noise, where noise works as a type of disturbance.

Short-term memory plays an essential role among various types of information processing,
which requires past input history.
This includes various tasks required in daily lives:
mental calculation~\cite{jonides2008mind}, recalling brief number of items~\cite{roediger20171}, and motor controls involving precise time perception~\cite{buonomano2010population, buonomano2017your}.
Additionally, many recent studies have reported that various types of physical systems can be utilized as computational resources~\cite{nakajima2020physical, nakajima2021reservoir}
where their short-term memories are exploited to solve tasks~\cite{liang2024physical}.
In the theoretical studies exploring recurrent neural networks (RNNs),
the memory has been characterized by memory function (MF)~\cite{jaeger2001short}
and information processing capacity~\cite{dambre2012information}, which can comprehensively reveal the memory in the network
\cite{faisal2008noise, moss2004stochastic, white2004short,
ganguli2008memory, rodan2010minimum, farkavs2016computational, gonon2020memory, toyoizumi2011beyond, schuecker2018optimal, haruna2019optimal, kubota2021unifying, ballarin2023memory}.
Using these measures, 
the dependency of memory on parameters has been investigated. 
Some studies~\cite{rodan2010minimum, kawai2019small, dale2021reservoir}
numerically revealed that its network topology~\cite{erdds1959random, watts1998collective, barabasi1999emergence} affects the memory.
From the perspective of noise, other researches~\cite{jaeger2001short,white2004short,dambre2012information} have reported that random noise reduces the past inputs held in the network and that,
as the noise-to-signal ratio (NSR) increases,
the reduction becomes more critical.
Therefore, the random noise dominantly has the negative impact on the information processing based on the short-term memory.

Those researches~\cite{jaeger2001short,white2004short,dambre2012information} have focused on the case of random noise,
which is termed independent and identically distributed (i.i.d.) noise;
however, real-world systems receive not only i.i.d. noise but also correlated noise. 
For example, $1/f$-like noise~\cite{voss1975f, west1990noise, voss1992evolution, yamamoto1994fractal, kobayashi19821, pritchard1992brain, freeman2000spatial, robinson2001prediction, bedard2006does, allegrini2009spontaneous},
whose power spectral density (PSD) follows $1/f^\beta$, can be ubiquitously observed.
Accordingly,
it is imperative to analyze the effects of general noises on information processing besides uncorrelated noise.

In this paper, we theoretically reveal properties of general noise regarding its influence on information processing in a linear RNN that receives input and noise.
We derive an analytical solution of MF to investigate the effects of noise on memory.
Based on the analytical solution, we show two properties. 
First, we derive a simplified representation of the total memory by taking sufficiently large number of nodes to reveal the effect induced by noise correlation.
Second, we introduce a novel way to express MF, and clarify the impact of noise intensity on MF.
We demonstrate these effects of noise using experimental data obtained from the human brain.

\textit{Methods}.---We consider an RNN with a fixed internal weight, which is called an echo state network (ESN)~\cite{jaeger2001echo}.
A discrete-time RNN updates the state as follows:
$\bm{x}_{t+1} = f(\bm{W}\bm{x}_{t} + \bm{w}_1u_{t+1} + \bm{w}_2v_{t+1})$,
where $\bm{x}_{t} \in \mathbb{R}^N$, $u_t\in\mathbb{R}$,
and $v_t\in\mathbb{R}$ denote the state, input, and noise at $t$-th step, respectively,
while $N$ is the number of nodes; $f(\cdot)$ is the activation function and $\bm{W}\in\mathbb{R}^{N\times N}$ is the internal weight matrix; and 
$\bm{w}_1, \bm{w}_2\in\mathbb{R}^N$ are the weight vectors of input and noise, respectively. 
In the present paper, we have adopted the uniformly random input $u_t\in[-1, 1]$ and the linear activation function $f(\cdot)$. 

We evaluate the short-term memory in the RNN using the MF~\cite{jaeger2001short, dambre2012information},
which represents how well the past injected input $u_{t-\tau}$, could be emulated by a linear approximation with the network state:
$\hat{u}_{t-\tau} = \bm{w}_{\text{out}}^{\top} \bm{x}_t$,
where $\tau$ is the delay from the current time.
After determining the readout weight vector $\bm{w}_{\text{out}}$ by minimizing a loss function of the mean squared error (MSE) $1/T\sum_{t=1}^T (\hat{u}_{t-\tau}-u_{t-\tau})^2$,
the MF is obtained as follows:
\begin{equation}
M[u_{t-\tau}] = 1 -
\frac{
\text{min}_{\bm{w}_{\text{out}}} \langle (\hat{u}_{t-\tau}-u_{t-\tau})^2 \rangle
}{\langle u_{t-\tau}^2\rangle} ~(\leq 1),
\label{original MF}
\end{equation}
where $T$ is the sampled time length and 
$ \langle \cdot \rangle$ denotes the time average.
The upper bound $1$ is satisfied when the system has fully memorized the input required to reconstruct the target. 
The sum of the MF with respect to all $\tau$ represents the memory capacity (MC) of the full system, 
$\Msumu =\sum_{\tau=0}^\infty M[u_{t-\tau}]$.
The upper bound of $\Msumu$ is the number of linearly independent time series in the state, which is called rank and ideally $N$.

To demonstrate an applicable range of our results, 
we have utilized not only noise models~\cite{hanggi1993can, fox1988fast, lorenzo1999colored, farlow2006introduction, aizawa1984f, aizawa1989non, akimoto2007new,cryer1986time, langbein2012estimating} but also experimental data of human brain activities that show a $1/f$-like property~\cite{pritchard1992brain, freeman2000spatial, robinson2001prediction, bedard2006does, allegrini2009spontaneous}.
We adopted the EEG signals~\cite{wang2022test} measured from three brain areas:
midline frontal (Fz), vertex (Cz), midline parietal (Pz). 
Signals in brain are expected to have a large amount of information, which include not only memory but irrelevant signals coming from different regions of the brain.
In the current study, we have injected both input and EEG signal to the RNN and regard the EEG as noise. 

\textit{Results}.---In the present study, we derive the MF and MC of RNNs that receive input and noise by direct substitution. 
We begin the derivation from the following MF~\cite{dambre2012information}, which is equivalent to Eq.~(\ref{original MF}):
\begin{align}
M[u_{t-\tau}] &=
\frac{\bm{U}_{\tau}^{\top}\bm{X}(\bm{X}^{\top}\bm{X})^{-1}\bm{X}^{\top}\bm{U}_{\tau}}{\bm{U}_{\tau}^{\top} \bm{U}_{\tau}},\label{formula: MC base}
\end{align}
where $ \bm{X}\in\mathbb{R}^{T\times N} $ is a matrix whose column represents the states time series and $\bm{U}_{\tau}$ is the delayed input series.
In this derivation, we impose two assumptions on the system:
(i) the input $u_t$ and the noise $v_t$ are uncorrelated;
(ii) input and noise share the same weight vector, that is $w = w_1 = w_2$. 
Under the conditions, we derived analytical solutions of MF and MC using a matix $\bm{H}$ and an autocorrelation matrix of $u+v$, $\bm{C}_{uv} = \bm{E} + r \bm{C}_{v}$, where $ r = \langle v^2 \rangle / \langle u^2 \rangle $ is NSR and $\bm{C}_{v}$ is the autocorrelation matrix of $v$. 
$\bm{H}$ is defined as
$\bm{H} = 
\begin{pmatrix}
    \bm{H}_{K-1} & \bm{H}_{K-2} & \cdots & \bm{H}_{0}
\end{pmatrix}$, where 
$\bm{H}_{\tau} = 
\begin{pmatrix}
    {\lambda_1}^{{\tau}} & {\lambda_2}^{{\tau}} & \cdots & {\lambda_N}^{{\tau}}
\end{pmatrix}^{\top}$, 
$K$ is the time length and should be sufficiently large,
and $\lambda_i$ is the $i$-th eigenvalue of $\bm{W}$.
The $(i,j)$ components of $\bm{C}_{v}$ is defined as $(\bm{C}_{v})_{ij}=C(|i-j|)$, $1\leq i,j \leq K$, $i,j \in \mathbb{N}$, 
where $C(\tau)$ is the autocorrelation function of $v$ normalized by the variance.
We call this an analytical solution involving inverse matrix (ASI):
\begin{align}
M[u_{t-\tau}] &=
{\bm{H}_{\tau}}^{\top}
(\bm{H} \bm{C}_{uv} \bm{H}^{\top} )^{-1}
{\bm{H}_{\tau}}, \label{eq:ASI_MF}\\
\Msumu &= \sum_{\tau=0}^{K-1} M[u_{t-\tau}] =
\textrm{tr}\left[
{\bm{H}}^{\top}
( \bm{H} \bm{C}_{uv} \bm{H}^{\top} )^{-1}
{\bm{H}}\right],
\label{eq:ASI_MC}
\end{align}
which are derived in Sec. 1 of Supplementary material.
From ASI, we could confirm that
the MF and MC only depend on the eigenvalues $\lambda_i$ and the autocorrelation of $v$ 
(detailed investigation in Section 2 of Supplementary material).
Based on ASI, we derived the following analytical results.

First, we focus on the MC of a sufficiently large RNN (i.e., the number of nodes and time-series length are infinite $N=K\rightarrow\infty$). 
Under this assumption, we can derive the following formula of $C_{{\rm sum}, u}$ from Eq.~(\ref{eq:ASI_MC}) under the assumption that the rank of system is full:
\begin{align}
\Msumu &= \sum_{i=1}^{N}  \frac{1}{ 1 + r \lambda[\bm{C}_v]_i},
\label{eq:MC_diff_large_func}
\end{align}
where $\lambda[\bm{C}_v]_i$ is the eigenvalues of $\bm{C}_v$ (derived in Section 4 of Supplementary material).
This formula yields two important properties of MC.
(i) The MC becomes independent of $\lambda_i$ and is determined only by $\bm{C}_v$, which are equivalent to the PSD of $v$~\cite{hayes1996statistical, moon2000mathematical}.
The result suggests that, in addition to NSR, the PSD is also crucial in determining the effect of noise, which are numerically verified in Section 4 of Supplementary material.
(ii) The MC of the infinite-dimensional RNN with an arbitrary autocorrelated noise is greater than that with random noise.
We can explain this property by introducing the minimum value of $\Msumu$:
\begin{align}
\Msumu \geq \frac{1}{1+r}N
\label{eq:MC_diff_random_correlated}
\end{align}
which is proven under two conditions that $\textrm{tr}\left[ \bm{C}_v \right]=K$ and the function $\frac{1}{1+x}$ is downward convex
(Theorem 4. 1 in Supplementary material).
The minimum value is the MC with any type of random noise,
meaning that equality holds if the PSD of $v$ is flat and that $\Msumu$ becomes larger if the PSD is not flat.

\begin{figure*}[tbh]
\centering
\includegraphics[width=0.9\linewidth]{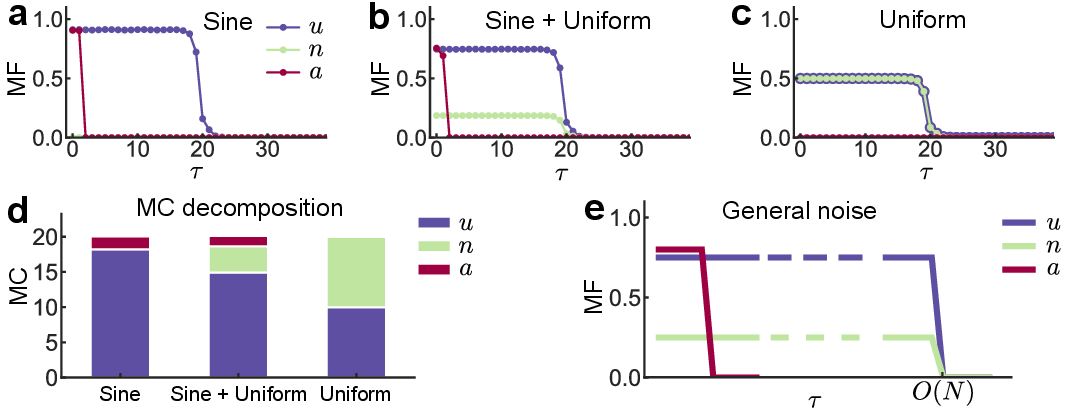}
\caption{\label{fig:MF_decomposed}
The mechanism of memory reduction caused by autocorrelation of noise.
The MF and MC of random input $u$ (purple) and two elements of autocorrelated noise $n$ (green) and $a$ (red) are illustrated. 
To demonstrate the mechanism, the superposition of sinusoidal wave and uniform random noise is used as a noise, whose ratios are $1$ to $0$, $1$ to $1$, and $0$ to $1$ in (a), (b), and (c), respectively.
The input and all superposed noises have the same intensities.
(a)--(c)~show the MFs, whose vertical and horizontal axes are MF and delay, respectively. 
(d)~sums up the upper MFs for each basis $u$, $n$, and $a$, whose horizontal and vertical axes are the label of noise and MC, respectively. 
All analyses were performed with a 20-node system and were repeated for 40 trials to average the MFs and MCs. 
(e)~illustrates the MFs of the infinite-dimensional system with general noise, in which 
the MF of $u_{t-\tau}$ and $n_{t-\tau}$ are much longer than that of $a_{t-\tau}$.
}
\end{figure*}

Second, we focus on MF of both the input and noise.
The preceding result has focused on the memory of input held in RNNs, which also keeps noise as memory; 
however, the MF is conventionally defined with i.i.d. signals and that,
for an autocorrelated noise, has not been derived thus far. 
To address this problem, 
we define the MF from another perspective~\cite{kubota2021unifying}, 
which is the square norm of the coefficient in the state expanded by orthonormal bases and is equivalent to the definition of Eq.~(\ref{original MF}). 
We use an autocorrelated noise model $v_t$ represented by a sum of two factors: i.i.d. noise $\tilde{n}_t$ and time-dependent function $a_t$.
A delayed noise of $v_{t}$ includes delayed series of $\tilde{n}_t$ and orthogonal basis $\tilde{a}_t$ obtained by decomposing $a_t$.
Subsequently, $k$-th delayed noise $v_{t-k}$ can be decomposed into 
$v_{t-k} = c^n_{k}\tilde{n}_{t-k} + \sum_{i=0}^k c^a_{ki} \tilde{a}_{t-i}$,
where the two elements have time averages $\left< \tilde{n}_t\right> = \left< 
\tilde{a}_t\right> = 0$.
The bases $\{\tilde{n}_{t}\}$ are innately defined in the noise and 
$\{\tilde{a}_{t-i}\}$ are defined using the Gram--Schmidt orthogonalization:
\begin{align}
    \hat{a}_{t-k} = a_{t-k} - 
    \sum_{i=0}^{k-1} 
    \left< \tilde{a}_{t-i} {a}_{t-k} \right> \tilde{a}_{t-i},~ 
    \tilde{a}_{t-k} = \frac{\hat{a}_{t-k}}{||\hat{a}_{t-k}||}.
    \label{formula:orthogonalization}
\end{align}
In accordance with the polynomial expansion of state, we have introduced the MFs about the system with an i.i.d. input $u_t$ and an autocorrelated noise $v_t$.
Because the RNN includes only linear terms, the state time series is expanded by three types of time-series bases: $\{u_{t-\tau}\}$, $\{\tilde{n}_{t-\tau}\}$, and $\{\tilde{a}_{t-\tau}\}$. 
The delayed input time-series $\{u_{t-\tau}\}$ organize linearly independent bases because the input at each step is i.i.d.
Additionally, because of the orthogonalization, $\{\tilde{n}_{t-\tau}\}$ and $\{\tilde{a}_{t-\tau}\}$ are also appended to the bases of the orthogonal system. 
As a result, $\{u_{t-\tau}\}$, $\{\tilde{n}_{t-\tau}\}$, and $\{\tilde{a}_{t-\tau}\}$ span the complete orthonormal system for the linear RNN.
Using these bases, we can perfectly expand the state,
and define the MFs on the bases $u_{t-\tau}$, $\tilde{n}_{t-\tau}$, and $\tilde{a}_{t-\tau}$ 
as $M[u_{t-\tau}], M[n_{t-\tau}]$, and $M[a_{t-\tau}]$,
respectively (see Sec. 5 of Supplementary material for derivation).
We can define $\Msumu$ and $\Msumv$ as the MC of each signal, and $\Msum$ as the total MC of the system, which can be computed as
$\Msumu = \sum_{\tau=0}^\infty M[u_{t-\tau}]$,
$\Msumv = \sum_{\tau=0}^\infty M[n_{t-\tau}] + M[a_{t-\tau}]$,
and $\Msum = \Msumu + \Msumv$.
According to the completeness property~\cite{kubota2021unifying}, 
$\Msum= N$ holds because the system depends only on the past input and noise series that span the complete system.

This definition of MF enables us to elucidate the limitations of autocorrelated noise effect. 
Comparing the MF of i.i.d. elements and that of time-dependent function,
their difference can be characterized by the number of bases.
In a system where infinite time has passed, the number of bases $u_{t-\tau}$ would be infinite because the delayed time series would be linearly independent.
For the same reason, the number of bases $\{n_{t-\tau}\}$ is also infinite,
while that of $\tilde{a}_{t-\tau}$ can be both finite (e.g., sinusoidal curve) and infinite (e.g., $1/f$ noise).
Here, we begin with the assumption that the number $N_a$ of bases $\tilde{a}_{t-\tau}$ is finite,
showing that, with a sufficiently large $N$, the sum of MFs about $a_{t-\tau}$ would become $0$:
\begin{align}
\lim_{N\rightarrow\infty}\frac{1}{N}\sum_{\tau=0}^\infty M[a_{t-\tau}] = 0
\label{formula:Csum auto}
\end{align}
which is derived based on the following generating procedure of the base $\tilde{a}_{t-\tau}$.
A base of $\tilde{a}_{t-\tau}$ is newly generated when $\tau$ increments (Eq.~\ref{formula:orthogonalization}). 
Subsequently, the new base is removed if the current input can be expressed by linear combination of existing orthogonal polynomials. This procedure is repeated until the new base does not appear, and we finally obtain the finite number $N_a$ of bases in some cases.
For example, if $a_t = \cos(\omega t)$, the bases are $\cos(\omega t)$ and $\sin(\omega t)$, indicating $N_a=2$. 
This mechanism suggests that RNNs integrate the existing memory of the past inputs and overlapped information of the current input due to autocorrelation.
With a sufficiently large $N$, we obtain
$\sum_{\tau=0}^\infty M[a_{t-\tau}] \leq N_a $,
which produces
Eq.~(\ref{formula:Csum auto})
(MFs of $a$ in Fig.~\ref{fig:MF_decomposed}a, b).
Accordingly, combined with the completeness property, we derive $\lim_{N\rightarrow\infty} 1/N \sum_{\tau=0}^\infty 
\left( M[u_{t-\tau}] + M[n_{t-\tau}] \right) = 1$.
In addition, the MFs of these i.i.d. elements can be characterized by Eq.~(\ref{eq:ASI_MF}),
such that the variances $\langle u^2 \rangle$ and $\langle n^2 \rangle $ determine the ratio between the MFs of i.i.d. elements (see Sec. 5 of Supplementary material for derivation),
indicating that they are just scaled (MFs of $u$ and $n$ in Fig.~\ref{fig:MF_decomposed}a--c).
As the MC of $a_{t-\tau}$ gradually becomes $0$,
increase in $N$ leads to the enhancement of $\Msumu/N$ , 
which suggests that the inhibitory effect of noise becomes smaller.
In the infinite-dimensional RNN, 
$\Msumu/N$ ($\Msumv/N$) will converge to a certain value determined by the ratio of $\langle u^2 \rangle$ and $\langle n^2 \rangle $ (Fig.~\ref{fig:MF_decomposed}e): 
\begin{align}
\lim_{N\rightarrow\infty} \frac{\Msumu}{N} = \frac{\langle u^2 \rangle}{\langle u^2 \rangle+\langle n^2 \rangle}.\label{eq:convergence}
\end{align}
If the noise $v$ is composed only of random components $n_{t-\tau}$,
the ratio between the $\Msumu$ and $\Msumv$ is independent of $N$
and there is no enhancement of $\Msumu$ caused by increasing $N$ (Fig.~\ref{fig:MF_decomposed}c, d). 
In a system with finite $N_a$ (Fig.~\ref{fig:MF_decomposed}d),
we can confirm that, as the proportion of $n_{t-\tau}$ within $v$ decreases,
the disturbance effect of noise becomes smaller, which elucidates the result of Eq.~(\ref{eq:MC_diff_random_correlated}). 
Conversely, if $v$ is composed only of $a_{t-\tau}$, the noise has a little inhibitory effect independent of $r$:
\begin{align}
    \lim_{N\to \infty}\frac{\Msumu}{N}=1 
    ~\left(
    \lim_{N\to \infty}\frac{\Msumv}{N}=0
    \right) .\label{eq:little_inhibitory}
\end{align}

\begin{figure}[t]
    \centering
    \includegraphics[width=1.0\linewidth]{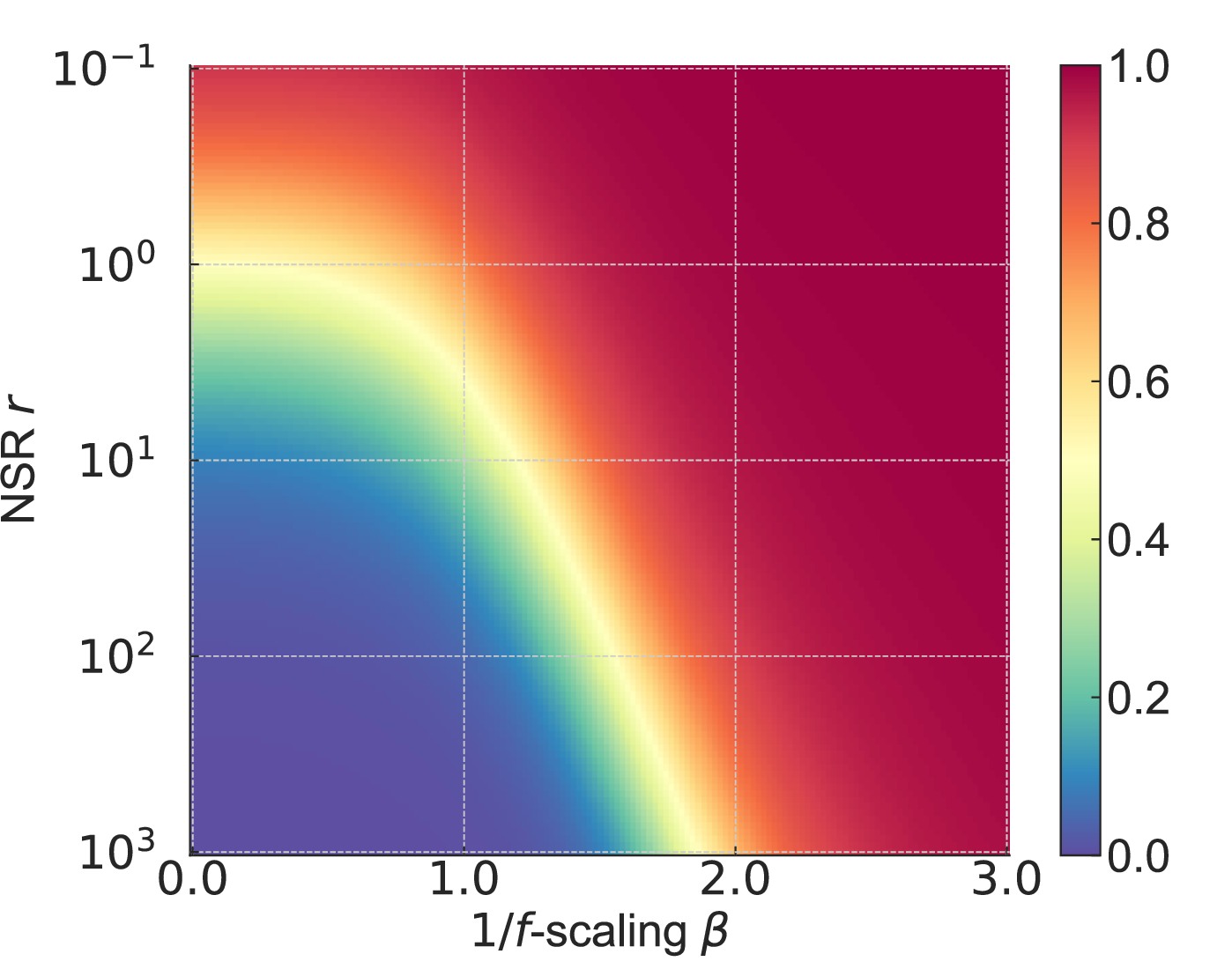}
    \caption{Dependency of normalized MC on $1/f$-scaling of noise and NSR.
    The color indicates $\Msumu/N$, where $N=10^4$.
    The horizontal axis is the $1/f$-scaling $\beta$ of the noise and the vertical axis is NSR $r$. 
    }\label{fig:autocor1}
\end{figure}

Furthermore, even if $N_a$ is infinite, $\Msumv$ could fulfill Eq.~(\ref{eq:little_inhibitory}). 
We proved Eq.~(\ref{eq:little_inhibitory}) in two cases of
(i) $\lim_{N\rightarrow\infty} \Msumv < \infty$ and
(ii) $\lim_{N\rightarrow\infty} \Msumv = \infty$ under conditions.
(i) Under the d'Alembert's test condition of $\lim_{n\rightarrow\infty} \hat{\lambda}[C_v]_{n+1}/\hat{\lambda}[C_v]_{n}<1$, Eq.~(\ref{eq:little_inhibitory}) holds, 
where $\hat{\lambda}[C_v]_{n}$ expresses a sorted version of $\lambda[C_v]_{n}$ in descending order.
(ii) In addition, even if $\lim_{N\rightarrow\infty} \Msumv = \infty$,
$\Msumv$ could hold Eq.~(\ref{eq:little_inhibitory}).
For example, we have proven the case of $1/f^\beta$ noise ($\beta\ge 1$) (see proofs in Sec. 6 of Supplementary material).
Both the examples of PSDs shown here are characterized by the skewed distribution.
We say that a PSD is skewed if the distribution of $\hat{\lambda}[C_v]_{n}$ satisfies Eq.~(\ref{formula:Csum auto}). 
Note that, since PSD represents the magnitude of coefficients of the Fourier series, in which the state is expanded by linearly independent bases of sinusoidal wave,
these results show that the distribution of the magnitude determine the noise effects.
To demonstrate cases in which a very large noise does not hinder information processing in RNN, 
we examined the dependency of normalized MC ($\Msumu/N$) on the parameters $\beta$ and $r$.
Counterintuitively, even if the NSR is large (e.g., $r = 100$), 
the RNN keeps $\Msumu/N \approx 1$ with a large $\beta (>2.0)$. 
Note that a blue region with $\beta\geq 1$ does not satisfy Eq.~(\ref{eq:little_inhibitory}) because $\Msumv/N = o(\log(\log N)/\log N)$
slowly converges to $0$, implying that $N \sim 10^4$ is not sufficiently large.
To mitigate the disturbance, the system requires $N\sim 10^{10^2}$,
implying that the real-world systems (e.g., the number of neurons in brain $N\sim 10^7$~\cite{pakkenberg1997neocortical}) cannot fully hold the MC.

\begin{figure}[t]
    \centering
    \includegraphics[width=1.0\linewidth]{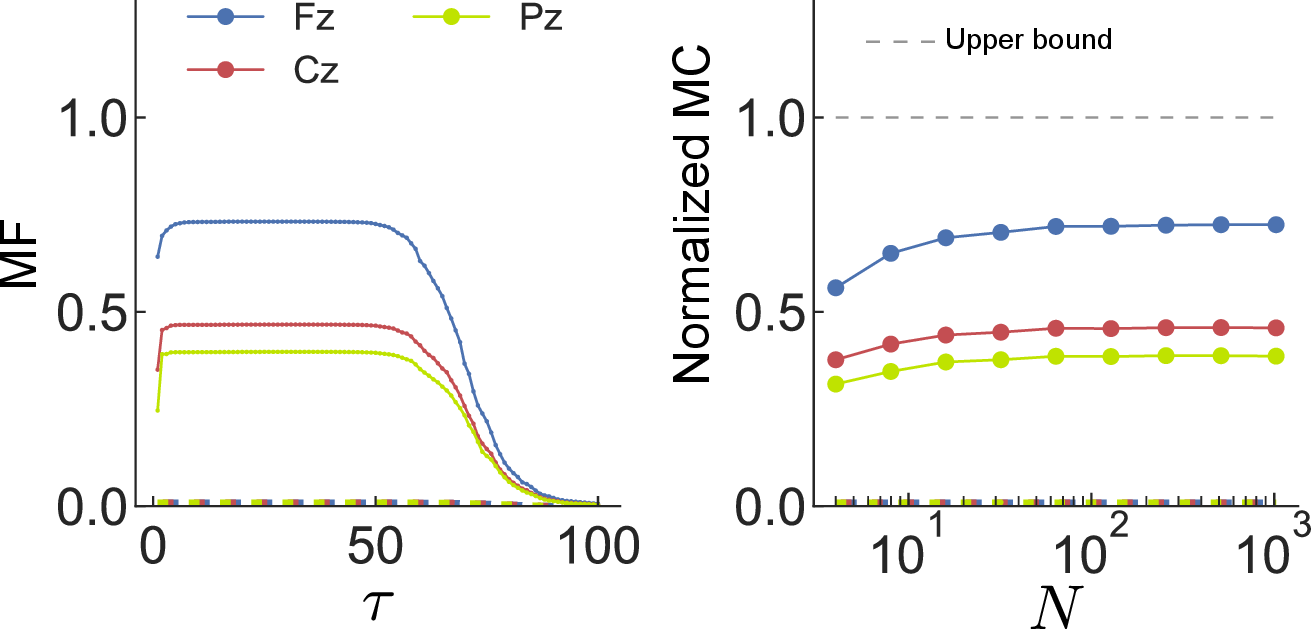}
    \caption{The effect of noise correlation in time series from the real world.
    The left (right) panels show MF (MC), whose horizontal axis is the delay $\tau$ of input (system size $N$). 
    The solid (dotted) lines are calculated with the original (shuffled) EEG series obtained from three electrodes: Fz, Pz, and Cz. 
    The left panel shows $M[u_{t-\tau}]$ in a 128-node RNN.
    The right panel shows $\Msumu$ normalized by $\Msum$, where the upper bound is the rank of system (gray dotted).
    All the plotted lines have been averaged over 40 trials.
    Across all noises, the intensities are the same value ($r=100$).
    }
    \label{fig:autocor2}
\end{figure}

Finally, we numerically verified the effects of noise on the memory using EEG signals from three human brain areas~\cite{wang2022test}.
Comparing the MF and MC of the systems using the autocorrelated noise with those using noises randomly shuffled in time direction (Fig.~\ref{fig:autocor2}),
the former consistently have significantly higher values than the latter. 
Even though the noise intensity is much larger than the input intensity ($r=100$), the MC keeps $\Msumu/\Msum > 0.7$ (``Fz'' in Fig.~\ref{fig:autocor2}, right), where $\Msumu$ is numerically normalized by $\Msum$.
As $N$ increases, $\Msumu/\Msum$ seems to converge to a fixed value,
which may indicate the ratio of $\langle u^2 \rangle$ and $\langle n^2 \rangle$ according to Eq.~(\ref{eq:convergence}).
The three different convergent values may imply that the ratio of random and autocorrelated elements varies among the regions of the brain.
In addition, by incorporating a threshold to determine where the convergence occurs, we can find a sufficiently large $N$ to bring out the maximum MC under the noise.
For example, under the threshold in which $\Msumu/\Msum$ perturbed smaller than $5\times 10^{-3}$, those $N$ of ``Fz'', ``Cz'', and ``Pz''are computed as $40$, $39$, and $37$, respectively.
Overall, we could confirm that the autocorrelated noise in the real-world always has smaller disturbance than random noises and that the memory of the input is retained even with the very strong irrelevant signals.

There are two limitations in our study.
We assumed that the input weights were shared between the input and noise.
This can be interpreted as a setting in which we inject the input containing noise to the RNN and ignore other noises. 
This enables us to analytically evaluate the effects of the initially added noise that entails any intensity and any correlation. 
We need to consider the noises coming from other input pathways in the future.
Next, we have only focused on linear RNNs in the current study.
However, previous researches have confirmed the presence of both linear and nonlinear information processing in real-world systems~\cite{kubota2021unifying, kubota2023temporal},
indicating that nonlinear cases should be investigated in future.

\textit{Summary}.---In the present study,
we used a linear RNN with random input and noise,
including correlated ones, to investigate the effects of noise on memory in general.
We derived an analytical solution of MF and MC dependent only on the autocorrelation of noise and the eigenvalues of internal weight,
which revealed the following three results:
First, in infinite-dimensional systems, 
the MC becomes independent of the internal weight and is determined only by the PSD of noise.
Therefore, our results hold for any type of linear RNNs regardless of whether its internal weights are random or trained. 
By using this solution, we proved that the MC with autocorrelated noise is larger than that with a random noise. 
Second, noise has little inhibitory effects in a sufficiently large system, regardless of its intensity,
if the intensities of linearly independent bases in the noise satisfy a certain condition.
In a general case that the number of base is infinite,
this result is satisfied when the intensities have a skewed distribution that decays at a sufficiently fast rate such as $1/f$ noise.
Actually, this condition is also effective under the case with a finite number of base because its distribution always fulfill the skewed condition such as sinusoidal noise.
Third, we used EEG data to verify the above analytical results.
We demonstrated that, as a form of noise, the EEG series had a small inhibitory effect,
despite its strong intensity.
Moreover, different brain regions showed different ratio of random component and system size to bring out the maximum MC.
From these results, our research has clarified the effects of general noises on information processing,
providing analytical explanation.

\bibliography{main}

\providecommand{\noopsort}[1]{}\providecommand{\singleletter}[1]{#1}%
\begin{thebibliography}{53}%
\makeatletter
\providecommand \@ifxundefined [1]{%
 \@ifx{#1\undefined}
}%
\providecommand \@ifnum [1]{%
 \ifnum #1\expandafter \@firstoftwo
 \else \expandafter \@secondoftwo
 \fi
}%
\providecommand \@ifx [1]{%
 \ifx #1\expandafter \@firstoftwo
 \else \expandafter \@secondoftwo
 \fi
}%
\providecommand \natexlab [1]{#1}%
\providecommand \enquote  [1]{``#1''}%
\providecommand \bibnamefont  [1]{#1}%
\providecommand \bibfnamefont [1]{#1}%
\providecommand \citenamefont [1]{#1}%
\providecommand \href@noop [0]{\@secondoftwo}%
\providecommand \href [0]{\begingroup \@sanitize@url \@href}%
\providecommand \@href[1]{\@@startlink{#1}\@@href}%
\providecommand \@@href[1]{\endgroup#1\@@endlink}%
\providecommand \@sanitize@url [0]{\catcode `\\12\catcode `\$12\catcode `\&12\catcode `\#12\catcode `\^12\catcode `\_12\catcode `\%12\relax}%
\providecommand \@@startlink[1]{}%
\providecommand \@@endlink[0]{}%
\providecommand \url  [0]{\begingroup\@sanitize@url \@url }%
\providecommand \@url [1]{\endgroup\@href {#1}{\urlprefix }}%
\providecommand \urlprefix  [0]{URL }%
\providecommand \Eprint [0]{\href }%
\providecommand \doibase [0]{https://doi.org/}%
\providecommand \selectlanguage [0]{\@gobble}%
\providecommand \bibinfo  [0]{\@secondoftwo}%
\providecommand \bibfield  [0]{\@secondoftwo}%
\providecommand \translation [1]{[#1]}%
\providecommand \BibitemOpen [0]{}%
\providecommand \bibitemStop [0]{}%
\providecommand \bibitemNoStop [0]{.\EOS\space}%
\providecommand \EOS [0]{\spacefactor3000\relax}%
\providecommand \BibitemShut  [1]{\csname bibitem#1\endcsname}%
\let\auto@bib@innerbib\@empty
\bibitem [{\citenamefont {AI}(2023)}]{google2023suppressing}%
  \BibitemOpen
  \bibfield  {author} {\bibinfo {author} {\bibfnamefont {G.~Q.}\ \bibnamefont {AI}},\ }\bibfield  {title} {\bibinfo {title} {Suppressing quantum errors by scaling a surface code logical qubit},\ }\href@noop {} {\bibfield  {journal} {\bibinfo  {journal} {Nature}\ }\textbf {\bibinfo {volume} {614}},\ \bibinfo {pages} {676} (\bibinfo {year} {2023})}\BibitemShut {NoStop}%
\bibitem [{\citenamefont {Georgescu}(2020)}]{georgescu202025}%
  \BibitemOpen
  \bibfield  {author} {\bibinfo {author} {\bibfnamefont {I.}~\bibnamefont {Georgescu}},\ }\bibfield  {title} {\bibinfo {title} {25 years of quantum error correction},\ }\href@noop {} {\bibfield  {journal} {\bibinfo  {journal} {Nature Reviews Physics}\ }\textbf {\bibinfo {volume} {2}},\ \bibinfo {pages} {519} (\bibinfo {year} {2020})}\BibitemShut {NoStop}%
\bibitem [{\citenamefont {Jonides}\ \emph {et~al.}(2008)\citenamefont {Jonides}, \citenamefont {Lewis}, \citenamefont {Nee}, \citenamefont {Lustig}, \citenamefont {Berman},\ and\ \citenamefont {Moore}}]{jonides2008mind}%
  \BibitemOpen
  \bibfield  {author} {\bibinfo {author} {\bibfnamefont {J.}~\bibnamefont {Jonides}}, \bibinfo {author} {\bibfnamefont {R.~L.}\ \bibnamefont {Lewis}}, \bibinfo {author} {\bibfnamefont {D.~E.}\ \bibnamefont {Nee}}, \bibinfo {author} {\bibfnamefont {C.~A.}\ \bibnamefont {Lustig}}, \bibinfo {author} {\bibfnamefont {M.~G.}\ \bibnamefont {Berman}},\ and\ \bibinfo {author} {\bibfnamefont {K.~S.}\ \bibnamefont {Moore}},\ }\bibfield  {title} {\bibinfo {title} {The mind and brain of short-term memory},\ }\href@noop {} {\bibfield  {journal} {\bibinfo  {journal} {Annu. Rev. Psychol.}\ }\textbf {\bibinfo {volume} {59}},\ \bibinfo {pages} {193} (\bibinfo {year} {2008})}\BibitemShut {NoStop}%
\bibitem [{\citenamefont {Roediger}\ \emph {et~al.}(2017)\citenamefont {Roediger}, \citenamefont {Zaromb}, \citenamefont {Lin},\ and\ \citenamefont {Byrne}}]{roediger20171}%
  \BibitemOpen
  \bibfield  {author} {\bibinfo {author} {\bibfnamefont {H.}~\bibnamefont {Roediger}}, \bibinfo {author} {\bibfnamefont {F.}~\bibnamefont {Zaromb}}, \bibinfo {author} {\bibfnamefont {W.}~\bibnamefont {Lin}},\ and\ \bibinfo {author} {\bibfnamefont {J.}~\bibnamefont {Byrne}},\ }\bibfield  {title} {\bibinfo {title} {1.02-a typology of memory terms},\ }\href@noop {} {\bibfield  {journal} {\bibinfo  {journal} {Learning and memory: A comprehensive reference}\ ,\ \bibinfo {pages} {7}} (\bibinfo {year} {2017})}\BibitemShut {NoStop}%
\bibitem [{\citenamefont {Buonomano}\ and\ \citenamefont {Laje}(2010)}]{buonomano2010population}%
  \BibitemOpen
  \bibfield  {author} {\bibinfo {author} {\bibfnamefont {D.~V.}\ \bibnamefont {Buonomano}}\ and\ \bibinfo {author} {\bibfnamefont {R.}~\bibnamefont {Laje}},\ }\bibfield  {title} {\bibinfo {title} {Population clocks: motor timing with neural dynamics},\ }\href@noop {} {\bibfield  {journal} {\bibinfo  {journal} {Trends in cognitive sciences}\ }\textbf {\bibinfo {volume} {14}},\ \bibinfo {pages} {520} (\bibinfo {year} {2010})}\BibitemShut {NoStop}%
\bibitem [{\citenamefont {Buonomano}(2017)}]{buonomano2017your}%
  \BibitemOpen
  \bibfield  {author} {\bibinfo {author} {\bibfnamefont {D.}~\bibnamefont {Buonomano}},\ }\href@noop {} {\emph {\bibinfo {title} {Your brain is a time machine: The neuroscience and physics of time}}}\ (\bibinfo  {publisher} {WW Norton \& Company},\ \bibinfo {year} {2017})\BibitemShut {NoStop}%
\bibitem [{\citenamefont {Nakajima}(2020)}]{nakajima2020physical}%
  \BibitemOpen
  \bibfield  {author} {\bibinfo {author} {\bibfnamefont {K.}~\bibnamefont {Nakajima}},\ }\bibfield  {title} {\bibinfo {title} {Physical reservoir computing—an introductory perspective},\ }\href@noop {} {\bibfield  {journal} {\bibinfo  {journal} {Japanese Journal of Applied Physics}\ }\textbf {\bibinfo {volume} {59}},\ \bibinfo {pages} {060501} (\bibinfo {year} {2020})}\BibitemShut {NoStop}%
\bibitem [{\citenamefont {Nakajima}\ and\ \citenamefont {Fischer}(2021)}]{nakajima2021reservoir}%
  \BibitemOpen
  \bibfield  {author} {\bibinfo {author} {\bibfnamefont {K.}~\bibnamefont {Nakajima}}\ and\ \bibinfo {author} {\bibfnamefont {I.}~\bibnamefont {Fischer}},\ }\href@noop {} {\emph {\bibinfo {title} {Reservoir computing}}}\ (\bibinfo  {publisher} {Springer},\ \bibinfo {year} {2021})\BibitemShut {NoStop}%
\bibitem [{\citenamefont {Liang}\ \emph {et~al.}(2024)\citenamefont {Liang}, \citenamefont {Tang}, \citenamefont {Zhong}, \citenamefont {Gao}, \citenamefont {Qian},\ and\ \citenamefont {Wu}}]{liang2024physical}%
  \BibitemOpen
  \bibfield  {author} {\bibinfo {author} {\bibfnamefont {X.}~\bibnamefont {Liang}}, \bibinfo {author} {\bibfnamefont {J.}~\bibnamefont {Tang}}, \bibinfo {author} {\bibfnamefont {Y.}~\bibnamefont {Zhong}}, \bibinfo {author} {\bibfnamefont {B.}~\bibnamefont {Gao}}, \bibinfo {author} {\bibfnamefont {H.}~\bibnamefont {Qian}},\ and\ \bibinfo {author} {\bibfnamefont {H.}~\bibnamefont {Wu}},\ }\bibfield  {title} {\bibinfo {title} {Physical reservoir computing with emerging electronics},\ }\href@noop {} {\bibfield  {journal} {\bibinfo  {journal} {Nature Electronics}\ }\textbf {\bibinfo {volume} {7}},\ \bibinfo {pages} {193} (\bibinfo {year} {2024})}\BibitemShut {NoStop}%
\bibitem [{\citenamefont {Jaeger}(2001{\natexlab{a}})}]{jaeger2001short}%
  \BibitemOpen
  \bibfield  {author} {\bibinfo {author} {\bibfnamefont {H.}~\bibnamefont {Jaeger}},\ }\bibfield  {title} {\bibinfo {title} {Short term memory in echo state networks},\ }\href@noop {} {\  (\bibinfo {year} {2001}{\natexlab{a}})}\BibitemShut {NoStop}%
\bibitem [{\citenamefont {Dambre}\ \emph {et~al.}(2012)\citenamefont {Dambre}, \citenamefont {Verstraeten}, \citenamefont {Schrauwen},\ and\ \citenamefont {Massar}}]{dambre2012information}%
  \BibitemOpen
  \bibfield  {author} {\bibinfo {author} {\bibfnamefont {J.}~\bibnamefont {Dambre}}, \bibinfo {author} {\bibfnamefont {D.}~\bibnamefont {Verstraeten}}, \bibinfo {author} {\bibfnamefont {B.}~\bibnamefont {Schrauwen}},\ and\ \bibinfo {author} {\bibfnamefont {S.}~\bibnamefont {Massar}},\ }\bibfield  {title} {\bibinfo {title} {Information processing capacity of dynamical systems},\ }\href@noop {} {\bibfield  {journal} {\bibinfo  {journal} {Scientific reports}\ }\textbf {\bibinfo {volume} {2}},\ \bibinfo {pages} {514} (\bibinfo {year} {2012})}\BibitemShut {NoStop}%
\bibitem [{\citenamefont {Faisal}\ \emph {et~al.}(2008)\citenamefont {Faisal}, \citenamefont {Selen},\ and\ \citenamefont {Wolpert}}]{faisal2008noise}%
  \BibitemOpen
  \bibfield  {author} {\bibinfo {author} {\bibfnamefont {A.~A.}\ \bibnamefont {Faisal}}, \bibinfo {author} {\bibfnamefont {L.~P.}\ \bibnamefont {Selen}},\ and\ \bibinfo {author} {\bibfnamefont {D.~M.}\ \bibnamefont {Wolpert}},\ }\bibfield  {title} {\bibinfo {title} {Noise in the nervous system},\ }\href@noop {} {\bibfield  {journal} {\bibinfo  {journal} {Nature reviews neuroscience}\ }\textbf {\bibinfo {volume} {9}},\ \bibinfo {pages} {292} (\bibinfo {year} {2008})}\BibitemShut {NoStop}%
\bibitem [{\citenamefont {Moss}\ \emph {et~al.}(2004)\citenamefont {Moss}, \citenamefont {Ward},\ and\ \citenamefont {Sannita}}]{moss2004stochastic}%
  \BibitemOpen
  \bibfield  {author} {\bibinfo {author} {\bibfnamefont {F.}~\bibnamefont {Moss}}, \bibinfo {author} {\bibfnamefont {L.~M.}\ \bibnamefont {Ward}},\ and\ \bibinfo {author} {\bibfnamefont {W.~G.}\ \bibnamefont {Sannita}},\ }\bibfield  {title} {\bibinfo {title} {Stochastic resonance and sensory information processing: a tutorial and review of application},\ }\href@noop {} {\bibfield  {journal} {\bibinfo  {journal} {Clinical neurophysiology}\ }\textbf {\bibinfo {volume} {115}},\ \bibinfo {pages} {267} (\bibinfo {year} {2004})}\BibitemShut {NoStop}%
\bibitem [{\citenamefont {White}\ \emph {et~al.}(2004)\citenamefont {White}, \citenamefont {Lee},\ and\ \citenamefont {Sompolinsky}}]{white2004short}%
  \BibitemOpen
  \bibfield  {author} {\bibinfo {author} {\bibfnamefont {O.~L.}\ \bibnamefont {White}}, \bibinfo {author} {\bibfnamefont {D.~D.}\ \bibnamefont {Lee}},\ and\ \bibinfo {author} {\bibfnamefont {H.}~\bibnamefont {Sompolinsky}},\ }\bibfield  {title} {\bibinfo {title} {Short-term memory in orthogonal neural networks},\ }\href@noop {} {\bibfield  {journal} {\bibinfo  {journal} {Physical review letters}\ }\textbf {\bibinfo {volume} {92}},\ \bibinfo {pages} {148102} (\bibinfo {year} {2004})}\BibitemShut {NoStop}%
\bibitem [{\citenamefont {Ganguli}\ \emph {et~al.}(2008)\citenamefont {Ganguli}, \citenamefont {Huh},\ and\ \citenamefont {Sompolinsky}}]{ganguli2008memory}%
  \BibitemOpen
  \bibfield  {author} {\bibinfo {author} {\bibfnamefont {S.}~\bibnamefont {Ganguli}}, \bibinfo {author} {\bibfnamefont {D.}~\bibnamefont {Huh}},\ and\ \bibinfo {author} {\bibfnamefont {H.}~\bibnamefont {Sompolinsky}},\ }\bibfield  {title} {\bibinfo {title} {Memory traces in dynamical systems},\ }\href@noop {} {\bibfield  {journal} {\bibinfo  {journal} {Proceedings of the national academy of sciences}\ }\textbf {\bibinfo {volume} {105}},\ \bibinfo {pages} {18970} (\bibinfo {year} {2008})}\BibitemShut {NoStop}%
\bibitem [{\citenamefont {Rodan}\ and\ \citenamefont {Tino}(2010)}]{rodan2010minimum}%
  \BibitemOpen
  \bibfield  {author} {\bibinfo {author} {\bibfnamefont {A.}~\bibnamefont {Rodan}}\ and\ \bibinfo {author} {\bibfnamefont {P.}~\bibnamefont {Tino}},\ }\bibfield  {title} {\bibinfo {title} {Minimum complexity echo state network},\ }\href@noop {} {\bibfield  {journal} {\bibinfo  {journal} {IEEE transactions on neural networks}\ }\textbf {\bibinfo {volume} {22}},\ \bibinfo {pages} {131} (\bibinfo {year} {2010})}\BibitemShut {NoStop}%
\bibitem [{\citenamefont {Farka{\v{s}}}\ \emph {et~al.}(2016)\citenamefont {Farka{\v{s}}}, \citenamefont {Bos{\'a}k},\ and\ \citenamefont {Gergel'}}]{farkavs2016computational}%
  \BibitemOpen
  \bibfield  {author} {\bibinfo {author} {\bibfnamefont {I.}~\bibnamefont {Farka{\v{s}}}}, \bibinfo {author} {\bibfnamefont {R.}~\bibnamefont {Bos{\'a}k}},\ and\ \bibinfo {author} {\bibfnamefont {P.}~\bibnamefont {Gergel'}},\ }\bibfield  {title} {\bibinfo {title} {Computational analysis of memory capacity in echo state networks},\ }\href@noop {} {\bibfield  {journal} {\bibinfo  {journal} {Neural Networks}\ }\textbf {\bibinfo {volume} {83}},\ \bibinfo {pages} {109} (\bibinfo {year} {2016})}\BibitemShut {NoStop}%
\bibitem [{\citenamefont {Gonon}\ \emph {et~al.}(2020)\citenamefont {Gonon}, \citenamefont {Grigoryeva},\ and\ \citenamefont {Ortega}}]{gonon2020memory}%
  \BibitemOpen
  \bibfield  {author} {\bibinfo {author} {\bibfnamefont {L.}~\bibnamefont {Gonon}}, \bibinfo {author} {\bibfnamefont {L.}~\bibnamefont {Grigoryeva}},\ and\ \bibinfo {author} {\bibfnamefont {J.-P.}\ \bibnamefont {Ortega}},\ }\bibfield  {title} {\bibinfo {title} {Memory and forecasting capacities of nonlinear recurrent networks},\ }\href@noop {} {\bibfield  {journal} {\bibinfo  {journal} {Physica D: Nonlinear Phenomena}\ }\textbf {\bibinfo {volume} {414}},\ \bibinfo {pages} {132721} (\bibinfo {year} {2020})}\BibitemShut {NoStop}%
\bibitem [{\citenamefont {Toyoizumi}\ and\ \citenamefont {Abbott}(2011)}]{toyoizumi2011beyond}%
  \BibitemOpen
  \bibfield  {author} {\bibinfo {author} {\bibfnamefont {T.}~\bibnamefont {Toyoizumi}}\ and\ \bibinfo {author} {\bibfnamefont {L.~F.}\ \bibnamefont {Abbott}},\ }\bibfield  {title} {\bibinfo {title} {Beyond the edge of chaos: Amplification and temporal integration by recurrent networks in the chaotic regime},\ }\href@noop {} {\bibfield  {journal} {\bibinfo  {journal} {Physical Review E—Statistical, Nonlinear, and Soft Matter Physics}\ }\textbf {\bibinfo {volume} {84}},\ \bibinfo {pages} {051908} (\bibinfo {year} {2011})}\BibitemShut {NoStop}%
\bibitem [{\citenamefont {Schuecker}\ \emph {et~al.}(2018)\citenamefont {Schuecker}, \citenamefont {Goedeke},\ and\ \citenamefont {Helias}}]{schuecker2018optimal}%
  \BibitemOpen
  \bibfield  {author} {\bibinfo {author} {\bibfnamefont {J.}~\bibnamefont {Schuecker}}, \bibinfo {author} {\bibfnamefont {S.}~\bibnamefont {Goedeke}},\ and\ \bibinfo {author} {\bibfnamefont {M.}~\bibnamefont {Helias}},\ }\bibfield  {title} {\bibinfo {title} {Optimal sequence memory in driven random networks},\ }\href@noop {} {\bibfield  {journal} {\bibinfo  {journal} {Physical Review X}\ }\textbf {\bibinfo {volume} {8}},\ \bibinfo {pages} {041029} (\bibinfo {year} {2018})}\BibitemShut {NoStop}%
\bibitem [{\citenamefont {Haruna}\ and\ \citenamefont {Nakajima}(2019)}]{haruna2019optimal}%
  \BibitemOpen
  \bibfield  {author} {\bibinfo {author} {\bibfnamefont {T.}~\bibnamefont {Haruna}}\ and\ \bibinfo {author} {\bibfnamefont {K.}~\bibnamefont {Nakajima}},\ }\bibfield  {title} {\bibinfo {title} {Optimal short-term memory before the edge of chaos in driven random recurrent networks},\ }\href@noop {} {\bibfield  {journal} {\bibinfo  {journal} {Physical Review E}\ }\textbf {\bibinfo {volume} {100}},\ \bibinfo {pages} {062312} (\bibinfo {year} {2019})}\BibitemShut {NoStop}%
\bibitem [{\citenamefont {Kubota}\ \emph {et~al.}(2021)\citenamefont {Kubota}, \citenamefont {Takahashi},\ and\ \citenamefont {Nakajima}}]{kubota2021unifying}%
  \BibitemOpen
  \bibfield  {author} {\bibinfo {author} {\bibfnamefont {T.}~\bibnamefont {Kubota}}, \bibinfo {author} {\bibfnamefont {H.}~\bibnamefont {Takahashi}},\ and\ \bibinfo {author} {\bibfnamefont {K.}~\bibnamefont {Nakajima}},\ }\bibfield  {title} {\bibinfo {title} {Unifying framework for information processing in stochastically driven dynamical systems},\ }\href@noop {} {\bibfield  {journal} {\bibinfo  {journal} {Physical Review Research}\ }\textbf {\bibinfo {volume} {3}},\ \bibinfo {pages} {043135} (\bibinfo {year} {2021})}\BibitemShut {NoStop}%
\bibitem [{\citenamefont {Ballarin}\ \emph {et~al.}(2023)\citenamefont {Ballarin}, \citenamefont {Grigoryeva},\ and\ \citenamefont {Ortega}}]{ballarin2023memory}%
  \BibitemOpen
  \bibfield  {author} {\bibinfo {author} {\bibfnamefont {G.}~\bibnamefont {Ballarin}}, \bibinfo {author} {\bibfnamefont {L.}~\bibnamefont {Grigoryeva}},\ and\ \bibinfo {author} {\bibfnamefont {J.-P.}\ \bibnamefont {Ortega}},\ }\bibfield  {title} {\bibinfo {title} {Memory of recurrent networks: Do we compute it right?},\ }\href@noop {} {\bibfield  {journal} {\bibinfo  {journal} {arXiv preprint arXiv:2305.01457}\ } (\bibinfo {year} {2023})}\BibitemShut {NoStop}%
\bibitem [{\citenamefont {Kawai}\ \emph {et~al.}(2019)\citenamefont {Kawai}, \citenamefont {Park},\ and\ \citenamefont {Asada}}]{kawai2019small}%
  \BibitemOpen
  \bibfield  {author} {\bibinfo {author} {\bibfnamefont {Y.}~\bibnamefont {Kawai}}, \bibinfo {author} {\bibfnamefont {J.}~\bibnamefont {Park}},\ and\ \bibinfo {author} {\bibfnamefont {M.}~\bibnamefont {Asada}},\ }\bibfield  {title} {\bibinfo {title} {A small-world topology enhances the echo state property and signal propagation in reservoir computing},\ }\href@noop {} {\bibfield  {journal} {\bibinfo  {journal} {Neural Networks}\ }\textbf {\bibinfo {volume} {112}},\ \bibinfo {pages} {15} (\bibinfo {year} {2019})}\BibitemShut {NoStop}%
\bibitem [{\citenamefont {Dale}\ \emph {et~al.}(2021)\citenamefont {Dale}, \citenamefont {O’Keefe}, \citenamefont {Sebald}, \citenamefont {Stepney},\ and\ \citenamefont {Trefzer}}]{dale2021reservoir}%
  \BibitemOpen
  \bibfield  {author} {\bibinfo {author} {\bibfnamefont {M.}~\bibnamefont {Dale}}, \bibinfo {author} {\bibfnamefont {S.}~\bibnamefont {O’Keefe}}, \bibinfo {author} {\bibfnamefont {A.}~\bibnamefont {Sebald}}, \bibinfo {author} {\bibfnamefont {S.}~\bibnamefont {Stepney}},\ and\ \bibinfo {author} {\bibfnamefont {M.~A.}\ \bibnamefont {Trefzer}},\ }\bibfield  {title} {\bibinfo {title} {Reservoir computing quality: connectivity and topology},\ }\href@noop {} {\bibfield  {journal} {\bibinfo  {journal} {Natural Computing}\ }\textbf {\bibinfo {volume} {20}},\ \bibinfo {pages} {205} (\bibinfo {year} {2021})}\BibitemShut {NoStop}%
\bibitem [{\citenamefont {ERDdS}\ and\ \citenamefont {R\&wi}(1959)}]{erdds1959random}%
  \BibitemOpen
  \bibfield  {author} {\bibinfo {author} {\bibfnamefont {P.}~\bibnamefont {ERDdS}}\ and\ \bibinfo {author} {\bibfnamefont {A.}~\bibnamefont {R\&wi}},\ }\bibfield  {title} {\bibinfo {title} {On random graphs i},\ }\href@noop {} {\bibfield  {journal} {\bibinfo  {journal} {Publ. math. debrecen}\ }\textbf {\bibinfo {volume} {6}},\ \bibinfo {pages} {18} (\bibinfo {year} {1959})}\BibitemShut {NoStop}%
\bibitem [{\citenamefont {Watts}\ and\ \citenamefont {Strogatz}(1998)}]{watts1998collective}%
  \BibitemOpen
  \bibfield  {author} {\bibinfo {author} {\bibfnamefont {D.~J.}\ \bibnamefont {Watts}}\ and\ \bibinfo {author} {\bibfnamefont {S.~H.}\ \bibnamefont {Strogatz}},\ }\bibfield  {title} {\bibinfo {title} {Collective dynamics of ‘small-world’networks},\ }\href@noop {} {\bibfield  {journal} {\bibinfo  {journal} {nature}\ }\textbf {\bibinfo {volume} {393}},\ \bibinfo {pages} {440} (\bibinfo {year} {1998})}\BibitemShut {NoStop}%
\bibitem [{\citenamefont {Barab{\'a}si}\ and\ \citenamefont {Albert}(1999)}]{barabasi1999emergence}%
  \BibitemOpen
  \bibfield  {author} {\bibinfo {author} {\bibfnamefont {A.-L.}\ \bibnamefont {Barab{\'a}si}}\ and\ \bibinfo {author} {\bibfnamefont {R.}~\bibnamefont {Albert}},\ }\bibfield  {title} {\bibinfo {title} {Emergence of scaling in random networks},\ }\href@noop {} {\bibfield  {journal} {\bibinfo  {journal} {science}\ }\textbf {\bibinfo {volume} {286}},\ \bibinfo {pages} {509} (\bibinfo {year} {1999})}\BibitemShut {NoStop}%
\bibitem [{\citenamefont {Voss}(1975)}]{voss1975f}%
  \BibitemOpen
  \bibfield  {author} {\bibinfo {author} {\bibfnamefont {R.~P.}\ \bibnamefont {Voss}},\ }\bibfield  {title} {\bibinfo {title} {I/f noise"" in music and speech},\ }\href@noop {} {\  (\bibinfo {year} {1975})}\BibitemShut {NoStop}%
\bibitem [{\citenamefont {West}\ and\ \citenamefont {Shlesinger}(1990)}]{west1990noise}%
  \BibitemOpen
  \bibfield  {author} {\bibinfo {author} {\bibfnamefont {B.~J.}\ \bibnamefont {West}}\ and\ \bibinfo {author} {\bibfnamefont {M.}~\bibnamefont {Shlesinger}},\ }\bibfield  {title} {\bibinfo {title} {The noise in natural phenomena},\ }\href@noop {} {\bibfield  {journal} {\bibinfo  {journal} {American Scientist}\ }\textbf {\bibinfo {volume} {78}},\ \bibinfo {pages} {40} (\bibinfo {year} {1990})}\BibitemShut {NoStop}%
\bibitem [{\citenamefont {Voss}(1992)}]{voss1992evolution}%
  \BibitemOpen
  \bibfield  {author} {\bibinfo {author} {\bibfnamefont {R.~F.}\ \bibnamefont {Voss}},\ }\bibfield  {title} {\bibinfo {title} {Evolution of long-range fractal correlations and 1/f noise in dna base sequences},\ }\href@noop {} {\bibfield  {journal} {\bibinfo  {journal} {Physical review letters}\ }\textbf {\bibinfo {volume} {68}},\ \bibinfo {pages} {3805} (\bibinfo {year} {1992})}\BibitemShut {NoStop}%
\bibitem [{\citenamefont {Yamamoto}\ and\ \citenamefont {Hughson}(1994)}]{yamamoto1994fractal}%
  \BibitemOpen
  \bibfield  {author} {\bibinfo {author} {\bibfnamefont {Y.}~\bibnamefont {Yamamoto}}\ and\ \bibinfo {author} {\bibfnamefont {R.~L.}\ \bibnamefont {Hughson}},\ }\bibfield  {title} {\bibinfo {title} {On the fractal nature of heart rate variability in humans: effects of data length and beta-adrenergic blockade},\ }\href@noop {} {\bibfield  {journal} {\bibinfo  {journal} {American Journal of Physiology-Regulatory, Integrative and Comparative Physiology}\ }\textbf {\bibinfo {volume} {266}},\ \bibinfo {pages} {R40} (\bibinfo {year} {1994})}\BibitemShut {NoStop}%
\bibitem [{\citenamefont {Kobayashi}\ and\ \citenamefont {Musha}(1982)}]{kobayashi19821}%
  \BibitemOpen
  \bibfield  {author} {\bibinfo {author} {\bibfnamefont {M.}~\bibnamefont {Kobayashi}}\ and\ \bibinfo {author} {\bibfnamefont {T.}~\bibnamefont {Musha}},\ }\bibfield  {title} {\bibinfo {title} {1/f fluctuation of heartbeat period},\ }\href@noop {} {\bibfield  {journal} {\bibinfo  {journal} {IEEE transactions on Biomedical Engineering}\ ,\ \bibinfo {pages} {456}} (\bibinfo {year} {1982})}\BibitemShut {NoStop}%
\bibitem [{\citenamefont {Pritchard}(1992)}]{pritchard1992brain}%
  \BibitemOpen
  \bibfield  {author} {\bibinfo {author} {\bibfnamefont {W.~S.}\ \bibnamefont {Pritchard}},\ }\bibfield  {title} {\bibinfo {title} {The brain in fractal time: 1/f-like power spectrum scaling of the human electroencephalogram},\ }\href@noop {} {\bibfield  {journal} {\bibinfo  {journal} {International Journal of Neuroscience}\ }\textbf {\bibinfo {volume} {66}},\ \bibinfo {pages} {119} (\bibinfo {year} {1992})}\BibitemShut {NoStop}%
\bibitem [{\citenamefont {Freeman}\ \emph {et~al.}(2000)\citenamefont {Freeman}, \citenamefont {Rogers}, \citenamefont {Holmes},\ and\ \citenamefont {Silbergeld}}]{freeman2000spatial}%
  \BibitemOpen
  \bibfield  {author} {\bibinfo {author} {\bibfnamefont {W.~J.}\ \bibnamefont {Freeman}}, \bibinfo {author} {\bibfnamefont {L.~J.}\ \bibnamefont {Rogers}}, \bibinfo {author} {\bibfnamefont {M.~D.}\ \bibnamefont {Holmes}},\ and\ \bibinfo {author} {\bibfnamefont {D.~L.}\ \bibnamefont {Silbergeld}},\ }\bibfield  {title} {\bibinfo {title} {Spatial spectral analysis of human electrocorticograms including the alpha and gamma bands},\ }\href@noop {} {\bibfield  {journal} {\bibinfo  {journal} {Journal of neuroscience methods}\ }\textbf {\bibinfo {volume} {95}},\ \bibinfo {pages} {111} (\bibinfo {year} {2000})}\BibitemShut {NoStop}%
\bibitem [{\citenamefont {Robinson}\ \emph {et~al.}(2001)\citenamefont {Robinson}, \citenamefont {Rennie}, \citenamefont {Wright}, \citenamefont {Bahramali}, \citenamefont {Gordon},\ and\ \citenamefont {Rowe}}]{robinson2001prediction}%
  \BibitemOpen
  \bibfield  {author} {\bibinfo {author} {\bibfnamefont {P.~A.}\ \bibnamefont {Robinson}}, \bibinfo {author} {\bibfnamefont {C.~J.}\ \bibnamefont {Rennie}}, \bibinfo {author} {\bibfnamefont {J.~J.}\ \bibnamefont {Wright}}, \bibinfo {author} {\bibfnamefont {H.}~\bibnamefont {Bahramali}}, \bibinfo {author} {\bibfnamefont {E.}~\bibnamefont {Gordon}},\ and\ \bibinfo {author} {\bibfnamefont {D.~L.}\ \bibnamefont {Rowe}},\ }\bibfield  {title} {\bibinfo {title} {Prediction of electroencephalographic spectra from neurophysiology},\ }\href@noop {} {\bibfield  {journal} {\bibinfo  {journal} {Physical Review E}\ }\textbf {\bibinfo {volume} {63}},\ \bibinfo {pages} {021903} (\bibinfo {year} {2001})}\BibitemShut {NoStop}%
\bibitem [{\citenamefont {Bedard}\ \emph {et~al.}(2006)\citenamefont {Bedard}, \citenamefont {Kroger},\ and\ \citenamefont {Destexhe}}]{bedard2006does}%
  \BibitemOpen
  \bibfield  {author} {\bibinfo {author} {\bibfnamefont {C.}~\bibnamefont {Bedard}}, \bibinfo {author} {\bibfnamefont {H.}~\bibnamefont {Kroger}},\ and\ \bibinfo {author} {\bibfnamefont {A.}~\bibnamefont {Destexhe}},\ }\bibfield  {title} {\bibinfo {title} {Does the 1/f frequency scaling of brain signals reflect self-organized critical states?},\ }\href@noop {} {\bibfield  {journal} {\bibinfo  {journal} {Physical review letters}\ }\textbf {\bibinfo {volume} {97}},\ \bibinfo {pages} {118102} (\bibinfo {year} {2006})}\BibitemShut {NoStop}%
\bibitem [{\citenamefont {Allegrini}\ \emph {et~al.}(2009)\citenamefont {Allegrini}, \citenamefont {Menicucci}, \citenamefont {Bedini}, \citenamefont {Fronzoni}, \citenamefont {Gemignani}, \citenamefont {Grigolini}, \citenamefont {West},\ and\ \citenamefont {Paradisi}}]{allegrini2009spontaneous}%
  \BibitemOpen
  \bibfield  {author} {\bibinfo {author} {\bibfnamefont {P.}~\bibnamefont {Allegrini}}, \bibinfo {author} {\bibfnamefont {D.}~\bibnamefont {Menicucci}}, \bibinfo {author} {\bibfnamefont {R.}~\bibnamefont {Bedini}}, \bibinfo {author} {\bibfnamefont {L.}~\bibnamefont {Fronzoni}}, \bibinfo {author} {\bibfnamefont {A.}~\bibnamefont {Gemignani}}, \bibinfo {author} {\bibfnamefont {P.}~\bibnamefont {Grigolini}}, \bibinfo {author} {\bibfnamefont {B.~J.}\ \bibnamefont {West}},\ and\ \bibinfo {author} {\bibfnamefont {P.}~\bibnamefont {Paradisi}},\ }\bibfield  {title} {\bibinfo {title} {Spontaneous brain activity as a source of ideal 1/f noise},\ }\href@noop {} {\bibfield  {journal} {\bibinfo  {journal} {Physical Review E}\ }\textbf {\bibinfo {volume} {80}},\ \bibinfo {pages} {061914} (\bibinfo {year} {2009})}\BibitemShut {NoStop}%
\bibitem [{\citenamefont {Jaeger}(2001{\natexlab{b}})}]{jaeger2001echo}%
  \BibitemOpen
  \bibfield  {author} {\bibinfo {author} {\bibfnamefont {H.}~\bibnamefont {Jaeger}},\ }\bibfield  {title} {\bibinfo {title} {The “echo state” approach to analysing and training recurrent neural networks-with an erratum note},\ }\href@noop {} {\bibfield  {journal} {\bibinfo  {journal} {Bonn, Germany: German National Research Center for Information Technology GMD Technical Report}\ }\textbf {\bibinfo {volume} {148}},\ \bibinfo {pages} {13} (\bibinfo {year} {2001}{\natexlab{b}})}\BibitemShut {NoStop}%
\bibitem [{\citenamefont {H{\"a}nggi}\ \emph {et~al.}(1993)\citenamefont {H{\"a}nggi}, \citenamefont {Jung}, \citenamefont {Zerbe},\ and\ \citenamefont {Moss}}]{hanggi1993can}%
  \BibitemOpen
  \bibfield  {author} {\bibinfo {author} {\bibfnamefont {P.}~\bibnamefont {H{\"a}nggi}}, \bibinfo {author} {\bibfnamefont {P.}~\bibnamefont {Jung}}, \bibinfo {author} {\bibfnamefont {C.}~\bibnamefont {Zerbe}},\ and\ \bibinfo {author} {\bibfnamefont {F.}~\bibnamefont {Moss}},\ }\bibfield  {title} {\bibinfo {title} {Can colored noise improve stochastic resonance?},\ }\href@noop {} {\bibfield  {journal} {\bibinfo  {journal} {Journal of Statistical Physics}\ }\textbf {\bibinfo {volume} {70}},\ \bibinfo {pages} {25} (\bibinfo {year} {1993})}\BibitemShut {NoStop}%
\bibitem [{\citenamefont {Fox}\ \emph {et~al.}(1988)\citenamefont {Fox}, \citenamefont {Gatland}, \citenamefont {Roy},\ and\ \citenamefont {Vemuri}}]{fox1988fast}%
  \BibitemOpen
  \bibfield  {author} {\bibinfo {author} {\bibfnamefont {R.~F.}\ \bibnamefont {Fox}}, \bibinfo {author} {\bibfnamefont {I.~R.}\ \bibnamefont {Gatland}}, \bibinfo {author} {\bibfnamefont {R.}~\bibnamefont {Roy}},\ and\ \bibinfo {author} {\bibfnamefont {G.}~\bibnamefont {Vemuri}},\ }\bibfield  {title} {\bibinfo {title} {Fast, accurate algorithm for numerical simulation of exponentially correlated colored noise},\ }\href@noop {} {\bibfield  {journal} {\bibinfo  {journal} {Physical review A}\ }\textbf {\bibinfo {volume} {38}},\ \bibinfo {pages} {5938} (\bibinfo {year} {1988})}\BibitemShut {NoStop}%
\bibitem [{\citenamefont {Lorenzo}\ and\ \citenamefont {P{\'e}rez-Manuzuri}(1999)}]{lorenzo1999colored}%
  \BibitemOpen
  \bibfield  {author} {\bibinfo {author} {\bibfnamefont {M.~N.}\ \bibnamefont {Lorenzo}}\ and\ \bibinfo {author} {\bibfnamefont {V.}~\bibnamefont {P{\'e}rez-Manuzuri}},\ }\bibfield  {title} {\bibinfo {title} {Colored-noise-induced chaotic array synchronization},\ }\href@noop {} {\bibfield  {journal} {\bibinfo  {journal} {Physical Review E}\ }\textbf {\bibinfo {volume} {60}},\ \bibinfo {pages} {2779} (\bibinfo {year} {1999})}\BibitemShut {NoStop}%
\bibitem [{\citenamefont {Farlow}(2006)}]{farlow2006introduction}%
  \BibitemOpen
  \bibfield  {author} {\bibinfo {author} {\bibfnamefont {S.~J.}\ \bibnamefont {Farlow}},\ }\bibfield  {title} {\bibinfo {title} {An introduction to differential equations and their applications},\ }\href@noop {} {\  (\bibinfo {year} {2006})}\BibitemShut {NoStop}%
\bibitem [{\citenamefont {Aizawa}(1984)}]{aizawa1984f}%
  \BibitemOpen
  \bibfield  {author} {\bibinfo {author} {\bibfnamefont {Y.}~\bibnamefont {Aizawa}},\ }\bibfield  {title} {\bibinfo {title} {On the f-1 spectral chaos},\ }\href@noop {} {\bibfield  {journal} {\bibinfo  {journal} {Progress of theoretical physics}\ }\textbf {\bibinfo {volume} {72}},\ \bibinfo {pages} {659} (\bibinfo {year} {1984})}\BibitemShut {NoStop}%
\bibitem [{\citenamefont {Aizawa}(1989)}]{aizawa1989non}%
  \BibitemOpen
  \bibfield  {author} {\bibinfo {author} {\bibfnamefont {Y.}~\bibnamefont {Aizawa}},\ }\bibfield  {title} {\bibinfo {title} {Non-stationary chaos revisited from large deviation theory},\ }\href@noop {} {\bibfield  {journal} {\bibinfo  {journal} {Progress of Theoretical Physics Supplement}\ }\textbf {\bibinfo {volume} {99}},\ \bibinfo {pages} {149} (\bibinfo {year} {1989})}\BibitemShut {NoStop}%
\bibitem [{\citenamefont {Akimoto}\ and\ \citenamefont {Aizawa}(2007)}]{akimoto2007new}%
  \BibitemOpen
  \bibfield  {author} {\bibinfo {author} {\bibfnamefont {T.}~\bibnamefont {Akimoto}}\ and\ \bibinfo {author} {\bibfnamefont {Y.}~\bibnamefont {Aizawa}},\ }\bibfield  {title} {\bibinfo {title} {New aspects of the correlation functions in non-hyperbolic chaotic systems},\ }\href@noop {} {\bibfield  {journal} {\bibinfo  {journal} {JOURNAL-KOREAN PHYSICAL SOCIETY}\ }\textbf {\bibinfo {volume} {50}},\ \bibinfo {pages} {254} (\bibinfo {year} {2007})}\BibitemShut {NoStop}%
\bibitem [{\citenamefont {Cryer}(1986)}]{cryer1986time}%
  \BibitemOpen
  \bibfield  {author} {\bibinfo {author} {\bibfnamefont {J.~D.}\ \bibnamefont {Cryer}},\ }\bibfield  {title} {\bibinfo {title} {Time series analysis},\ }\href@noop {} {\ \textbf {\bibinfo {volume} {286}} (\bibinfo {year} {1986})}\BibitemShut {NoStop}%
\bibitem [{\citenamefont {Langbein}(2012)}]{langbein2012estimating}%
  \BibitemOpen
  \bibfield  {author} {\bibinfo {author} {\bibfnamefont {J.}~\bibnamefont {Langbein}},\ }\bibfield  {title} {\bibinfo {title} {Estimating rate uncertainty with maximum likelihood: differences between power-law and flicker--random-walk models},\ }\href@noop {} {\bibfield  {journal} {\bibinfo  {journal} {Journal of Geodesy}\ }\textbf {\bibinfo {volume} {86}},\ \bibinfo {pages} {775} (\bibinfo {year} {2012})}\BibitemShut {NoStop}%
\bibitem [{\citenamefont {Wang}\ \emph {et~al.}(2022)\citenamefont {Wang}, \citenamefont {Duan}, \citenamefont {Dong}, \citenamefont {Ding},\ and\ \citenamefont {Lei}}]{wang2022test}%
  \BibitemOpen
  \bibfield  {author} {\bibinfo {author} {\bibfnamefont {Y.}~\bibnamefont {Wang}}, \bibinfo {author} {\bibfnamefont {W.}~\bibnamefont {Duan}}, \bibinfo {author} {\bibfnamefont {D.}~\bibnamefont {Dong}}, \bibinfo {author} {\bibfnamefont {L.}~\bibnamefont {Ding}},\ and\ \bibinfo {author} {\bibfnamefont {X.}~\bibnamefont {Lei}},\ }\bibfield  {title} {\bibinfo {title} {A test-retest resting, and cognitive state eeg dataset during multiple subject-driven states},\ }\href@noop {} {\bibfield  {journal} {\bibinfo  {journal} {Scientific Data}\ }\textbf {\bibinfo {volume} {9}},\ \bibinfo {pages} {566} (\bibinfo {year} {2022})}\BibitemShut {NoStop}%
\bibitem [{\citenamefont {Hayes}(1996)}]{hayes1996statistical}%
  \BibitemOpen
  \bibfield  {author} {\bibinfo {author} {\bibfnamefont {M.~H.}\ \bibnamefont {Hayes}},\ }\href@noop {} {\emph {\bibinfo {title} {Statistical digital signal processing and modeling}}}\ (\bibinfo  {publisher} {John Wiley \& Sons},\ \bibinfo {year} {1996})\BibitemShut {NoStop}%
\bibitem [{\citenamefont {Moon}\ and\ \citenamefont {Stirling}(2000)}]{moon2000mathematical}%
  \BibitemOpen
  \bibfield  {author} {\bibinfo {author} {\bibfnamefont {T.}~\bibnamefont {Moon}}\ and\ \bibinfo {author} {\bibfnamefont {W.}~\bibnamefont {Stirling}},\ }\href {https://books.google.co.jp/books?id=1xUfAQAAIAAJ} {\emph {\bibinfo {title} {Mathematical Methods and Algorithms for Signal Processing}}}\ (\bibinfo  {publisher} {Prentice Hall},\ \bibinfo {year} {2000})\BibitemShut {NoStop}%
\bibitem [{\citenamefont {Pakkenberg}\ and\ \citenamefont {Gundersen}(1997)}]{pakkenberg1997neocortical}%
  \BibitemOpen
  \bibfield  {author} {\bibinfo {author} {\bibfnamefont {B.}~\bibnamefont {Pakkenberg}}\ and\ \bibinfo {author} {\bibfnamefont {H.~J.~G.}\ \bibnamefont {Gundersen}},\ }\bibfield  {title} {\bibinfo {title} {Neocortical neuron number in humans: effect of sex and age},\ }\href@noop {} {\bibfield  {journal} {\bibinfo  {journal} {Journal of comparative neurology}\ }\textbf {\bibinfo {volume} {384}},\ \bibinfo {pages} {312} (\bibinfo {year} {1997})}\BibitemShut {NoStop}%
\bibitem [{\citenamefont {Kubota}\ \emph {et~al.}(2023)\citenamefont {Kubota}, \citenamefont {Suzuki}, \citenamefont {Kobayashi}, \citenamefont {Tran}, \citenamefont {Yamamoto},\ and\ \citenamefont {Nakajima}}]{kubota2023temporal}%
  \BibitemOpen
  \bibfield  {author} {\bibinfo {author} {\bibfnamefont {T.}~\bibnamefont {Kubota}}, \bibinfo {author} {\bibfnamefont {Y.}~\bibnamefont {Suzuki}}, \bibinfo {author} {\bibfnamefont {S.}~\bibnamefont {Kobayashi}}, \bibinfo {author} {\bibfnamefont {Q.~H.}\ \bibnamefont {Tran}}, \bibinfo {author} {\bibfnamefont {N.}~\bibnamefont {Yamamoto}},\ and\ \bibinfo {author} {\bibfnamefont {K.}~\bibnamefont {Nakajima}},\ }\bibfield  {title} {\bibinfo {title} {Temporal information processing induced by quantum noise},\ }\href@noop {} {\bibfield  {journal} {\bibinfo  {journal} {Physical Review Research}\ }\textbf {\bibinfo {volume} {5}},\ \bibinfo {pages} {023057} (\bibinfo {year} {2023})}\BibitemShut {NoStop}%
\end{thebibliography}%


\providecommand{\noopsort}[1]{}\providecommand{\singleletter}[1]{#1}%
\begin{thebibliography}{20}%
\makeatletter
\providecommand \@ifxundefined [1]{%
 \@ifx{#1\undefined}
}%
\providecommand \@ifnum [1]{%
 \ifnum #1\expandafter \@firstoftwo
 \else \expandafter \@secondoftwo
 \fi
}%
\providecommand \@ifx [1]{%
 \ifx #1\expandafter \@firstoftwo
 \else \expandafter \@secondoftwo
 \fi
}%
\providecommand \natexlab [1]{#1}%
\providecommand \enquote  [1]{``#1''}%
\providecommand \bibnamefont  [1]{#1}%
\providecommand \bibfnamefont [1]{#1}%
\providecommand \citenamefont [1]{#1}%
\providecommand \href@noop [0]{\@secondoftwo}%
\providecommand \href [0]{\begingroup \@sanitize@url \@href}%
\providecommand \@href[1]{\@@startlink{#1}\@@href}%
\providecommand \@@href[1]{\endgroup#1\@@endlink}%
\providecommand \@sanitize@url [0]{\catcode `\\12\catcode `\$12\catcode `\&12\catcode `\#12\catcode `\^12\catcode `\_12\catcode `\%12\relax}%
\providecommand \@@startlink[1]{}%
\providecommand \@@endlink[0]{}%
\providecommand \url  [0]{\begingroup\@sanitize@url \@url }%
\providecommand \@url [1]{\endgroup\@href {#1}{\urlprefix }}%
\providecommand \urlprefix  [0]{URL }%
\providecommand \Eprint [0]{\href }%
\providecommand \doibase [0]{https://doi.org/}%
\providecommand \selectlanguage [0]{\@gobble}%
\providecommand \bibinfo  [0]{\@secondoftwo}%
\providecommand \bibfield  [0]{\@secondoftwo}%
\providecommand \translation [1]{[#1]}%
\providecommand \BibitemOpen [0]{}%
\providecommand \bibitemStop [0]{}%
\providecommand \bibitemNoStop [0]{.\EOS\space}%
\providecommand \EOS [0]{\spacefactor3000\relax}%
\providecommand \BibitemShut  [1]{\csname bibitem#1\endcsname}%
\let\auto@bib@innerbib\@empty
\bibitem [{\citenamefont {Dambre}\ \emph {et~al.}(2012)\citenamefont {Dambre}, \citenamefont {Verstraeten}, \citenamefont {Schrauwen},\ and\ \citenamefont {Massar}}]{dambre2012information}%
  \BibitemOpen
  \bibfield  {author} {\bibinfo {author} {\bibfnamefont {J.}~\bibnamefont {Dambre}}, \bibinfo {author} {\bibfnamefont {D.}~\bibnamefont {Verstraeten}}, \bibinfo {author} {\bibfnamefont {B.}~\bibnamefont {Schrauwen}},\ and\ \bibinfo {author} {\bibfnamefont {S.}~\bibnamefont {Massar}},\ }\bibfield  {title} {\bibinfo {title} {Information processing capacity of dynamical systems},\ }\href@noop {} {\bibfield  {journal} {\bibinfo  {journal} {Scientific reports}\ }\textbf {\bibinfo {volume} {2}},\ \bibinfo {pages} {514} (\bibinfo {year} {2012})}\BibitemShut {NoStop}%
\bibitem [{\citenamefont {Dale}\ \emph {et~al.}(2021)\citenamefont {Dale}, \citenamefont {O’Keefe}, \citenamefont {Sebald}, \citenamefont {Stepney},\ and\ \citenamefont {Trefzer}}]{dale2021reservoir}%
  \BibitemOpen
  \bibfield  {author} {\bibinfo {author} {\bibfnamefont {M.}~\bibnamefont {Dale}}, \bibinfo {author} {\bibfnamefont {S.}~\bibnamefont {O’Keefe}}, \bibinfo {author} {\bibfnamefont {A.}~\bibnamefont {Sebald}}, \bibinfo {author} {\bibfnamefont {S.}~\bibnamefont {Stepney}},\ and\ \bibinfo {author} {\bibfnamefont {M.~A.}\ \bibnamefont {Trefzer}},\ }\bibfield  {title} {\bibinfo {title} {Reservoir computing quality: connectivity and topology},\ }\href@noop {} {\bibfield  {journal} {\bibinfo  {journal} {Natural Computing}\ }\textbf {\bibinfo {volume} {20}},\ \bibinfo {pages} {205} (\bibinfo {year} {2021})}\BibitemShut {NoStop}%
\bibitem [{\citenamefont {Ganguli}\ \emph {et~al.}(2008)\citenamefont {Ganguli}, \citenamefont {Huh},\ and\ \citenamefont {Sompolinsky}}]{ganguli2008memory}%
  \BibitemOpen
  \bibfield  {author} {\bibinfo {author} {\bibfnamefont {S.}~\bibnamefont {Ganguli}}, \bibinfo {author} {\bibfnamefont {D.}~\bibnamefont {Huh}},\ and\ \bibinfo {author} {\bibfnamefont {H.}~\bibnamefont {Sompolinsky}},\ }\bibfield  {title} {\bibinfo {title} {Memory traces in dynamical systems},\ }\href@noop {} {\bibfield  {journal} {\bibinfo  {journal} {Proceedings of the national academy of sciences}\ }\textbf {\bibinfo {volume} {105}},\ \bibinfo {pages} {18970} (\bibinfo {year} {2008})}\BibitemShut {NoStop}%
\bibitem [{\citenamefont {Kawai}\ \emph {et~al.}(2019)\citenamefont {Kawai}, \citenamefont {Park},\ and\ \citenamefont {Asada}}]{kawai2019small}%
  \BibitemOpen
  \bibfield  {author} {\bibinfo {author} {\bibfnamefont {Y.}~\bibnamefont {Kawai}}, \bibinfo {author} {\bibfnamefont {J.}~\bibnamefont {Park}},\ and\ \bibinfo {author} {\bibfnamefont {M.}~\bibnamefont {Asada}},\ }\bibfield  {title} {\bibinfo {title} {A small-world topology enhances the echo state property and signal propagation in reservoir computing},\ }\href@noop {} {\bibfield  {journal} {\bibinfo  {journal} {Neural Networks}\ }\textbf {\bibinfo {volume} {112}},\ \bibinfo {pages} {15} (\bibinfo {year} {2019})}\BibitemShut {NoStop}%
\bibitem [{\citenamefont {ERDdS}\ and\ \citenamefont {R\&wi}(1959)}]{erdds1959random}%
  \BibitemOpen
  \bibfield  {author} {\bibinfo {author} {\bibfnamefont {P.}~\bibnamefont {ERDdS}}\ and\ \bibinfo {author} {\bibfnamefont {A.}~\bibnamefont {R\&wi}},\ }\bibfield  {title} {\bibinfo {title} {On random graphs i},\ }\href@noop {} {\bibfield  {journal} {\bibinfo  {journal} {Publ. math. debrecen}\ }\textbf {\bibinfo {volume} {6}},\ \bibinfo {pages} {18} (\bibinfo {year} {1959})}\BibitemShut {NoStop}%
\bibitem [{\citenamefont {Watts}\ and\ \citenamefont {Strogatz}(1998)}]{watts1998collective}%
  \BibitemOpen
  \bibfield  {author} {\bibinfo {author} {\bibfnamefont {D.~J.}\ \bibnamefont {Watts}}\ and\ \bibinfo {author} {\bibfnamefont {S.~H.}\ \bibnamefont {Strogatz}},\ }\bibfield  {title} {\bibinfo {title} {Collective dynamics of ‘small-world’networks},\ }\href@noop {} {\bibfield  {journal} {\bibinfo  {journal} {nature}\ }\textbf {\bibinfo {volume} {393}},\ \bibinfo {pages} {440} (\bibinfo {year} {1998})}\BibitemShut {NoStop}%
\bibitem [{\citenamefont {Barab{\'a}si}\ and\ \citenamefont {Albert}(1999)}]{barabasi1999emergence}%
  \BibitemOpen
  \bibfield  {author} {\bibinfo {author} {\bibfnamefont {A.-L.}\ \bibnamefont {Barab{\'a}si}}\ and\ \bibinfo {author} {\bibfnamefont {R.}~\bibnamefont {Albert}},\ }\bibfield  {title} {\bibinfo {title} {Emergence of scaling in random networks},\ }\href@noop {} {\bibfield  {journal} {\bibinfo  {journal} {science}\ }\textbf {\bibinfo {volume} {286}},\ \bibinfo {pages} {509} (\bibinfo {year} {1999})}\BibitemShut {NoStop}%
\bibitem [{\citenamefont {Barabasi}\ and\ \citenamefont {Oltvai}(2004)}]{barabasi2004network}%
  \BibitemOpen
  \bibfield  {author} {\bibinfo {author} {\bibfnamefont {A.-L.}\ \bibnamefont {Barabasi}}\ and\ \bibinfo {author} {\bibfnamefont {Z.~N.}\ \bibnamefont {Oltvai}},\ }\bibfield  {title} {\bibinfo {title} {Network biology: understanding the cell's functional organization},\ }\href@noop {} {\bibfield  {journal} {\bibinfo  {journal} {Nature reviews genetics}\ }\textbf {\bibinfo {volume} {5}},\ \bibinfo {pages} {101} (\bibinfo {year} {2004})}\BibitemShut {NoStop}%
\bibitem [{\citenamefont {H{\"a}nggi}\ \emph {et~al.}(1993)\citenamefont {H{\"a}nggi}, \citenamefont {Jung}, \citenamefont {Zerbe},\ and\ \citenamefont {Moss}}]{hanggi1993can}%
  \BibitemOpen
  \bibfield  {author} {\bibinfo {author} {\bibfnamefont {P.}~\bibnamefont {H{\"a}nggi}}, \bibinfo {author} {\bibfnamefont {P.}~\bibnamefont {Jung}}, \bibinfo {author} {\bibfnamefont {C.}~\bibnamefont {Zerbe}},\ and\ \bibinfo {author} {\bibfnamefont {F.}~\bibnamefont {Moss}},\ }\bibfield  {title} {\bibinfo {title} {Can colored noise improve stochastic resonance?},\ }\href@noop {} {\bibfield  {journal} {\bibinfo  {journal} {Journal of Statistical Physics}\ }\textbf {\bibinfo {volume} {70}},\ \bibinfo {pages} {25} (\bibinfo {year} {1993})}\BibitemShut {NoStop}%
\bibitem [{\citenamefont {Fox}\ \emph {et~al.}(1988)\citenamefont {Fox}, \citenamefont {Gatland}, \citenamefont {Roy},\ and\ \citenamefont {Vemuri}}]{fox1988fast}%
  \BibitemOpen
  \bibfield  {author} {\bibinfo {author} {\bibfnamefont {R.~F.}\ \bibnamefont {Fox}}, \bibinfo {author} {\bibfnamefont {I.~R.}\ \bibnamefont {Gatland}}, \bibinfo {author} {\bibfnamefont {R.}~\bibnamefont {Roy}},\ and\ \bibinfo {author} {\bibfnamefont {G.}~\bibnamefont {Vemuri}},\ }\bibfield  {title} {\bibinfo {title} {Fast, accurate algorithm for numerical simulation of exponentially correlated colored noise},\ }\href@noop {} {\bibfield  {journal} {\bibinfo  {journal} {Physical review A}\ }\textbf {\bibinfo {volume} {38}},\ \bibinfo {pages} {5938} (\bibinfo {year} {1988})}\BibitemShut {NoStop}%
\bibitem [{\citenamefont {Lorenzo}\ and\ \citenamefont {P{\'e}rez-Manuzuri}(1999)}]{lorenzo1999colored}%
  \BibitemOpen
  \bibfield  {author} {\bibinfo {author} {\bibfnamefont {M.~N.}\ \bibnamefont {Lorenzo}}\ and\ \bibinfo {author} {\bibfnamefont {V.}~\bibnamefont {P{\'e}rez-Manuzuri}},\ }\bibfield  {title} {\bibinfo {title} {Colored-noise-induced chaotic array synchronization},\ }\href@noop {} {\bibfield  {journal} {\bibinfo  {journal} {Physical Review E}\ }\textbf {\bibinfo {volume} {60}},\ \bibinfo {pages} {2779} (\bibinfo {year} {1999})}\BibitemShut {NoStop}%
\bibitem [{\citenamefont {Farlow}(2006)}]{farlow2006introduction}%
  \BibitemOpen
  \bibfield  {author} {\bibinfo {author} {\bibfnamefont {S.~J.}\ \bibnamefont {Farlow}},\ }\href@noop {} {\emph {\bibinfo {title} {An introduction to differential equations and their applications}}}\ (\bibinfo  {publisher} {Courier Corporation},\ \bibinfo {year} {2006})\BibitemShut {NoStop}%
\bibitem [{\citenamefont {Aizawa}(1984)}]{aizawa1984f}%
  \BibitemOpen
  \bibfield  {author} {\bibinfo {author} {\bibfnamefont {Y.}~\bibnamefont {Aizawa}},\ }\bibfield  {title} {\bibinfo {title} {On the f-1 spectral chaos},\ }\href@noop {} {\bibfield  {journal} {\bibinfo  {journal} {Progress of theoretical physics}\ }\textbf {\bibinfo {volume} {72}},\ \bibinfo {pages} {659} (\bibinfo {year} {1984})}\BibitemShut {NoStop}%
\bibitem [{\citenamefont {Aizawa}(1989)}]{aizawa1989non}%
  \BibitemOpen
  \bibfield  {author} {\bibinfo {author} {\bibfnamefont {Y.}~\bibnamefont {Aizawa}},\ }\bibfield  {title} {\bibinfo {title} {Non-stationary chaos revisited from large deviation theory},\ }\href@noop {} {\bibfield  {journal} {\bibinfo  {journal} {Progress of Theoretical Physics Supplement}\ }\textbf {\bibinfo {volume} {99}},\ \bibinfo {pages} {149} (\bibinfo {year} {1989})}\BibitemShut {NoStop}%
\bibitem [{\citenamefont {Akimoto}\ and\ \citenamefont {Aizawa}(2007)}]{akimoto2007new}%
  \BibitemOpen
  \bibfield  {author} {\bibinfo {author} {\bibfnamefont {T.}~\bibnamefont {Akimoto}}\ and\ \bibinfo {author} {\bibfnamefont {Y.}~\bibnamefont {Aizawa}},\ }\bibfield  {title} {\bibinfo {title} {New aspects of the correlation functions in non-hyperbolic chaotic systems},\ }\href@noop {} {\bibfield  {journal} {\bibinfo  {journal} {JOURNAL-KOREAN PHYSICAL SOCIETY}\ }\textbf {\bibinfo {volume} {50}},\ \bibinfo {pages} {254} (\bibinfo {year} {2007})}\BibitemShut {NoStop}%
\bibitem [{\citenamefont {Cryer}(1986)}]{cryer1986time}%
  \BibitemOpen
  \bibfield  {author} {\bibinfo {author} {\bibfnamefont {J.~D.}\ \bibnamefont {Cryer}},\ }\href@noop {} {\emph {\bibinfo {title} {Time series analysis}}},\ Vol.\ \bibinfo {volume} {286}\ (\bibinfo  {publisher} {Duxbury Press Boston},\ \bibinfo {year} {1986})\BibitemShut {NoStop}%
\bibitem [{\citenamefont {Langbein}(2012)}]{langbein2012estimating}%
  \BibitemOpen
  \bibfield  {author} {\bibinfo {author} {\bibfnamefont {J.}~\bibnamefont {Langbein}},\ }\bibfield  {title} {\bibinfo {title} {Estimating rate uncertainty with maximum likelihood: differences between power-law and flicker--random-walk models},\ }\href@noop {} {\bibfield  {journal} {\bibinfo  {journal} {Journal of Geodesy}\ }\textbf {\bibinfo {volume} {86}},\ \bibinfo {pages} {775} (\bibinfo {year} {2012})}\BibitemShut {NoStop}%
\bibitem [{\citenamefont {Hayes}(1996)}]{hayes1996statistical}%
  \BibitemOpen
  \bibfield  {author} {\bibinfo {author} {\bibfnamefont {M.~H.}\ \bibnamefont {Hayes}},\ }\href@noop {} {\emph {\bibinfo {title} {Statistical digital signal processing and modeling}}}\ (\bibinfo  {publisher} {John Wiley \& Sons},\ \bibinfo {year} {1996})\BibitemShut {NoStop}%
\bibitem [{\citenamefont {Moon}\ and\ \citenamefont {Stirling}(2000)}]{moon2000mathematical}%
  \BibitemOpen
  \bibfield  {author} {\bibinfo {author} {\bibfnamefont {T.}~\bibnamefont {Moon}}\ and\ \bibinfo {author} {\bibfnamefont {W.}~\bibnamefont {Stirling}},\ }\href {https://books.google.co.jp/books?id=1xUfAQAAIAAJ} {\emph {\bibinfo {title} {Mathematical Methods and Algorithms for Signal Processing}}}\ (\bibinfo  {publisher} {Prentice Hall},\ \bibinfo {year} {2000})\BibitemShut {NoStop}%
\bibitem [{\citenamefont {Gonon}\ \emph {et~al.}(2020)\citenamefont {Gonon}, \citenamefont {Grigoryeva},\ and\ \citenamefont {Ortega}}]{gonon2020memory}%
  \BibitemOpen
  \bibfield  {author} {\bibinfo {author} {\bibfnamefont {L.}~\bibnamefont {Gonon}}, \bibinfo {author} {\bibfnamefont {L.}~\bibnamefont {Grigoryeva}},\ and\ \bibinfo {author} {\bibfnamefont {J.-P.}\ \bibnamefont {Ortega}},\ }\bibfield  {title} {\bibinfo {title} {Memory and forecasting capacities of nonlinear recurrent networks},\ }\href@noop {} {\bibfield  {journal} {\bibinfo  {journal} {Physica D: Nonlinear Phenomena}\ }\textbf {\bibinfo {volume} {414}},\ \bibinfo {pages} {132721} (\bibinfo {year} {2020})}\BibitemShut {NoStop}%
\end{thebibliography}%
\end{document}


\preprint{APS/123-QED}

\title{Supplementary materials: How noise affects memory in linear recurrent networks}



\author{JingChuan Guan}
\email{kan@isi.imi.i.u-tokyo.ac.jp}

\author{Tomoyuki Kubota}

\author{Yasuo Kuniyoshi}

\author{Kohei Nakajima}

\affiliation{%
Intelligent Systems and Informatics Laboratory, Graduate School of Information Science and Technology, The University of Tokyo, 7-3-1 Hongo, Bunkyo-Ku, Tokyo, Japan.
}%



\date{\today}
\maketitle

This supplementary material provides derivations about the formulas and properties introduced in the main text. Also, detailed investigation about what really affects the memory is included in the section 2.
\tableofcontents
\section{The derivation of ASI}
Firstly, we define the linear echo state network. 
Let $u_t\in\mathbb{R}$ be an i.i.d. input and $v_t\in\mathbb{R}$ be a noise. 
The $N$-dimensional state $\bx_t$ is described by 
\begin{align}
    \bx_{t+1} = \bW \bx_t + \bm{w}_1 u_{t+1} + \bm{w}_2 v_{t+1}, 
\end{align}
where $\bW\in\mathbb{R}^{N\times N}$ is the internal weight matrix.
$\bm{w}_1$ and $\bm{w}_2$ $\in\mathbb{R}^N$ represent the input weight vectors of the input and noise, respectively. 

We begin the derivation of the MF from the following formula~\cite{dambre2012information}:
\begin{equation}
M[\bm{X}, \bm{Z}]=\frac{\bm{Z}^{\top}\bm{X}(\bm{X}^{\top}\bm{X})^{-1}\bm{X}^{\top}\bm{Z}}{\bm{Z}^{\top}\bm{Z}},
\end{equation}
where 
$\bm{X}$ is the matrix composed of time-series of $\bm{x}$ which is the state vector of the system, and
$\bm{Z}$ is the target series defined as the delayed time-series of past input.

\subsection{ASI with correlation between input and noise}
We first calculate the state vector $\bm{x}_{t}$:
\begin{align}
    \bm{x}_{t}
    =&\bm{W}^{t}\bm{x}_0
    +\bm{W}^{t-1}(\bm{w}_1 u_1 + \bm{w}_2 v_1)
    +...
    +\bm{W}(\bm{w}_1 u_{t-1} + \bm{w}_2 v_{t-1})
    +(\bm{w}_1 u_{t} + \bm{w}_2 v_{t}) \nonumber\\
    =&\bm{W}^{t}\bm{x}_0+\sum_{i=1}^{t}\bm{W}^{i-1}(\bm{w}_1 u_{t-i+1} + \bm{w}_2 v_{t-i+1})\nonumber\\
    =&\sum_{i=1}^{t}\bm{W}^{i-1}(\bm{w}_1 u_{t-i+1}+\bm{w}_2 v_{t-i+1}),
\end{align}
where initial state is $\boldsymbol{x}_0 = \boldsymbol{0}$.
We simplify $\bm{x}_{t}$ through the eigenvalue decomposition of internal weight matrix
$\bm{W} = \bm{P} \bm{\Sigma} \bm{P}^{\top}$,
where 
$\bm{\Sigma}=
\begin{pmatrix}
\lambda_1                                           \\
     & \lambda_2       &        & \text{\huge{0}}   \\
     &                 & \ddots                     \\
     & \text{\huge{0}} &        & \ddots            \\
     &                 &        &           & \lambda_N
\end{pmatrix}
$, and $\lambda_m$ are the $m$-th eigenvalues of $\bm{W}$.
The largest $\lambda_m$ is defined as the spectral radius $\rho$.
Therefore, $\bm{x}_{t}$ is described by
\begin{align} 
\bm{x}_{t}
=\sum_{i=1}^{t}W^{i-1}(\bm{w}_1 u_{t-i+1}+\bm{w}_2 v_{t-i+1})
=\sum_{i=1}^{t}\bm{P} \bm{\Sigma}^{i-1} \bm{P}^{\top}
(\bm{w}_1 u_{t-i+1}+\bm{w}_2 v_{t-i+1}).
\end{align}
Using new vectors:
$
\bm{P}^{\top}\bm{w}_1=
\begin{pmatrix}
    p_{1\bm{w}_1}'\\
    p_{2\bm{w}_1}'\\
    \vdots\\
    p_{N\bm{w}_1}'
\end{pmatrix},
\bm{P}^{\top}\bm{w}_2=
\begin{pmatrix}
    p_{1\bm{w}_2}'\\
    p_{2\bm{w}_2}'\\
    \vdots\\
    p_{N\bm{w}_2}'
\end{pmatrix}
$, we continue calculations:
\begin{align}
\bm{x}_{t}
=&\sum_{i=1}^{t} \bm{P} \bm{\Sigma}^{i-1} \bm{P}^{\top} \bm{w}_1  u_{t-i+1}
+ \sum_{i=1}^{t} \bm{P} \bm{\Sigma}^{i-1} \bm{P}^{\top} \bm{w}_2  v_{t-i+1}\\
=&\bm{P} \sum_{i=1}^{t}  u_{t-i+1} \bm{\Sigma}^{i-1}
\begin{pmatrix}
    p_{1\bm{w}_1}'\\
    p_{2\bm{w}_1}'\\
    \vdots\\
    p_{N\bm{w}_1}'
\end{pmatrix}
+ \bm{P} \sum_{i=1}^{t}  v_{t-i+1} \bm{\Sigma}^{i-1} 
\begin{pmatrix}
    p_{1\bm{w}_2}'\\
    p_{2\bm{w}_2}'\\
    \vdots\\
    p_{N\bm{w}_2}'
\end{pmatrix}\\
=&
\bm{P}
\sum_{i=1}^{t}
(p_{1\bm{w}_1}'\begin{pmatrix}
    \lambda_1^{i-1}\\
    0\\
    \vdots\\
    0
\end{pmatrix}
+
p_{2\bm{w}_1}'
\begin{pmatrix}
    0\\
    \lambda_2^{i-1}\\
    0\\
    \vdots
\end{pmatrix}
+\hdots+
p_{N\bm{w}_1}'\begin{pmatrix}
    0\\
    \vdots\\
    0\\
    \lambda_N^{i-1}
\end{pmatrix}
)
 u_{t-i+1}
\\
&+
\bm{P}
\sum_{i=1}^{t}
(p_{1\bm{w}_2}'\begin{pmatrix}
    \lambda_1^{i-1}\\
    0\\
    \vdots\\
    0
\end{pmatrix}
+
p_{2\bm{w}_2}'
\begin{pmatrix}
    0\\
    \lambda_2^{i-1}\\
    0\\
    \vdots
\end{pmatrix}
+\hdots+
p_{N\bm{w}_2}'\begin{pmatrix}
    0\\
    \vdots\\
    0\\
    \lambda_N^{i-1}
\end{pmatrix}
)
 v_{t-i+1}\\
=&
\bm{P}
\sum_{i=1}^{t}
\begin{pmatrix}
    p_{1\bm{w}_1}'\lambda_1^{i-1}\\
    p_{2\bm{w}_1}'\lambda_2^{i-1}\\
    \vdots \\
    p_{N\bm{w}_1}'\lambda_N^{i-1}
\end{pmatrix}
 u_{t-i+1}
+
\bm{P}
\sum_{i=1}^{t} 
\begin{pmatrix}
    p_{1\bm{w}_2}'\lambda_1^{i-1}\\
    p_{2\bm{w}_2}'\lambda_2^{i-1}\\
    \vdots \\
    p_{N\bm{w}_2}'\lambda_N^{i-1}
\end{pmatrix}
 v_{t-i+1}.
\end{align}
Defining new vectors:
$
\bm{U}_k=
\begin{pmatrix}
    u_{t-k-(K-1)}\\
    \vdots \\
    u_{t-k-(1)}\\
    u_{t-k-(0)}
\end{pmatrix}
, ~
\bm{V}_k=
\begin{pmatrix}
    v_{t-k-(K-1)}\\
    \vdots \\
    v_{t-k-(1)}\\
    v_{t-k-(0)}
\end{pmatrix}$, ~
$
\bm{\Lambda}_m
=\begin{pmatrix}
    {\lambda_m}^{K-1} \\
    \vdots \\
    {\lambda_m}^{1} \\
    {\lambda_m}^{0}
\end{pmatrix}
$, where $K$ is sufficiently large,
we obtain
\begin{align}
\bm{x}_{t-k}
&=
\bm{P}
\begin{pmatrix}
    p_{1\bm{w}_1}' \bm{\Lambda}_1^{\top} \bm{U}_k\\
    p_{2\bm{w}_1}' \bm{\Lambda}_2^{\top} \bm{U}_k\\
    \vdots \\
    p_{N\bm{w}_1}' \bm{\Lambda}_N^{\top} \bm{U}_k
\end{pmatrix}
+
\bm{P}
\begin{pmatrix}
    p_{1\bm{w}_2}' \bm{\Lambda}_1^{\top} \bm{V}_k\\
    p_{2\bm{w}_2}' \bm{\Lambda}_2^{\top} \bm{V}_k\\
    \vdots \\
    p_{N\bm{w}_2}' \bm{\Lambda}_N^{\top} \bm{V}_k
\end{pmatrix}.
\end{align}
Additionally, we define new matrices $\bm{Q}_1$ and $\bm{Q}_2$, each of which is a matrix that all the $i$-th column vectors of $\bm{P}$ are multiplied by $p_{i\bm{w}_1}'$ and $p_{i\bm{w}_2}'$ respectively.
Finally, we can express $\bm{x}_{t-k}$ as
\begin{align}
\bm{x}_{t-k}=&
\bm{Q}_1
\begin{pmatrix}
    \bm{\Lambda}_1^{\top} \bm{U}_k\\
    \bm{\Lambda}_2^{\top} \bm{U}_k\\
    \vdots\\
    \bm{\Lambda}_N^{\top} \bm{U}_k
\end{pmatrix}
+
\bm{Q}_2
\begin{pmatrix}
    \bm{\Lambda}_1^{\top} \bm{V}_k\\
    \bm{\Lambda}_2^{\top} \bm{V}_k\\
    \vdots\\
    \bm{\Lambda}_N^{\top} \bm{V}_k
\end{pmatrix}.
\end{align}

Next, we assumed the condition that $ \bm{w}_1 = \bm{w}_2$ ($\bm{Q}=\bm{Q}_1=\bm{Q}_2$), and compute $\bm{X}$:
\begin{align}
\bm{X}^{\top} =&
\begin{pmatrix}
\bm{Q}
\begin{pmatrix}
    \bm{\Lambda}_1^{\top} \bm{U}_{K-1}\\
    \bm{\Lambda}_2^{\top} \bm{U}_{K-1}\\
    \vdots\\
    \bm{\Lambda}_N^{\top} \bm{U}_{K-1}
\end{pmatrix}
+
\bm{Q}
\begin{pmatrix}
    \bm{\Lambda}_1^{\top} \bm{V}_{K-1}\\
    \bm{\Lambda}_2^{\top} \bm{V}_{K-1}\\
    \vdots\\
    \bm{\Lambda}_N^{\top} \bm{V}_{K-1}
\end{pmatrix}
& \hdots &
\bm{Q}
\begin{pmatrix}
    \bm{\Lambda}_1^{\top} \bm{U}_0\\
    \bm{\Lambda}_2^{\top} \bm{U}_0\\
    \vdots\\
    \bm{\Lambda}_N^{\top} \bm{U}_0
\end{pmatrix}
+ \bm{Q}
\begin{pmatrix}
    \bm{\Lambda}_1^{\top} \bm{V}_0\\
    \bm{\Lambda}_2^{\top} \bm{V}_0\\
    \vdots\\
    \bm{\Lambda}_N^{\top} \bm{V}_0
\end{pmatrix}
\end{pmatrix}\\
=&
\bm{Q}
\begin{pmatrix}
\begin{pmatrix}
    \bm{\Lambda}_1^{\top} \bm{U}_{K-1}+\bm{\Lambda}_1^{\top} \bm{V}_{K-1}\\
    \bm{\Lambda}_2^{\top} \bm{U}_{K-1}+\bm{\Lambda}_2^{\top} \bm{V}_{K-1}\\
    \vdots\\
    \bm{\Lambda}_N^{\top} \bm{U}_{K-1}+\bm{\Lambda}_N^{\top} \bm{V}_{K-1}
\end{pmatrix}
& \hdots &
\begin{pmatrix}
    \bm{\Lambda}_1^{\top} \bm{U}_0+\bm{\Lambda}_1^{\top} \bm{V}_0\\
    \bm{\Lambda}_2^{\top} \bm{U}_0+\bm{\Lambda}_2^{\top} \bm{V}_0\\
    \vdots\\
    \bm{\Lambda}_N^{\top} \bm{U}_0+\bm{\Lambda}_N^{\top} \bm{V}_0
\end{pmatrix}
\end{pmatrix}.
\end{align}
Defining a new matrix $\bm{G}$ as
\begin{align}
\bm{G}^{\top}
&=
\begin{pmatrix}
\begin{pmatrix}
    \bm{\Lambda}_1^{\top} \bm{U}_{K-1}+\bm{\Lambda}_1^{\top} \bm{V}_{K-1}\\
    \bm{\Lambda}_2^{\top} \bm{U}_{K-1}+\bm{\Lambda}_2^{\top} \bm{V}_{K-1}\\
    \vdots\\
    \bm{\Lambda}_N^{\top} \bm{U}_{K-1}+\bm{\Lambda}_N^{\top} \bm{V}_{K-1}
\end{pmatrix}
& \hdots &
\begin{pmatrix}
    \bm{\Lambda}_1^{\top} \bm{U}_0+\bm{\Lambda}_1^{\top} \bm{V}_0\\
    \bm{\Lambda}_2^{\top} \bm{U}_0+\bm{\Lambda}_2^{\top} \bm{V}_0\\
    \vdots\\
    \bm{\Lambda}_N^{\top} \bm{U}_0+\bm{\Lambda}_N^{\top} \bm{V}_0
\end{pmatrix}
\end{pmatrix}\\
&=
\begin{pmatrix}
    {{\bm{\Lambda}}_1}^{\top}\\
    \vdots\\
    {{\bm{\Lambda}}_N}^{\top}
\end{pmatrix}
\begin{pmatrix}
    \bm{U}'_{K-1} & \cdots & \bm{U}'_{0}
\end{pmatrix},
\end{align}
where $ \bm{U}'_k = \bm{U}_k + \bm{V}_k $,
we obtain
$
{\bm{X}}^{\top} \bm{X} = \bm{Q} \bm{G}^{\top}\bm{G} \bm{Q}^{\top},
\bm{X}^{\top}\bm{Z} = \bm{Q}\bm{G}^{\top}\bm{U}_k
$, and
\begin{align}
M[\bm{X}, \bm{U}_k]
=&
\frac{
\bm{Z}^{\top}\bm{X}
(\bm{X}^{\top} \bm{X})^{-1}
\bm{X}^{\top}\bm{Z}
}{\bm{Z}\bm{Z}^{\top}}\\
=&
\frac{(\bm{Q}\bm{G}^{\top}\bm{U}_k)^{\top}
    (\bm{Q} \bm{G}^{\top}\bm{G} \bm{Q}^{\top})^{-1}
    \bm{Q}\bm{G}^{\top}\bm{U}_k
    }{\bm{U}_k^{\top} \bm{U}_k}\\
=&
\frac{\bm{U}_k^{\top}\bm{G}
    (\bm{G}^{\top}\bm{G} )^{-1}
    \bm{G}^{\top}\bm{U}_k
    }{\bm{U}_k^{\top} \bm{U}_k}.
\end{align}
In this expression of ASI, the correlation between input and noise is still considered. 
Importantly, we can already confirm that, in the system side, only one type of parameter remains, which is the eigenvalues $\lambda_m$ of the internal weight matrix $\bm{W}$.

\subsection{ASI}
We continued to simplify the analytical solution. 
Regarding the calculation of $\bm{G}^{\top}\bm{G}$, 
we used the assumption that there is no correlation between the input and noise. 
Using ${\bm{U}'_k}= \lbrace u'_{t+i-(K-1)-k} \rbrace$, where $i=0, \cdots ,K-1$,
we obtain
\begin{align}
\bm{G}^{\top}&=
\begin{pmatrix}
    {{\bm{\Lambda}}_1}^{\top}\\
    \vdots\\
    {{\bm{\Lambda}}_N}^{\top}
\end{pmatrix}
\begin{pmatrix}
    \bm{U}'_{K-1} & \cdots & \bm{U}'_{0}
\end{pmatrix}\\
\bm{G}^{\top} \bm{G}
&=
\begin{pmatrix}
    {{\bm{\Lambda}}_1}^{\top}\\
    \vdots\\
    {{\bm{\Lambda}}_N}^{\top}
\end{pmatrix}
\begin{pmatrix}
    \bm{U}'_{K-1} & \cdots & \bm{U}'_{0}
\end{pmatrix}
\begin{pmatrix}
    {\bm{U}'_{K-1}}^{\top} \\
    \vdots \\
    {\bm{U}'_{0}}^{\top}
\end{pmatrix}
\begin{pmatrix}
    {{\bm{\Lambda}}_1} & \hdots & {{\bm{\Lambda}}_N}
\end{pmatrix}.
\end{align}
Here, we use the following property:
$
\begin{pmatrix}
    \bm{U}'_{K-1} & \cdots & \bm{U}'_{0}
\end{pmatrix} =
\begin{pmatrix}
    \bm{U}'_{K-1} & \cdots & \bm{U}'_{0}
\end{pmatrix}^{\top}
$ and 
\begin{align}
{\bm{U}'}_k^{\top} {\bm{U}'}_l =
    \begin{cases}
        ( 1 + r ) K \langle u^2 \rangle
        & (k=l) \\
        r C( |k-l| ) K \langle u^2 \rangle
        & (k \neq l)
    \end{cases}
    ,
\end{align}
where $C(\tau)$ is the autocorrelation function normalized by the variance of $v$.
Subsequently, we can calculate as follows:
\begin{align}
\begin{pmatrix}
    \bm{U}'_{K-1} & \cdots & \bm{U}'_{0}
\end{pmatrix}
\begin{pmatrix}
    {\bm{U}'_{K-1}}^{\top} \\
    \vdots \\
    {\bm{U}'_{0}}^{\top}
\end{pmatrix} = K\langle u^2\rangle \bm{C}_{uv},
\end{align}
where, 
$
\bm{C}_{uv} = 
\bm{E} + \frac{\langle v^2 \rangle}{\langle u^2\rangle} \bm{C}_{v}$, and 
$
\bm{C}_{v} =
\begin{pmatrix}
    1 &  C(1) & \cdots &  C(K-1) \\
     C(1) &  1 & \cdots &  C(K-2)\\
    \vdots &   & \ddots & \vdots\\
     C(K-1) &  C(K-2) &\cdots &  1 \\
\end{pmatrix}
$.
Therefore, we obtain
\begin{align}
\bm{G}^{\top} \bm{G} =
K\langle u^2\rangle
\bm{H} \bm{C}_{uv} \bm{H}^{\top},
\end{align}
where $\bm{H}
=
\begin{pmatrix}
    {{\bm{\Lambda}}_1}^{\top}\\
    \vdots\\
    {{\bm{\Lambda}}_N}^{\top}
\end{pmatrix}
=
\begin{pmatrix}
    {\lambda_1}^{K-1} & {\lambda_1}^{K-2} & \cdots & {\lambda_1}^{1} & 1\\
    {\lambda_2}^{K-1} & {\lambda_2}^{K-2} & \cdots & {\lambda_2}^{1} & 1\\
    \vdots\\
    {\lambda_N}^{K-1} & {\lambda_N}^{K-2} & \cdots & {\lambda_N}^{1} & 1\\
\end{pmatrix}
$.

Next, we compute $\bm{G}^{\top}\bm{U}_k$:  
\begin{align}
\bm{G}^{\top} \bm{U}_k
&=
\begin{pmatrix}
\begin{pmatrix}
    \bm{\Lambda}_1^{\top} \bm{U}_{K-1}+\bm{\Lambda}_1^{\top} \bm{V}_{K-1}\\
    \bm{\Lambda}_2^{\top} \bm{U}_{K-1}+\bm{\Lambda}_2^{\top} \bm{V}_{K-1}\\
    \vdots\\
    \bm{\Lambda}_N^{\top} \bm{U}_{K-1}+\bm{\Lambda}_N^{\top} \bm{V}_{K-1}
\end{pmatrix}
& \hdots &
\begin{pmatrix}
    \bm{\Lambda}_1^{\top} \bm{U}_0+\bm{\Lambda}_1^{\top} \bm{V}_0\\
    \bm{\Lambda}_2^{\top} \bm{U}_0+\bm{\Lambda}_2^{\top} \bm{V}_0\\
    \vdots\\
    \bm{\Lambda}_N^{\top} \bm{U}_0+\bm{\Lambda}_N^{\top} \bm{V}_0
\end{pmatrix}
\end{pmatrix} \bm{U}_k .
\end{align}
By focusing on the $l$-th row,
we can simplify as follows:
\begin{align}
&
\begin{pmatrix}
{\bm{\Lambda}_l}^{\top}(\bm{U}_{K-1} +\bm{V}_{K-1})
& \hdots &
{\bm{\Lambda}_l}^{\top}(\bm{U}_0 +\bm{V}_0)
\end{pmatrix}\bm{U}_k
\\
&=
{\bm{\Lambda}_l}^{\top}(
    (\bm{U}_{K-1} +\bm{V}_{K-1})u_{0-k} 
    +\hdots+
    (\bm{U}_0 +\bm{V}_0)u_{t-k}
)\\
&=
{\bm{\Lambda}_l}^{\top}(
    (\bm{U}_{K-1} +\bm{V}_{K-1})u_{t-(K-1)-k} 
    +\hdots+
    (\bm{U}_0 +\bm{V}_0)u_{t-k}
).
\end{align}
The $m$-th row of the right vector is additionally focused on:
\begin{align}
&
(u_{t-(K-1)-(K-m)} + v_{t-(K-1)-(K-m)} )u_{t-(K-1)-k}
+\hdots+
( u_{t-(K-m)}+v_{t-(K-m)} )u_{t-k}\\
&=
\sum_{i=0}^K u_{t-(K-m)-(K-1)+i} ~ u_{t-(K-1)-k+i}\\
&=
\begin{cases}
    \sum_{i=0}^K u_{t-k+1}^2 & (m=K-k) \\
    0 & (m \neq K-k )
\end{cases},
\end{align}
which produces
$
\begin{pmatrix}
{\bm{\Lambda}_l}^{\top}(\bm{U}_{K-1} +\bm{V}_{K-1})
& \hdots &
{\bm{\Lambda}_l}^{\top}(\bm{U}_t +\bm{V}_t)
\end{pmatrix}\bm{U}_k
=
{{\lambda}_l}^{k} \sum_{i=0}u_{t-i-k}^2
=
{\lambda}_l^{k} K\langle u^2\rangle
$. Therefore, we obtain
\begin{align}
\bm{G}^{\top} \bm{U}_k
= K\langle u^2\rangle \bm{H}_k,
\end{align}
where 
$
\bm{H}_k=
\begin{pmatrix}
    {\lambda_1}^{k}\\
    {\lambda_2}^{k}\\
    \vdots\\
    {\lambda_N}^{k}
\end{pmatrix}
$.

Combining these results, we conclude ASI:
\begin{align}
M[\bm{X}, \bm{U}_k]
&=
K\langle u^2\rangle
{\bm{H}_k}^{\top}
\left[
K\langle u^2\rangle \bm{H} \bm{C}_{uv} \bm{H}^{\top}
\right]^{-1}
{\bm{H}_k}K\langle u^2\rangle/K\langle u^2\rangle\\
&=
{\bm{H}_k}^{\top}
\left[ \bm{H} \bm{C}_{uv} \bm{H}^{\top} \right]^{-1}
{\bm{H}_k}.
\\
\Msumu[\bm{X}]
&=
\textrm{tr}\left[
{\bm{H}}^{\top}
( \bm{H} \bm{C}_{uv} \bm{H}^{\top} )^{-1}
{\bm{H}}\right].\label{ASI}
\end{align}

\subsection{ASInc}
We introduce the ASI whose noise has no correlation (ASInc) here.
The correlation matrix $\bm{C}_{uv}$ is expressed as
$
\bm{C}_{uv}=
\langle u^2 \rangle + \langle v^2 \rangle/\langle u^2\rangle \bm{E}
$, where $\bm{E}$ is identity matrix. Therefore we can introduce
\begin{align}
M[\bm{X}, \bm{U}_k]=&
{\bm{H}_k}^{\top}
\left[ \bm{H}
\frac{\langle u^2 \rangle + \langle v^2 \rangle }{\langle u^2\rangle} \bm{E}
\bm{H}^{\top} \right]^{-1}
{\bm{H}_k}\\
=&
\frac{\langle u^2\rangle}{\langle u^2 \rangle + \langle v^2 \rangle}
{\bm{H}_k}^{\top}
( \bm{H} \bm{H}^{\top} )^{-1}
{\bm{H}_k},\\
\Msumu[\bm{X}]
=&
\frac{\langle u^2\rangle}{\langle u^2 \rangle + \langle v^2 \rangle }
\textrm{tr}\left[
{\bm{H}}^{\top}
( \bm{H} \bm{H}^{\top} )^{-1}
{\bm{H}}\right]\\
\end{align}
With a sufficiently large $K$, we can regard
$
\bm{\Lambda}_l^{\top}\bm{\Lambda}_m
=
\sum_{i=0}^{K-1} (\lambda_l\lambda_m)^i
$ as 
$\frac{1}{1-\lambda_l\lambda_m}$, 
which produces
\begin{align}
M[\bm{X}, \bm{U}_k] =&
\frac{\langle u^2\rangle}{\langle u^2 \rangle + \langle v^2 \rangle }
{\bm{H}_k}^{\top}
\begin{pmatrix}
    \frac{1}{1-\lambda_1\lambda_1} & \frac{1}{1-\lambda_1\lambda_2} & \cdots & \frac{1}{1-\lambda_1\lambda_N} \\
    \frac{1}{1-\lambda_1\lambda_2} & \frac{1}{1-\lambda_2\lambda_2} & & \vdots\\
    \vdots & & \ddots\\
    \frac{1}{1-\lambda_1\lambda_N} & \cdots & & \frac{1}{1-\lambda_N\lambda_N}
\end{pmatrix}^{-1}
{\bm{H}_k}. \label{ASInc}
\end{align}
This solution proves that the MF is determined only by the eigenvalues $\lambda_m$ of $\bm{W}$.

\section{Effects of Weight Eigenvalues and Noise Autocorrelation on Memory}\label{sec:Investigation}
In this section, we examine the influence of $\lambda_m$ and $\bm{C}_v$.

\subsection{Eigenvalue dependence}
\begin{figure}[t]
\centering
\includegraphics[width=1.0\linewidth]{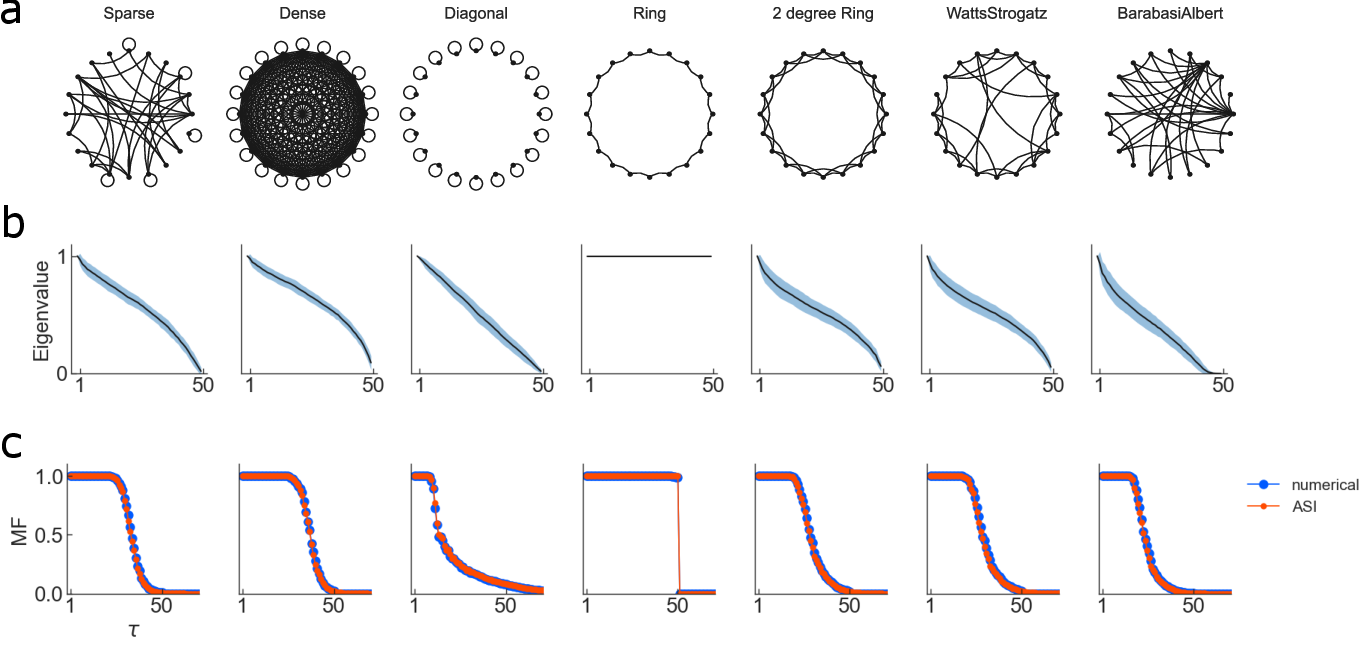}
\includegraphics[width=1.0\linewidth]{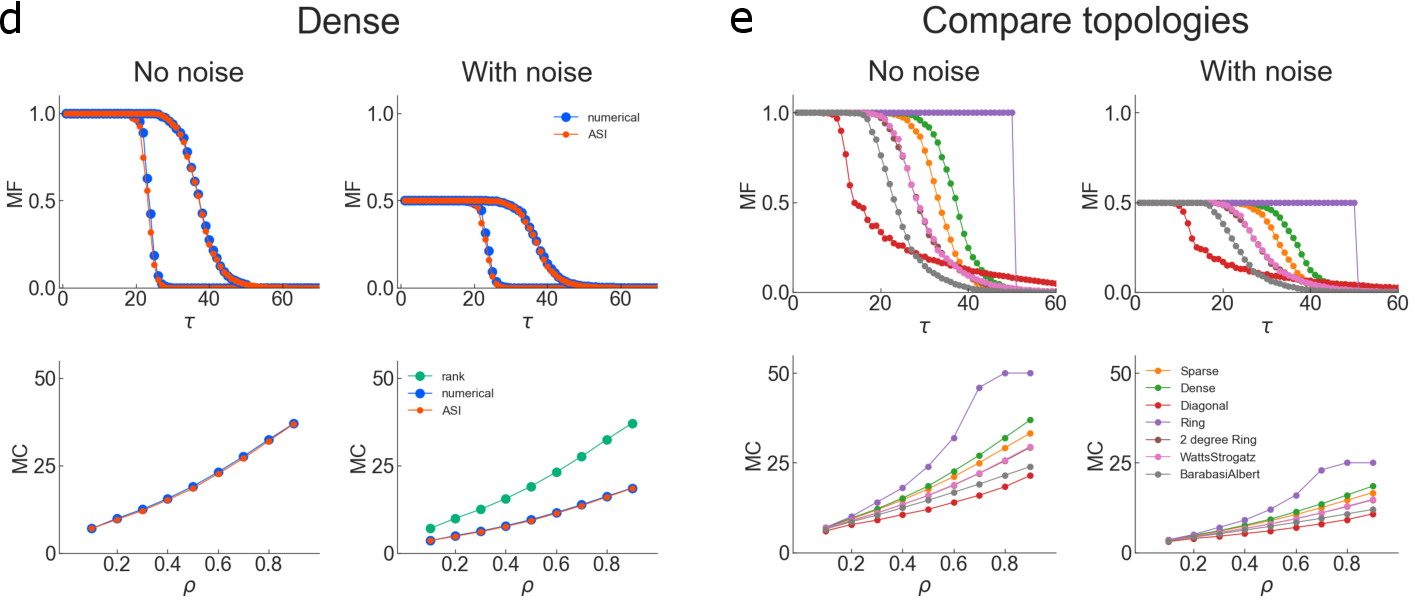}
\caption{\label{fig:topos}
MF and MC with random noise and their eigenvalue dependence.
All the plotted lines about MF and MC are the averaged results of 40 seeds.
(a):~Connectivity of 20 nodes with respect to each topology.
(b):~Eigenvalues of the internal weight matrix with respect to each topology for 50 node systems. The horizontal (vertical) axis is index (eigenvalue). The blue area is the standard deviation.
(c):~The correspondence of MF obtained numerically (numerical, blue) and through ASInc (ASI, red).
For each diagram, the horizontal axis is the delay of input for measuring, and the vertical axis is MF.
(d):~The correspondence of MF and MC focusing on the noise effect.
20 and 50 node systems are shown in the diagrams of MF, and the 20 node system begin to decay faster. 
The case on the Dense topology is calculated numerically (numerical, blue) and from ASInc (ASI, red)
The green line in the panel of MC with noise indicates the rank of the dynamical system which corresponds to the MC without noise.
(e):~Comparisons of topologies on MF and MC obtained through ASInc.
50 node systems are shown for each topology.
In (d) and (e), the above (below) panels show MF (MC) and the left (right) panels display MF and MC without (with) noise. With a noise, its intensity equals the input one.
Regarding the diagrams of MC, the horizontal (vertical) axis is spectral radius $\rho$ (MC).
}
\end{figure}

Previous studies have discussed short-term memory about its network connectivity in terms of topologies that mimic various real-world systems~\cite{dale2021reservoir, ganguli2008memory, kawai2019small}. 
In the present study, we use seven typical topologies (Fig.~\ref{fig:topos}a) referred to as Sparse, Dense, Diagonal, 1 degree Ring (Ring), 
2 degree Ring~\cite{erdds1959random}, 
Watts-Strogatz~\cite{watts1998collective, kawai2019small}, 
and Barabasi-Albert~\cite{barabasi1999emergence}.

The Sparse and Dense topologies were defined with connection probabilities of 0.1 and 1.0, respectively, resulting in varying densities of connections among all nodes. This parameterization allowed us to investigate the effects of changes in connection cost on network behavior.
The Diagonal topology considers only self-loops for each node.
The N degree Ring topologies involve each node forming connections with all nodes closer than the Nth closest nearest nodes. 
Watts-Strogatz is also known as the small world. This model randomly rewires connections of 2 degree Ring with a probability of 0.1 introducing randomness to the original topology.
This model is sometimes used to represent neural networks in biological systems~\cite{watts1998collective, kawai2019small}.
Barabasi-Albert topologies exhibit a power-law distribution of connection density and are considered models for metabolic networks when molecules are represented as nodes~\cite{barabasi2004network}.
The connectivity patterns of these topologies are illustrated in Fig.~\ref{fig:topos}a, while their eigenvalues are shown in Fig.~\ref{fig:topos}b.

It is noteworthy that 
random numbers were employed in generating each topology, and the distribution of these random numbers influences the resulting topologies. 
In this analysis, uniform distribution was used, therefore the results reflect the characteristic of uniform distribution.

We compared the MFs across different topologies to investigate the impact of eigenvalues, including both the analytical solution and numerical solution.
During these experiments, we employ a simple condition that noise does not entail autocorrelation.
The conditions are that the input $u$ is a uniform random number, here considering both cases
(i) without noise (i.e., $v=0$) and 
(ii) with noise ($v$ with a uniform distribution).
The assumption of ASInc hold true under both cases (i) and (ii) because noise has no correlation. Therefore, we only consider ASInc in this experiment.

Initially, we verified that the MF calculated from both ASInc and numerical solution coincided under condition (i) for all introduced topologies.
The results are shown in Fig.~\ref{fig:topos}c, indicating consistency between analytical and numerical solutions across all topologies, indicating that the assumptions used in deriving ASI are valid.
The variance of noise $v$ is 0 under condition (i). 
ASInc does not depend on input properties such as input intensity or distribution (Eq.~\ref{ASInc}). Thus, determining the topology is equivalent to determining the eigenvalues of the internal weight matrix, which, in turn, determines the shape of the MF.

Next, we evaluated the effect of noise under condition (ii).
We generated the time series of the input and noise such that both of them have a same standard deviation, ensuring equal input intensities. 
According to ASInc, the MF and MC of the measured input are proportional to the variance they occupy within the total variance, including noise.
As shown in Fig.~\ref{fig:topos}d, the analytical and numerical solutions agree in both systems with noise and without noise. 
The values of MF on Dense topology with noise are half those without noise, which is also true for all topologies \ref{fig:topos}e.
The half value corresponds to the predicted influence of the variance ratio of the input from ASInc
, which reflects the SNR in the Eq.~(\ref{ASInc}).
In the MC plot, the ranks match the MC without noise.
The ranks in both systems without and with noise correspond to each other, and the MC with noise are half of the ranks. 
Considering that the rank is decided by the number of different values of the matrix $\bm{H}$, we can confirm that the rank is not reduced analytically from Fig.~\ref{fig:topos}b.
However, we can see the ranks do not reach the theoretical values which equal the node number (Fig.~\ref{fig:topos}d below). 
This is the rank reduction problem due to numerical calculations.

Finally, we compared the changes in the MF and MC among topologies (Fig.~\ref{fig:topos}e). 
Considering the time it takes for MF from $\tau=0$ to the point $\tau$ to begin to decay, the order was as follows: 
Ring, Dense, Sparse, Watts-Strogatz, 2 degree Ring, Barabasi-Albert, and Diagonal. 
Considering the order of the time until decay ends, Ring was the longest, while Barabasi-Albert was the shortest, with the other topologies following a similar order as the decay start time. 
However, the difference between Watts-Strogatz and 2 degree Ring was minimal in all cases.
Focusing on MC, the order of topologies was maintained across the entire spectral radius. 
Ranking the topologies in descending order based on MC values aligned with the order of decay start time.
These topologies possess similar eigenvalue profiles, which induced a small difference between their MFs.
This result suggests the importance of investigating the dependence on the eigenvalues of weight matrices rather than topology in the analysis of MF.
As explained in Fig.~\ref{fig:topos}d, the MC without noise equal the rank of the system which corresponds to the rank of matrix $\bm{H} \bm{H}^{\top} $ (Eq.~\ref{ASInc}).
In fact, the rank which matches the MC varies across topologies.
This suggests that one of reasons for diverse memory profiles is induced by the rank reduction problem.

\subsection{Autocorrelated noise}
\begin{figure}[t]
\centering
\includegraphics[width=1.0\linewidth]{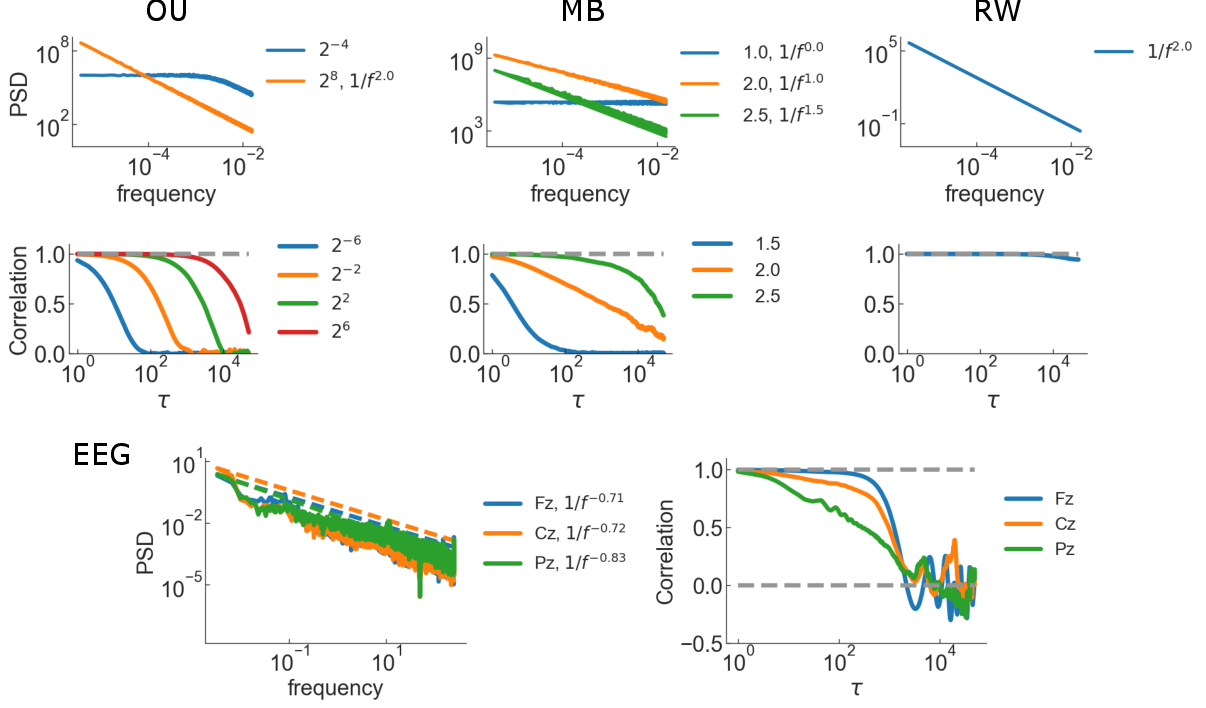}
\caption{\label{fig:autocor1}
PSDs and autocorrelation functions of autocorrelated noise OU, MB, RW, and EEG. 
In PSD plot, the horizontal axis represents frequency $f$, while the vertical axis represents PSD. As for the autocorrelation function plot, the horizontal axis denotes delay $\tau$, while the vertical axis represents autocorrelation.
The labels in the panels of PSD (autocorrelation) are the noise parameters (OU:$\alpha$, MB:$B$, EEG:electrode name).
For the panels of PSD, $1/f$-like properties are also shown. 
}
\end{figure}
\begin{figure}[tbh]
\centering
\includegraphics[width=1.0\linewidth]{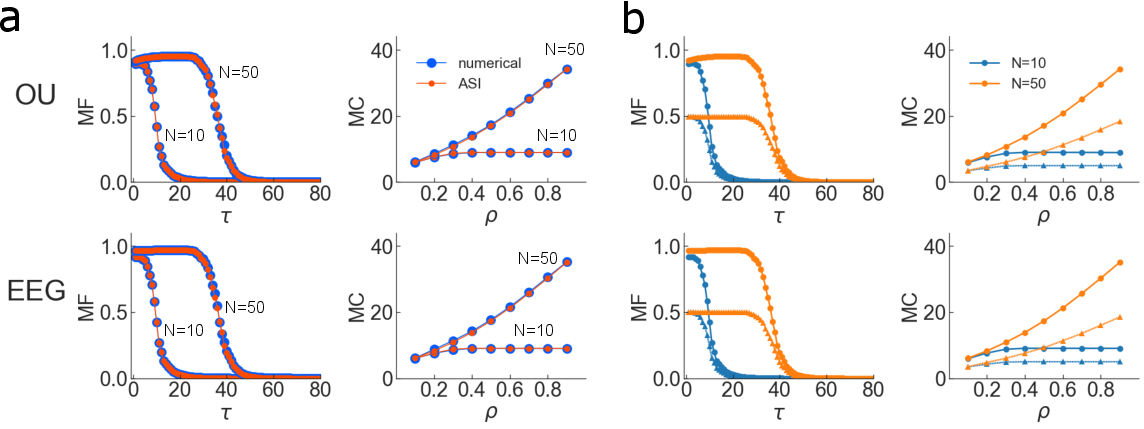}
\includegraphics[width=0.75\linewidth]{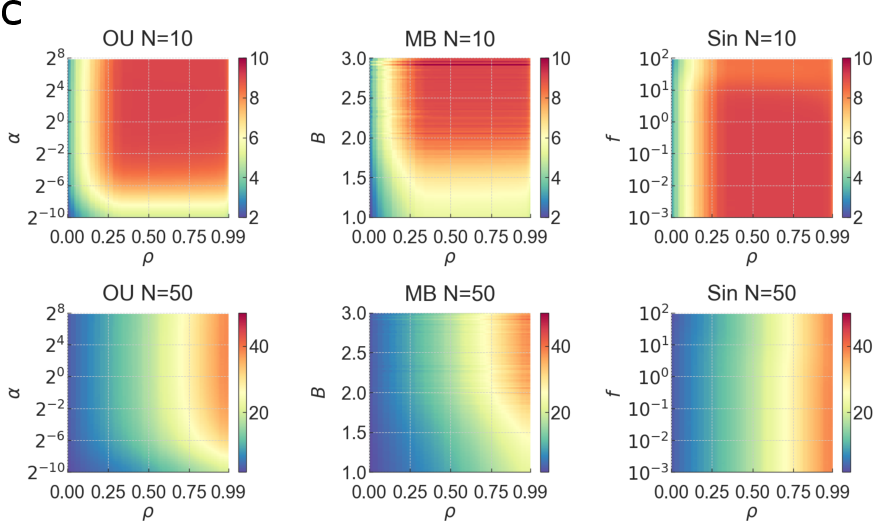}
\caption{\label{fig:autocor2}
MF and MC with autocorrelate noise. All the plotted lines about MF and MC are the averaged results of 40 seeds.
(a):~The correspondence of numerical (blue) and ASI (red).
(b):~Observation on the effect of autocorrelation. The lines with triangle (circular) markers entail the original (shuffled) series.
(c):~Autocorrelation dependency of MC.
In (a) and (b), the left panels show MF, and the horizontal axis is the delay ($\tau$) of input. 
The right panels show MC, and the horizontal axis is spectral radius ($\rho$).
The noise parameter of OU $\alpha$ is $4^{-1}$.
Concerning EEG, the case with the electrode ``Fz'' is plotted.
For both of them, $10$-, $50$-nodes case are shown.
In panel (c), the upper (lower) row depicts the MC with N = 10 (50) nodes.
For each plot, the colors show the value of MC, the horizontal axis is the spectral radius, and the vertical axis is the noise parameters (OU:$\alpha$, MB:$B$, Sin:frequency).
}
\end{figure}

To analyze the effects of correlated noises, we consider three $1/f$-like noise models and a sinusoidal noise. 
According to the Wiener-Khinchin relation, the relationships between the autocorrelation function $C(\tau)$ and the PSD $S(\omega)$ is determined as follows:
\begin{align}
S(k)= \lim_{K\to \infty} \sum_{\tau=0}^{K-1} C(\tau)e^{-i 2\pi \frac{\tau}{K} k},~
C(\tau) = \lim_{K \to \infty} \frac{1}{K}\sum_{k=0}^{K-1} S(k)e^{i 2\pi \frac{k}{K} \tau}.
\end{align}
Therefore, the shape of the PSD decides that of the autocorrelation function, and vice versa.

The first noise model is the Ornstein-Uhlenbeck process (OU)~\cite{hanggi1993can, fox1988fast, lorenzo1999colored,farlow2006introduction},
whose autocorrelation function shows exponential decay.
The time series $v_t$ can be obtained through the following differential equation:
\begin{equation}
    \frac{dv}{dt} = -\frac{1}{\alpha} v + \frac{1}{\alpha} Y(t),
\end{equation}
where $Y$ is a stochastic process and $\alpha$ is a parameter determining the degree of its time correlation.
As $\alpha$ increases, the degree of autocorrelation in the time series becomes larger.
When $\alpha$ is quite large, this time series can be regarded as a random walk.
To generate discrete time series, we employ the following Euler method:
$ v_{t+1} = \left(1-\alpha' \right)v_t + \alpha' Y(t) $,
where $\alpha'=\Delta t/\alpha$ and $0<\alpha'<1$.
In addition, \(C(\tau)\) is analytically derivable as
$ C(\tau) =
\frac{ \alpha' \left(1-\alpha'\right) ^{\tau} }{ 2-\alpha' }
{\rm Var}(Y) $.
The PSD shape indicates Lorentzian shape, and with a sufficiently large $\alpha$, $1/f^2$ property.

The second model is the modified-Bernoulli map (MB)~\cite{aizawa1984f, aizawa1989non, akimoto2007new}, which is defined as follows: 
\begin{equation}
v_{t+1}=\begin{cases}
v_{t}+2^{B-1}(1-2\epsilon) {v_{t}}^B + \epsilon & \text{for $v_{t} \in[0,1/2]$}, \\
v_{t}-2^{B-1}(1-2\epsilon) (1-v_{t})^B - \epsilon & \text{for $v_{t} \in[1/2,1]$},
\end{cases}
\end{equation}
where the control parameter is $B$.
The parameter $\epsilon$ is a quite small constant value.
The series exhibits white noise when $B=1$, $1/f$ property when $B=2$, and $1/f^{B-1}$ for $B\geq2$.

The third noise model is random walk (RW)~\cite{cryer1986time, langbein2012estimating}, 
whose PSD exhibits $1/f^2$. 
The noise $v_t$ is defined as follows:
$Y_{t}(d)$ is a random variable that follows the probability distribution $d$.
\begin{equation}
    v_{t+1}=v_{t}+Y_{t+1}(d).
\end{equation}
The autocorrelation function \(C(\tau)\) is analytically derivable. 
From the generation method, after $t$ time steps, $v_{t}$ follows the 
$t$-th i.i.d. 
The variance of $Y_{t}(d)$ is $\sigma_{Y}^2$, the variance of $v_{t}$ is $\sigma_{t}^2$, so $\sigma_t^2 = t\sigma_{Y}^2$ holds. 
We calculate the correlation between this time series and the time series delayed by $\tau$ time steps.
In the calculation, we assume that $t$ is sufficiently long compared with $\tau$, and we use $t+\tau$ as the time length. 
This yields $ C(\tau) = (t-\tau)\sigma_{Y}^2 $. 
It is evident that $C(\tau)$ depends on the time length, which indicates that the noise is nonstationary.
However, after a sufficiently long time has elapsed, $\tau$ becomes significantly smaller than $t$, and the autocorrelation can be considered a constant function while the duration for measuring the autocorrelation is enough smaller than $t$. 

The last model is the sinusoidal noise (Sin), which shows the typical periodic signal. 
The PSD of a Sin exhibits nonzero values only at a single frequency. The time series $v_t$ is defined as follows using amplitude $A$, frequency $f$, and phase $\phi$:
\begin{align}
    v_t = A \sin (2\pi f t + \phi).
\end{align}
The autocorrelation function is described by
$C(\tau) = \frac{A^2}{2} \sin(2\pi f \tau)$.
We assigned $f$ as the control parameter.

The input intensities of the input and noise are defined as equal.
The PSD and autocorrelation functions for certain parameters are displayed in Fig.~\ref{fig:autocor1} for the OU, MB, RW, and EEG, several electrodes were used. 
Though non-stationary time series were involved, as a sufficiently long period is taken into account, the numerical value obtained from our analyses converge statistically to an unique one.

First, we confirmed ASI matched the numerical solution.
According to Fig.~\ref{fig:autocor2}a, the ASI and the numerical solutions matched with respect to all of the noise type and the number of nodes. 
Consistent with previous studies, we observed that the decay of MF approximately matches the number of nodes and that the dependence of MC on the spectral radius is small at low node numbers but increases as the number of nodes increases.
Based on this result, ASI was employed in the following analyses.

Next, we compared the MF and MC of systems with an autocorrelated noise and a shuffled noise (Fig.~\ref{fig:autocor2}b).
For OU, 
we varied the number of nodes, while for the EEG time series, we varied the electrode positions.
The systems with autocorrelated noises consistently have significantly higher values compared with the systems with shuffled noise. As for MF, there is no change in the beginning and end of decay, indicating a characteristic behavior where MF at each $\tau$ is scaled.

To examine the impact of changing the degree of autocorrelation, we manipulated the noise parameters controlling autocorrelation in OU and MB and the frequency in Sin 
(see Fig.~\ref{fig:autocor2}c). 
Because we have seen changing spectral radius affects the effects of autocorrelation in the previous result, we also incorporated the spectral radius as a variable of MC.
The results revealed MC is increased as the degree of autocorrelation increases. 
Across all noise models, in low-dimensional systems of around 10 nodes, the MC reached the node number, which is upper limit, but it did not reach the upper limit at 50 nodes due to the rank reduction problem.
We can see MC improved as the spectral radius increased.
Especially in systems with 10 nodes, MC reached its upper limit when exceeding 0.3.
For OU, the increase in MC stops when the noise parameters range from $\alpha = 2^{-6}$ or higher. Similarly, for MB, increasing the noise parameter $B$ leads to the maximum MC value when the noise parameters range from $B = 2.0$ or higher. 
It is noteworthy that, for both types of noise, the noise parameter at which the increase stops is independent of the spectral radius.
Sin does not show dependence on the frequency of the noise parameter (explained in Sec. 3).
Because the increase in noise parameter implies an increase in the magnitude of noise autocorrelation for OU and MB, 
the increase in the magnitude of noise autocorrelation means the enhancement of MC is prompted further.

\section{Correlation matrix}
Correlation matrix $\bm{C}_v$ can be Cholesky factorized:
$ \bm{C}_v =
\begin{pmatrix}
    \bm{V}_{K-1} & \bm{V}_{K-2} & \hdots & \bm{V}_0
\end{pmatrix}^{\top}
\begin{pmatrix}
    \bm{V}_{K-1} & \bm{V}_{K-2} & \hdots & \bm{V}_0
\end{pmatrix}
$,
which means that $\bm{C}_v$ is semi-positive definite, and the eigenvalues are all positive. 
$\bm{C}_v$ holds the following property:
\begin{align}
    \operatorname{tr}(\bm{C}_v)=\sum_{i=1}^{K}\lambda[\bm{C}_v]_i = K,
    \label{trace-Cv_sum-PSD}
\end{align}
where $\lambda[\bm{C}_v]_i$ is the eigenvalues of $\bm{C}_v$.
Especially for sinusoidal curve,
we could diagonalize $\bm{C}_v$ and calculate $\lambda[\bm{C}_v]_i$.
\begin{align}
\bm{C}_v
=&
\bm{F}^{-1}
\begin{pmatrix}
    \ddots \\
    & 0 \\
    &        &\frac{K}{2}    &       &        &     & \text{\huge{0}}\\
    &        &               & 0    \\
    &        &               &       & \ddots  \\
    &        &               &       &        & 0 \\
    &        &\text{\huge{0}}&       &        &     & \frac{K}{2}\\
    &        &               &       &        &     &     & 0 \\
    &        &               &       &        &     &     &      & \ddots
\end{pmatrix}
\bm{F},
\label{eq:eigens_Sin}
\end{align}
where $\bm{F}$ is a discrete fourier transform matrix.
The eigenvalue of $\bm{C}_v$ has two $\frac{K}{2}$ in $f$-th and $(K-f)$-th elements, others are all 0.
This proposes that, in general, $\lambda[\bm{C}_v]_i$ is equal to the PSD of the noise which has been revealed in previous studies~\cite{hayes1996statistical, moon2000mathematical}.

To show the effect of changing PSDs, we have adopted several types of PSD model.
``Some peaks" indicates a PSD with two nonzero spikes that depict the PSD of the sinusoidal curve.
The linearly decreasing PSD (``linear") is defined by $\lambda[\bm{C}_v]_{n} = \gamma (n - N_a)$, where $N_a$ represents the number of nonzero values within the PSD.
The exponentially decreasing shapes are represented by ``$10^{- n}$" and ``$10^{-10n}$" defined by $\lambda[\bm{C}_v]_{n}=\gamma 10^{-Bn}$, where $B=1$ and $10$, respectively.
Similar to the linear case, the number $N_a$ of nonzero values is defined.
The $\gamma$ is the coefficient to normalize $\lambda[\bm{C}_v]_{n}$ according to the Eq.~(\ref{trace-Cv_sum-PSD})
The Lorentzian type indicates the PSD defined by $\lambda[\bm{C}_v]_{n}=1/(1+(n/N_w)^2)$.
We have adopted $N_a=N_w=5 \times 10^3$.

\section{Analytical solution calculated from the difference of MC}
\label{sec:Analytical solution delta MC}
We assessed the differences of MC between two systems: one with an autocorrelated noise and another with a random noise. 
Both have the same noise intensity. 
We define the MCs of the two systems: 
$\Msumu^{\rm{ac} }$ and $\Msumu^{\rm{iid} }$, 
which are the $\Msumu$ with an autocorrelated and an i.i.d. noise, 
respectively:
\begin{align}
\Msumu^{\rm{ac} }
&=\textrm{tr}\left[
    \bm{H}^{\top} (\bm{H}\bm{H}^{\top}+\bm{H}(rC_{v}) \bm{H}^{\top} )^{-1} \bm{H} 
    \right] \label{Csum ac} \\
\Msumu^{\rm{iid} }
&=\textrm{tr}\left[
    \bm{H}^{\top} ( (1+r) \bm{H}\bm{H}^{\top} )^{-1} \bm{H}
    \right]. \label{Csum iid}
\end{align}
The difference $\Delta \Msumu$ between $\Msumu^{\rm{ac} }$ and $\Msumu^{\rm{iid} }$ is described by 
$\Delta \Msumu = \Msumu^{\rm{ac} } - \Msumu^{\rm{iid} }$.
We can introduce $\Delta \Msumu$ using Woodbury matrix identity:
\begin{align}
(\bm{A}+\bm{U}\bm{C}\bm{V})^{-1} = \bm{A}^{-1} - \bm{A}^{-1}\bm{U}(\bm{C}^{-1} + \bm{V}\bm{A}^{-1}\bm{U})^{-1}\bm{V}\bm{A}^{-1},
\end{align}
where $\bm{A}=(1+r)\bm{H}\bm{H}^{\top}, \bm{U}=\bm{H}, \bm{C}=r(\bm{C}_{v}-\bm{E}), \bm{V}=\bm{H}^{\top}$,
and we define $r = \langle v^2 \rangle/\langle u^2\rangle$
as noise-to-signal ratio (NSR). 
\begin{align}
&\therefore
( (1+r) \bm{H}\bm{H}^{\top} +  \bm{H} (r\bm{C}_{v} - r\bm{E}) \bm{H}^{\top} )^{-1}
\\
&=
\frac{1}{1+r}(\bm{H}\bm{H}^{\top} )^{-1}
-
\frac{1}{1+r}(\bm{H}\bm{H}^{\top} )^{-1} 
\bm{H}
(
    \frac{1}{r}(\bm{C}_{v} - \bm{E})^{-1}
    +
    \frac{1}{1+r}\bm{H}^{\top} (\bm{H}\bm{H}^{\top} )^{-1} \bm{H}
)^{-1}
\bm{H}^{\top} \frac{1}{1+r} (\bm{H}\bm{H}^{\top} )^{-1},
\end{align}
Defining
$\bm{J}=\bm{H}^{\top} (\bm{H} \bm{H}^{\top} )^{-1} \bm{H}$
and
$\bm{J}'=\bm{H}^{\top} (\bm{H}(1+r)\bm{H}^{\top} )^{-1} \bm{H}=\frac{1}{1+r}\bm{J}$,
we obtain
\begin{align}
&
\bm{H}^{\top} (\bm{H}\bm{H}^{\top}+\bm{H}(r\bm{C}_{v}) \bm{H}^{\top} )^{-1} \bm{H}
-
\bm{H}^{\top} ((1+r)\bm{H}\bm{H}^{\top} )^{-1} \bm{H}
\\
&=
-\bm{H}^{\top} ((1+r)\bm{H}\bm{H}^{\top} )^{-1} \bm{H}((r\bm{C}_{v}-r\bm{E})^{-1}+\bm{H}^{\top} ((1+r)\bm{H}\bm{H}^{\top} )^{-1} \bm{H})^{-1} \bm{H}^{\top}((1+r)\bm{H}\bm{H}^{\top} )^{-1} \bm{H}
\\
&=
-\bm{J}' ( (r\bm{C}_{v} - r\bm{E})^{-1}+\bm{J}')^{-1} \bm{J}'.
\end{align}

\begin{align}
\therefore
\Delta \Msumu=\Msumu^{ac}-\Msumu^{iid}
&=
\textrm{tr}\left[- \bm{J}'( \frac{1}{r}(\bm{C}_{v}-\bm{E})^{-1}+\bm{J}')^{-1} \bm{J}'\right]
\\
&=
\textrm{tr}\left[- ( \frac{1}{r}(\bm{C}_{v}-\bm{E})^{-1}+\bm{J}')^{-1} \bm{J}'\right]
\frac{1}{1+r}
\\
&=
- \frac{1}{1+r} \textrm{tr}\left[( \frac{1}{r}\bm{E}+(\bm{C}_{v}-\bm{E})\bm{J}')^{-1} (\bm{C}_{v}-\bm{E})\bm{J}'\right]  ~~(\because (\bm{A}\bm{B})^{-1}=\bm{B}^{-1} \bm{A}^{-1} )
\\
&=
- \frac{1}{1+r} \textrm{tr}\left[ \bm{E} - \frac{1}{r}( \frac{1}{r}\bm{E}+(\bm{C}_{v}-\bm{E})\bm{J}')^{-1} \right]
\\
&=
\frac{1}{1+r} \textrm{tr}\left[ -\bm{E} + ( \bm{E} - r(\bm{E} - \bm{C}_{v})\bm{J}' )^{-1} \right].
\end{align}
Here, we define
$\operatorname{diag}(\bm{\lambda})$ as the diagonal matrix which aligns the elements of a vector $\bm{\lambda}$ in diagonal elements.
\begin{align}
\therefore
\Delta \Msumu
&=
\frac{1}{1+r} \textrm{tr}\left[ -\bm{E} + ( \bm{E} - \frac{r}{1+r}(\bm{E} - \bm{C}_{v}) \bm{J} )^{-1} \right]
\\
&=
\frac{1}{1+r} \textrm{tr}\left[ -\bm{E} + ( \bm{E} - \frac{r}{1+r}\times \bm{P}_{(\bm{E}-C)\bm{J}}^{\top}~\operatorname{diag}(\alpha_i) ~\bm{P}_{(\bm{E}-C)\bm{J}} )^{-1} \right]
\\
&=
\frac{1}{1+r} \textrm{tr}\left[ -\bm{E} +
    ( \bm{E} - \frac{r}{1+r}\times \operatorname{diag}(\alpha_i))^{-1}
\right]\\
&=
\frac{1}{1+r} 
\sum_{i=1}^{K}
\frac{r\alpha_i}{1+r- r\alpha_i},\label{ASD}
\end{align}
where $ \alpha_i ,~ (i=1,...,K)$ are
the eigenvalues of matrix $ (\bm{E}-\bm{C}_{v}) \bm{J} $ 
and $\bm{P}_{(\bm{E}-C)\bm{J}}$ is the matrix for diagonalizing $(\bm{E}-\bm{C}_v)\bm{J}$.
We call Eq.~(\ref{ASD}) as an analytical solution calculated from the difference of MC (ASD).

Here we explain the result obtained in Fig.~\ref{fig:autocor2}C, showing that the MC are independent of frequency $f$ of the sinusoidal noise.
Combined the correlation matrix (Eq.~\ref{eq:eigens_Sin}) and ASD (Eq.~\ref{ASD}),
because, regarding the noise, the MC is determined only by $\lambda[\bm{C}_v]_i$ which are independent of $f$.

Under the assumption that $N$ is sufficiently large,
we derive $\Msumu$ using ASD.
Because $\alpha_i = 1-\lambda[\bm{C}_v]_i$, we obtain
\begin{align}
\Msumu
&=
\frac{1}{1+r} \textrm{tr}\left[ \bm{I} \right] + \Delta \Msumu \\
&=
\frac{1}{1+r} ( N +
    \sum_{i=1}^{K} 
    \frac{r(1-\lambda[\bm{C}_v]_i)}{ 1 + r - r(1-\lambda[\bm{C}_v]_i)}
    )\\
&=
\frac{1}{1+r} ( N +
    r \sum_{i=1}^{N} 
    \frac{ 1 - \lambda[\bm{C}_v]_i }{ 1 + r \lambda[\bm{C}_v]_i}
    )\\
&=
\sum_{i=1}^{N}
\frac{1}{ 1 + r \lambda[\bm{C}_v]_i}.\label{MC_diff_large_func}
\end{align}

\begin{figure}[t]
    \centering
    \includegraphics[width=0.7\linewidth]{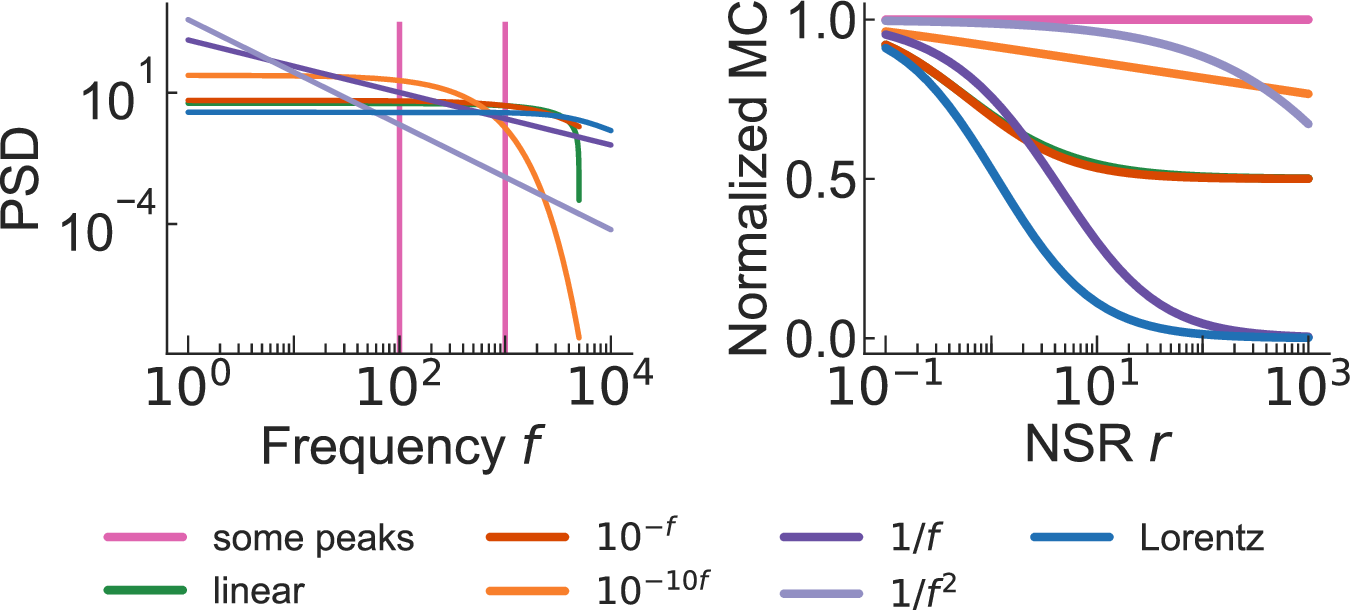}
    \caption{Correspondence between PSD and normalized MC.
    The left panel shows PSD, whose horizontal axis is frequency $f$.
    The right panel shows the normalized MC ($\Msumu/N$), whose horizontal axis is NSR $r$.
    and the color indicates different types of PSD.
    Note that we employ $N=10^4$.
    The PSDs are linear,  exponential ($10^{- \alpha f}$), $1/f^\beta$, and Lorentzian ($1/(1+(f/N_w)^2)$), where $\alpha$ ($=1, 10$), $\beta$ ($=1, 2$), and $N_w$ ($=5\times10^3)$ are fixed values.
    The lacked points indicate $0$ values.
    }\label{fig:MC_PSDs}
\end{figure}

To demonstrate that the PSD $\lambda[\bm{C}_v]_i$ and NSR $r$ of $v$ changes the MC,
we numerically calculated Eq.~(\ref{MC_diff_large_func}) using several types of noise 
(some peaks, linear, exponential, $1/f$-like, and Lorentzian)
, as shown in Fig.~\ref{fig:MC_PSDs}. 
The normalized MC ($\Msumu/N$) shows various disturbance effects dependent on their shapes of PSD.

\begin{thm}
Minimum value of $\Msumu$:
\begin{align}
    \Msumu \geq \frac{1}{1+r}N.
\end{align}
\end{thm}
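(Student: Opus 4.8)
The plan is to read off the closed form for $\Msumu$ already obtained in the large-$N$ regime, Eq.~(\ref{MC_diff_large_func}),
\begin{align}
\Msumu = \sum_{i=1}^{N} \frac{1}{1 + r\,\lambda[\bm{C}_v]_i},
\end{align}
and to treat its right-hand side as a sum of a fixed convex function of the eigenvalues $\lambda[\bm{C}_v]_i$, which are constrained both in sign and in total. Because $\bm{C}_v$ is semi-positive definite (as established via its Cholesky factorisation earlier in the excerpt), every eigenvalue obeys $\lambda[\bm{C}_v]_i \geq 0$, so with $r>0$ each summand is well defined and the whole expression is amenable to a convexity bound.

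First I would isolate the scalar map $f(x)=\tfrac{1}{1+rx}$ on $[0,\infty)$ and certify its strict convexity there by the computation $f''(x)=\tfrac{2r^2}{(1+rx)^3}>0$. Second, I would pin down the single remaining ingredient, the normalisation $\sum_{i=1}^{N}\lambda[\bm{C}_v]_i = N$; this is exactly the trace identity of Eq.~(\ref{trace-Cv_sum-PSD}), $\operatorname{tr}(\bm{C}_v)=\sum_i\lambda[\bm{C}_v]_i$, in the regime underlying Eq.~(\ref{MC_diff_large_func}), where the projector $\bm{J}$ is full and the relevant eigenvalues number $N$. Consequently the eigenvalues have arithmetic mean exactly $1$.

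The inequality then drops out of a supporting-line estimate, which I favour over a bare appeal to Jensen because it exposes the role of the linear constraint. Convexity gives, for every $x\geq 0$,
\begin{align}
f(x)\;\geq\; f(1)+f'(1)(x-1)=\frac{1}{1+r}-\frac{r}{(1+r)^2}(x-1);
\end{align}
summing over $i=1,\dots,N$ and inserting $\sum_i\lambda[\bm{C}_v]_i=N$ annihilates the linear term, leaving
\begin{align}
\Msumu=\sum_{i=1}^{N}f\big(\lambda[\bm{C}_v]_i\big)\;\geq\;\frac{N}{1+r}-\frac{r}{(1+r)^2}\Big(\sum_{i=1}^{N}\lambda[\bm{C}_v]_i-N\Big)=\frac{N}{1+r}.
\end{align}
Strict convexity further shows equality holds iff every $\lambda[\bm{C}_v]_i=1$, i.e.\ $\bm{C}_v=\bm{E}$, the white-noise case, so the bound is tight and flags uncorrelated noise as the memory-minimising (worst-case) noise.

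I expect the only genuine obstacle to be the normalisation bookkeeping rather than the estimate itself: one must be sure that the sum in Eq.~(\ref{MC_diff_large_func}) runs over exactly $N$ eigenvalues whose total is $N$, and not over the $K$ eigenvalues of the $K\times K$ matrix $\bm{C}_v$. This hinges on the large-$N$ identification $\alpha_i=1-\lambda[\bm{C}_v]_i$ together with $\operatorname{tr}(\bm{C}_v)=N$; once that is in hand, the convexity argument above is immediate and requires no further computation.
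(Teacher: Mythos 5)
Your proof is correct, but it takes a genuinely different route from the paper. The paper treats the bound as a constrained minimization of $y=\sum_i f(x_i)$, $f(x)=\tfrac{1}{1+rx}$, subject to $\sum_i x_i$ fixed, and argues by an iterative smoothing construction: it pairs the largest remaining value with the smallest, replaces each pair by its average (which does not increase $y$, by the stationarity computation on $f(x_i)+f(C-x_i)$), and proves by induction that the maximum spread satisfies $\epsilon_n\leq \epsilon_1/2^{n-1}\to 0$, so the minimizer is the uniform configuration $x_i=1$, giving the value $\tfrac{1}{1+r}$ per term. Your supporting-line estimate $f(x)\geq f(1)+f'(1)(x-1)$ on $[0,\infty)$, combined with the trace normalization so that the linear term telescopes away, reaches the same conclusion in one step. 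Your version is shorter and tighter on rigor: the paper's argument implicitly assumes the minimum is attained and that the limit of its averaging process realizes it (a compactness/continuity point it never addresses), whereas the tangent-line bound needs no existence argument and, via strict convexity, immediately delivers the equality case $\bm{C}_v=\bm{E}$, i.e.\ white noise as the memory-minimizing noise --- a fact the paper states only as an interpretive remark. What the paper's construction buys instead is an explicit equalization dynamics on the eigenvalue profile, which makes visible \emph{how} spreading the PSD away from uniform raises $\Msumu$, a theme reused in its second theorem comparing two noises. One caveat you rightly flag, and which is in fact a wrinkle in the paper itself: the trace identity Eq.~(\ref{trace-Cv_sum-PSD}) is stated over $K$ eigenvalues of the $K\times K$ matrix $\bm{C}_v$ (and the paper's own proof works with $K$ values summing to $K$, concluding $x_i=K/K=1$), while Eq.~(\ref{MC_diff_large_func}) and the theorem are phrased with $N$; your argument goes through verbatim with either index set once the sum of the eigenvalues equals their number, so this is bookkeeping rather than a gap in your reasoning.
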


We consider the minimum problem of $y$:
\begin{gather}
y=\sum_{i=1}^K f(x_i) ~ (f(x):=\frac{1}{1+rx}),
\\
\text{Constraints: }
x_i > 0 ~ (i \in \mathbb{N}, 1\leq i \leq K), ~~ 
-K+\sum_{i=1}^K x_i=0,
\end{gather}
where the restriction of $x_i$ is imposed by Eq.~(\ref{trace-Cv_sum-PSD}).
We prove this theorem by proving that $\Msumu$ holds the minimum value when
\begin{align}
    x_i =1 , ~~  (i \in \mathbb{N}, 1\leq i \leq K). \label{condition of min y}
\end{align}
To prove this, we define the maximum difference $\epsilon = \max_{i,j}{(|x_i - x_j|)}$ among the set $\lbrace (x_i, x_j)~|~i,j \in \mathbb{N}, 1\leq i,j \leq K\rbrace$.
We prove Eq.~(\ref{condition of min y}) by confirming $\epsilon=0$ when y is minimized.

\begin{proof}
Firstly, we prepare a tool for the proof.
We define an operation that works on 2 values $x_i$ and $x_j$.
These two values are chosen from the set 
$\lbrace x_i~|~i \in \mathbb{N}, 1\leq i \leq K\rbrace$.
In order to find the smaller $y$, we update $x_i$ and $x_j$, 
keeping the sum of $x_i$ and $x_j$ are constant, the value $C$.
\begin{align}
x_i + x_j &= C, ~(i,j \in \mathbb{N}, 1\leq i < j \leq K)\\
\hat{y}
    &=f(x_i) + f(x_j)\\
    &=f(x_i) + f(C-x_i),\\
\frac{d}{dx_i}\hat{y}&= \frac{d}{dx_i}f(x_i) - \frac{d}{dx_i}f(C-x_i)=0,
\end{align}
the value of $\hat{y}$ is minimized when $x_i=x_j=\frac{C}{2}$.
We refer this operation as "min$\hat{y}$" in the below.

Next, we define a set sequence $\lbrace A_k~|~A_k = (x_1^k, x_2^k, \cdots, x_K^k)\rbrace_{k \in \mathbb{N}}$, where as $k$ increments, $y$ decreases.
To minimize $y$, we find the map $G: A_{k+1} = G(A_k)$.
$G$ is defined as follows.
We sort $(x_1^k, x_2^k, \cdots, x_K^k)$ in descending order, then create $({x_1^k}', {x_2^k}', \cdots, {x_K^k}')$ .
We apply min$\hat{y}$ to all 
$({x_i^k}', {x_{K-i}^k}') ~(i \in \mathbb{N}, ~1\leq i \leq K/2)$ and create new set in the below,
\begin{align}
A_{k+1} = 
\begin{cases}
(\frac{{x_1^k}'+{x_K^k}'}{2}, \frac{{x_1^k}'+{x_K^k}'}{2}, \frac{{x_2^k}'+{x_{K-1}^k}'}{2}, \frac{{x_2^k}'+{x_{K-1}^k}'}{2}, \cdots, \frac{{x_{K/2-1}^k}' + {x_{K/2+1}^k}'}{2})
& (K/2 \in \mathbb{N})\\
(\frac{{x_1^k}'+{x_K^k}'}{2}, \frac{{x_1^k}'+{x_K^k}'}{2}, \frac{{x_2^k}'+{x_{K-1}^k}'}{2}, \cdots, \frac{{x_{(K-1)/2}^k}' + {x_{(K+3)/2}^k}'}{2}, {x_{(K+1)/2}^k}')
& ((K-1)/2 \in \mathbb{N})
\end{cases}.
\end{align}
In addition, we define the maximum difference between 2 values among $A_k$ as $\epsilon_k = x_i^k-x_j^k$ to prove the following formula using mathematical induction: 
\begin{align}
    \epsilon_n \leq \epsilon_1/2^{n-1} ~(n \in \mathbb{N}).
\end{align}
We show this is true for $m=1$,
\begin{align}
    \epsilon_1 \leq \epsilon_1/2^{1-1}.
\end{align}
Assuming when $n=k$, the below holds
\begin{align}
    \epsilon_k \leq \epsilon_1/2^{k-1},
\end{align}
With an odd $K$,
both the largest and the smallest value in set $A_{k+1}$ could be $x_{(K+1)/2}^k$.
In other cases, the largest (smallest) value is $(x_i^k+x_j^k)/2$ ($(x_l^k+x_m^k)/2$), where these values hold
$x_i^k-x_l^k \geq 0$,
$x_m^k-x_j^k \geq 0$, and
$x_i^k-x_k^k + x_l^k-x_j^k \leq \epsilon_k$.
\begin{align}
&\epsilon_{k+1} = \nonumber\\
&\begin{cases}
|(x_i^k-x_l^k)/2 + (x_m^k-x_j^k)/2| \leq (x_i^k-x_l^k + x_m^k-x_j^k)/2 \leq \epsilon_k/2.
& 
    (K\equiv 0\mod 2)
\\
x_{(K+1)/2}^k - (x_l^k+x_m^k)/2 \leq x_l^k - (x_l^k+x_m^k)/2 
\leq (x_l^k-x_m^k)/2 \leq \epsilon_k/2
& 
    (K\equiv 1\mod 2 \text{ and } \max(A_{k+1})=x_{(K+1)/2}^k ) \\
(x_i^k+x_j^k)/2 - x_{(K+1)/2}^k \leq (x_i^k+x_j^k)/2 - x_j^k 
\leq (x_i^k-x_j^k)/2 \leq \epsilon_k/2
& 
    (K\equiv 1\mod 2 \text{ and } \min(A_{k+1})=x_{(K+1)/2}^k )
\end{cases} \nonumber    
\end{align}
\begin{gather}
\therefore
\epsilon_{k+1} \leq \epsilon_1/2^{k}.
\end{gather}
Therefore, we obtain
\begin{align}
    \epsilon_n \leq \epsilon_1/2^{n-1}, ~\lim_{n \to \infty} \epsilon_n &= 0.
\end{align}
This means the minimum difference among $(x_1, x_2, \cdots, x_K)$ resulted in $0$
when $y$ is minimized.
All $x_i$ equal to $K/K=1$. This corresponds to the correlation matrix with no correlation. Therefore the autocorrelation of noise enhances MC than one with no correlation.
\end{proof}

\begin{thm}
The order of $\Msumu$ between different noises.
\\
Considering 2 noises $v_1$ and $v_2$, whose PSDs are $\lambda[\bm{C}_{v_1}]_i$ and $\lambda[\bm{C}_{v_2}]_i$, the sorted series of $\lambda[\bm{C}_{v_1}]_i$  and $\lambda[\bm{C}_{v_2}]_i$ in descending order are newly defined as $\hat{\lambda}[\bm{C}_{v_1}]_i$  and $\hat{\lambda}[\bm{C}_{v_2}]_i$.
When
$\hat{\lambda}[\bm{C}_{v_1}]_i \geq \hat{\lambda}[\bm{C}_{v_2}]_i ~ (1\leq i\leq k_K)$
and 
$\hat{\lambda}[\bm{C}_{v_1}]_i \leq \hat{\lambda}[\bm{C}_{v_2}]_i ~ (k_K+1 \leq i\leq K)$,
$\Msumu$ of each noise, that are ${\Msumu}^{v_1}$ and ${\Msumu}^{v_2}$ indicates the order,
\begin{align}
    {\Msumu}^{v_1} \geq {\Msumu}^{v_2}.\label{order_Csumu_decrease_lam}
\end{align}
\end{thm}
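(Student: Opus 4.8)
The plan is to read Eq.~(\ref{MC_diff_large_func}) as a separable sum $\Msumu = \sum_i f(\lambda[\bm{C}_v]_i)$ of the single scalar function $f(x) = 1/(1+rx)$ evaluated at the PSD values, and then to exploit two facts: that $f$ is convex and decreasing, and that the single-crossing hypothesis on the sorted spectra is exactly a \emph{majorization} relation. Since the summand depends on the spectrum only through an unordered sum, I may freely replace each $\lambda[\bm{C}_v]_i$ by its descending-sorted version $\hat{\lambda}[\bm{C}_v]_i$ without altering $\Msumu$. Note that $f'(x) = -r/(1+rx)^2 < 0$ and $f''(x) = 2r^2/(1+rx)^3 > 0$, so $f$ is strictly convex on $(0,\infty)$; thus $\Msumu$ is a Schur-convex functional of the spectrum, and the statement is an instance of Karamata's inequality.

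First I would convert the hypothesis into a majorization statement. Writing $a_i = \hat{\lambda}[\bm{C}_{v_1}]_i$, $b_i = \hat{\lambda}[\bm{C}_{v_2}]_i$, $d_i = a_i - b_i$, and $S_k = \sum_{i=1}^{k} d_i$, the trace identity Eq.~(\ref{trace-Cv_sum-PSD}) forces $\sum_i a_i = \sum_i b_i = K$, so $S_K = 0$. For $k \le k_K$ every $d_i \ge 0$, hence $S_k \ge 0$; for $k > k_K$ I would write $S_k = -\sum_{i=k+1}^{K} d_i$ and use $d_i \le 0$ on that range to obtain $S_k \ge 0$ again. Hence $S_k \ge 0$ for all $k$ with $S_K = 0$, i.e. $(a_i)$ majorizes $(b_i)$.

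Second I would establish ${\Msumu}^{v_1} - {\Msumu}^{v_2} = \sum_i [f(a_i) - f(b_i)] \ge 0$ by a self-contained Abel-summation argument, which is just a direct proof of Karamata's inequality tailored to this $f$. Introducing the secant slopes $c_i = [f(a_i) - f(b_i)]/(a_i - b_i)$, with the convention $c_i = f'(a_i)$ when $a_i = b_i$, so that $f(a_i) - f(b_i) = c_i d_i$, summation by parts gives $\sum_i c_i d_i = \sum_{k=1}^{K-1} (c_k - c_{k+1}) S_k$ after using $S_0 = S_K = 0$. Since $S_k \ge 0$ is already known, it suffices to show that the slopes are non-increasing, $c_k \ge c_{k+1}$.

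The main obstacle is precisely this monotonicity $c_k \ge c_{k+1}$. I would obtain it from the standard property that, for a convex $f$, the difference quotient $g(u,w) = [f(u)-f(w)]/(u-w)$ is non-decreasing in each of its two arguments. Because both sorted spectra are descending, $a_{k+1} \le a_k$ and $b_{k+1} \le b_k$, so $c_{k+1} = g(a_{k+1}, b_{k+1}) \le g(a_k, b_{k+1}) \le g(a_k, b_k) = c_k$. Combined with $S_k \ge 0$, every term of $\sum_{k=1}^{K-1}(c_k - c_{k+1}) S_k$ is nonnegative, yielding ${\Msumu}^{v_1} \ge {\Msumu}^{v_2}$, which is Eq.~(\ref{order_Csumu_decrease_lam}). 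The only care needed is the degenerate case $a_i = b_i$, absorbed by the $f'(a_i)$ convention, and the possible overlap of consecutive secant intervals, which the two-argument monotonicity of $g$ accommodates directly without requiring the intervals to be disjoint.
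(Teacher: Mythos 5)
Your proof is correct, and it takes a genuinely different route from the paper's. You recognize the single-crossing hypothesis together with the trace normalization of Eq.~(\ref{trace-Cv_sum-PSD}) (both spectra sum to $K$) as a majorization relation, and then establish the conclusion in one shot: the partial sums $S_k=\sum_{i\leq k}\bigl(\hat{\lambda}[\bm{C}_{v_1}]_i-\hat{\lambda}[\bm{C}_{v_2}]_i\bigr)$ are nonnegative and vanish at $k=K$, the secant slopes of the convex, decreasing $f(x)=1/(1+rx)$ are non-increasing along the descending-sorted spectra, and Abel summation makes every term of $\sum_k (c_k-c_{k+1})S_k$ nonnegative — a self-contained Karamata argument, with a clean convention for ties $a_i=b_i$. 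The paper instead uses the same convexity fact in its two-point form, $f(\lambda_1)+f(\lambda_2)\leq f(\lambda_1-d)+f(\lambda_2+d)$ for $\lambda_1\leq\lambda_2$ and $0\leq d<\lambda_1$, and converts the $v_2$-spectrum into the $v_1$-spectrum by an iterated sequence of such mass transfers, carrying a residual term $f(\hat{\lambda}[\bm{C}_{v_2}]_{k_K+j_1+1}-d)$ that tracks the accumulated difference and is closed out only because the traces are equal. The ingredients are thus identical — convexity of $f$, the single crossing, equal traces — but your organization buys three things: it replaces the paper's delicate index and termination bookkeeping (which is left somewhat informal in the paper's write-up) with a two-line summation-by-parts identity; it handles degenerate equal entries explicitly rather than tacitly; and it generalizes immediately, both to arbitrary convex summands and to spectra that cross more than once, since all you actually use is the weaker majorization condition $S_k\geq 0$ rather than the sign pattern of the individual differences. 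The paper's transfer scheme is more elementary in flavor (no summation by parts, only repeated application of the two-point inequality — in effect a sequence of Robin Hood transfers), but it is tied to the one-crossing structure. You also note correctly that sorting is harmless because $\Msumu$ is a symmetric function of the spectrum, a point the paper uses without comment.
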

\begin{proof}
To compare 
${\Msumu}^{v_1}=\sum_{i=1}^{N} f(\lambda[\bm{C}_{v_1}]_i)$ and
${\Msumu}^{v_2}=\sum_{i=1}^{N} f(\lambda[\bm{C}_{v_2}]_i)$,
we use
\begin{align}
    f(\lambda_1-d)-f(\lambda_1) \geq f(\lambda_2)-f(\lambda_2+d),\\
    \therefore
    f(\lambda_1) + f(\lambda_2) \leq f(\lambda_1-d) + f(\lambda_2+d),
\end{align}
where $\lambda_1 \leq \lambda_2$ and $0 \leq d <\lambda_1 ~ (d \in \mathbb{R})$.
This property of $f(x)$ allows us to compare summations of $f(\lambda[\bm{C}_{v_1}]_i)$ with respect to $i$, where the average of $\lambda[\bm{C}_{v_1}]_i$ are constant, 
such as $(\lambda_1, \lambda_2)$ and $(\lambda_1-d, \lambda_2+d)$.
Firstly, we choose $\lambda_1=\hat{\lambda}[\bm{C}_{v_2}]_1$ and $\lambda_2=\hat{\lambda}[\bm{C}_{v_2}]_{k_K+1}$.
Using
$\hat{\lambda}[\bm{C}_{v_1}]_1\geq \hat{\lambda}[\bm{C}_{v_2}]_1$ and 
$\hat{\lambda}[\bm{C}_{v_2}]_{k_K+1}\geq \hat{\lambda}[\bm{C}_{v_1}]_{k_K+1}$,

\begin{align}
    f(\hat{\lambda}[\bm{C}_{v_2}]_1) + f(\hat{\lambda}[\bm{C}_{v_2}]_{k_K+1})
    \leq
    f(\hat{\lambda}[\bm{C}_{v_2}]_1 + d) + f(\hat{\lambda}[\bm{C}_{v_2}]_{k_K+1} - d),
\end{align}
if $\hat{\lambda}[\bm{C}_{v_2}]_{k_K+1} - \hat{\lambda}[\bm{C}_{v_1}]_{k_K+1} \leq
\hat{\lambda}[\bm{C}_{v_1}]_1 - \hat{\lambda}[\bm{C}_{v_2}]_1$,
we choose
$d=\hat{\lambda}[\bm{C}_{v_2}]_{k_K+1} - \hat{\lambda}[\bm{C}_{v_1}]_{k_K+1}$, 

\begin{align}
    f(\hat{\lambda}[\bm{C}_{v_2}]_1) + f(\hat{\lambda}[\bm{C}_{v_2}]_{k_K+1}) 
    &\leq
    f(\hat{\lambda}[\bm{C}_{v_2}]_1 + d) + f(\hat{\lambda}[\bm{C}_{v_1}]_{k_K+1}).
\end{align}
While $\sum_{i=1}^{j_1} d_i \leq 
\hat{\lambda}[\bm{C}_{v_1}]_1 - \hat{\lambda}[\bm{C}_{v_2}]_1$ 
where $d_i=\hat{\lambda}[\bm{C}_{v_2}]_{k_K+i} - \hat{\lambda}[\bm{C}_{v_1}]_{k_K+i}$, 
this operation is repeated:

\begin{align}
    f(\hat{\lambda}[\bm{C}_{v_2}]_1) + \sum_{i=1}^{j_1}f(\hat{\lambda}[\bm{C}_{v_2}]_{k_K+i}) 
    &\leq
    f(\hat{\lambda}[\bm{C}_{v_2}]_1 + d) + \sum_{i=1}^{j_1}f(\hat{\lambda}[\bm{C}_{v_1}]_{k_K+i}),
    ~
    d = \sum_{i=1}^{j_1} d_i.
\end{align}
Subsequently, when $\sum_{i=1}^{j_1+1} d_i > 
\hat{\lambda}[\bm{C}_{v_1}]_1 - \hat{\lambda}[\bm{C}_{v_2}]_1$,
we choose
$d=\hat{\lambda}[\bm{C}_{v_1}]_1 - \hat{\lambda}[\bm{C}_{v_2}]_1 - \sum_{i=1}^{j_1} d_i$, 

\begin{align}
    f(\hat{\lambda}[\bm{C}_{v_2}]_1) + \sum_{i=1}^{j_1}f(\hat{\lambda}[\bm{C}_{v_2}]_{k_K+i}) 
    &\leq
    f(\hat{\lambda}[\bm{C}_{v_1}]_1) + \sum_{i=1}^{j_1}f(\hat{\lambda}[\bm{C}_{v_1}]_{k_K+i})
    + f(\hat{\lambda}[\bm{C}_{v_2}]_{k_K +j_1 +1} - d),
\end{align}
While $\sum_{i=1}^{j_1+1} d_i > 
\sum_{i=1}^{j'_1} d'_i$ where $d'_i=\hat{\lambda}[\bm{C}_{v_1}]_i -\hat{\lambda}[\bm{C}_{v_2}]_i$,
this operation is repeated:

\begin{align}
    \sum_{i=1}^{j'_1}f(\hat{\lambda}[\bm{C}_{v_2}]_i) + \sum_{i=1}^{j_1}f(\hat{\lambda}[\bm{C}_{v_2}]_{k_K+i}) 
    &\leq
    \sum_{i=1}^{j'_1}f(\hat{\lambda}[\bm{C}_{v_1}]_i) + \sum_{i=1}^{j_1}f(\hat{\lambda}[\bm{C}_{v_1}]_{k_K+i})
    + f(\hat{\lambda}[\bm{C}_{v_2}]_{k_K +j_1 +1} - d), \label{Eq_meaning_d}
    \\
    d=\sum_{i=1}^{j'_1} d'_i - \sum_{i=1}^{j_1} d_i.
\end{align}
We conduct these operations for $j_i$ and $j'_i$ in the range of $\sum_{i}j_i=K$ and $\sum_{i}j'_i=K$.
The term $f(\hat{\lambda}[\bm{C}_{v_2}]_{k_K +j_1 +1} - d)$ in Eq.~(\ref{Eq_meaning_d}) accumulate the difference between $\sum_{i=1}\hat{\lambda}[\bm{C}_{v_1}]_i$ 
and $\sum_{i=1}\hat{\lambda}[\bm{C}_{v_2}]_i$. 
According to Eq.~(\ref{trace-Cv_sum-PSD}), we know  $\sum_{i=1}^N\hat{\lambda}[\bm{C}_{v_1}]_i = \sum_{i=1}^N\hat{\lambda}[\bm{C}_{v_2}]_i = N$, which ensure that the difference would disappear.
After completing the operations, we obtain
\begin{align}
    \sum_{i=1}^{K}f(\hat{\lambda}[\bm{C}_{v_2}]_1)
    &\leq
    \sum_{i=1}^{k_K-1}f(\hat{\lambda}[\bm{C}_{v_1}]_i ) + \sum_{i=1}^{K-k_K} f(\hat{\lambda}[\bm{C}_{v_1}]_{k_K+i})
    \\
    &+ f(\sum_{i=1}^{K}\hat{\lambda}[\bm{C}_{v_2}]_{k_K+i} 
    - \sum_{i=1}^{K-k_K} \hat{\lambda}[\bm{C}_{v_1}]_{k_K+i}
    - \sum_{i=1}^{k_K-1}\hat{\lambda}[\bm{C}_{v_1}]_i
    )\\
    &=
    \sum_{i=1}^{k_K-1}f(\hat{\lambda}[\bm{C}_{v_1}]_i ) + \sum_{i=1}^{K-k_K} f(\hat{\lambda}[\bm{C}_{v_1}]_{k_K+i} + f(\hat{\lambda}[\bm{C}_{v_1}]_{k_K})
    \\
    &=
    \sum_{i=1}^{K}f(\hat{\lambda}[\bm{C}_{v_1}]_i ).
\end{align}
This proves Eq.~(\ref{order_Csumu_decrease_lam}).
\end{proof}

\section{MF derived from the state expansion of orthogonal basis}
\subsection{Echo state network with linear activation function}
Here, we derive the MF based on the state expansion. In this section, we assume that the input and noise share the common input weight $\bm{w}$.
Using the matrix $\bP$ which is derived from the eigendecomposition $\bW = \bP\bSigma\bP^{-1}$, we define a new variable $\bz_t = \bP^{-1}\bx_t$ and then 
\begin{align}
    \bP^{-1}\bx_{t+1} &= \bSigma \bP^{-1} \bx_t + \bP^{-1} \bm{w} u_{t+1} + \bP^{-1} \bm{w} v_{t+1} \nonumber\\
    \bz_{t+1} &= \bSigma \bz_t + \bP^{-1} \bm{w} u_{t+1} + \bP^{-1} \bm{w} v_{t+1}. \label{eq:y_{t+1}}
\end{align}
Using Eq.~(\ref{eq:y_{t+1}}) repeatedly, we obtain 
\begin{align}
    \bz_{t} &= \sum_{k=0}^\infty (\bSigma^k \bP^{-1} \bm{w} u_{t-k} + \bSigma^k \bP^{-1} \bm{w} v_{t-k}) \\
    \bx_{t} &= \sum_{k=0}^\infty (\bP \bSigma^k \bP^{-1} \bm{w} u_{t-k} + \bP \bSigma^k \bP^{-1} \bm{w} v_{t-k})
\end{align}
Note that, since $\bSigma = {\rm diag}(\lambda_1, \lambda_2, \ldots, \lambda_N)$, $\bSigma^k = {\rm diag}(\lambda_1^k, \lambda_2^k, \ldots, \lambda_N^k)$. 
\begin{align}
    \bX &= \sum_{k=0}^\infty (\bP \bSigma^k \bP^{-1} \bm{w} \bU_k + \bP \bSigma^k \bP^{-1} \bm{w} \bV_k)
\end{align}
Using SVD, we can decompose the state matrix $\bX$ into $\bX=\bPhi\bOmega\bPsi^\top$. 
Accordingly, 
\begin{align}
    \bPsi^\top &= \bOmega^{-1} \bPhi^\top \sum_{k=0}^\infty (\bP \bSigma^k \bP^{-1} \bm{w} \bU_k + \bP \bSigma^k \bP^{-1} \bm{w} \bV_k)
\end{align}
The normalized linearly-independent state $\hat{\bx}_t=[x_{1,t}\cdots x_{N,t}]^\top$ is described by
\begin{align}
    \hat{\bx}_t &= \bOmega^{-1} \bPhi^\top \sum_{k=0}^\infty (\bP \bSigma^k \bP^{-1} \bm{w} u_{t-k} + \bP \bSigma^k \bP^{-1} \bm{w} v_{t-k}).
\end{align}
Note that the inner product is defined by 
\begin{align}
    \left<\by\bz\right>=\sum_{t=1}^T \by_t^\top \bz_t
\end{align}
and the $i$th state time-series vector $\hat{\bx}_{i}=[\hat{x}_{i,1}\cdots\hat{x}_{i,T}]^\top$ satisfies 
\begin{align}
    \left<\hat{\bx}_i \hat{\bx}_j\right> = 
    \begin{cases}
        1 & (i=j) \\
        0 & (i\neq j)
    \end{cases}. 
\end{align}

This separation of $v_t$ enables us to define a new orthogonal basis composing $v$:
\begin{align}
v_t 
&= n_t + \sum_i^{N_a} c_{i,1}\sin{2\pi f_it} + c_{i,2}\cos{2\pi f_it},
\end{align}
where $n_t$ represents the random component, and its delayed series is linearly independent. The left elements compose $a_t$ and have $2N_a$ bases.

\subsection{MF of typical cases}
Here we show the MF with three types of noise: (1) i.i.d. noise, (2) sinusoidal noise, and (3) autocorrelated noise. 

\subsubsection{I.i.d. noise}
Firstly, we assume that $u_t$ is an i.i.d. input with mean of $0$ and $v_t$ is an i.i.d. noise with mean of $0$. 

\begin{align}
    \hat{\bx}_{t} &= 
    \bOmega^{-1} \bPhi^\top \sum_{k=0}^\infty \left( \sqrt{\left<u^2\right>} \bP \bSigma^k \bP^{-1} \bm{w} \hat{u}_{t-k} + \sqrt{\left<v^2\right>} \bP \bSigma^k \bP^{-1} \bm{w} \hat{v}_{t-k} \right), 
\end{align}
where $\hat{u}_{t-k} = u_{t-k}/\sqrt{\left<u^2\right>}$ and $\hat{v}_{t-k} = v_{t-k}/\sqrt{\left<v^2\right>}$ represent the normalized input and noise, respectively $\left(\left<\hat{u}^2\right>=\left<\hat{v}^2\right>=1\right)$.
Since the MF is equivalent to the square norm of the coefficient vector, the MFs with respect to $u_{t-k}$ and $v_{t-k}$ are described by 
\begin{align}
    M(u_{t-k}) 
    &= \left<u^2\right> \left|\left| \bOmega^{-1} \bPhi^\top \bP \bSigma^k \bP^{-1} \bm{w} \right|\right|^2, \\
    M(v_{t-k}) 
    &= \left<v^2\right> \left|\left| \bOmega^{-1} \bPhi^\top \bP \bSigma^k \bP^{-1} \bm{w} \right|\right|^2, 
\end{align}
respectively. 
The MC is described by 
\begin{align} 
    \Msum &= \sum_{k=0}^\infty \left\{ M(u_{t-k}) + M(v_{t-k}) \right\} \nonumber\\
    &= \left( \left<u^2\right> + \left<v^2\right> \right) \sum_{k=0}^\infty \left|\left| \bOmega^{-1} \bPhi^\top \bP \bSigma^k \bP^{-1} \bm{w} \right|\right|^2
\end{align}
and holds the following relation: 
\begin{align}
    \Msum = N. 
\end{align}

\subsubsection{Sinusoidal noise}
The inner product of $\by_{i}=[y_{i,1} ~ \cdots ~ y_{i,T}]^\top$ is defined as 
\begin{align}
    I_{ij} = \by_i^\top \cdot \by_j = \sum_{t=1}^T y_{i,t} y_{j,t}. \label{eq:inner_product}
\end{align}
Bases $\by_{i}$ are orthogonalized such that 
\begin{align}
    I_{ij} = 
    \begin{cases}
        1 & (i=j) \\
        0 & (i\neq j)
    \end{cases}. 
\end{align}
We assume that $v_t = A \sin(\omega t)$. 
According to the inner product in Eq.~(\ref{eq:inner_product}), delayed noise $v_{t-k}$ is composed of only two orthonormal bases of $\hat{v}_{1,t} = \sqrt{2/T} \sin(\omega t)$ and $\hat{v}_{2,t} = \sqrt{2/T} \cos(\omega t)$ because $v_{t-k}$ can be decomposed into 
\begin{align*}
    v_{t-k} &= A \sin(\omega (t-k)) \\
    &= A \cos(\omega k) \sin(\omega t) - A \sin(\omega k) \cos(\omega t) \\
    &= \sqrt{\left<v^2\right>} \cos(\omega k) \hat{v}_{1,t} - \sqrt{\left<v^2\right>} \sin(\omega k) \hat{v}_{2,t}, 
\end{align*}
where $\left<v^2\right> = TA^2/2$. 
As the delayed input series $\{u_{t-k}\}$ and $\{\hat{v}_{1,t}, \hat{v}_{2,t}\}$ are orthogonal from each other, we can reduce the state to 
\begin{align}
    \hat{\bx}_{t} 
    &= \sum_{k=0}^\infty \bOmega^{-1} \bPhi^\top \bP \bSigma^k \bP^{-1} \bm{w} u_{t-k} + \sum_{k=0}^\infty \bOmega^{-1} \bPhi^\top \bP \bSigma^k \bP^{-1} \bm{w} \cdot A \sin(\omega(t - k) ) \nonumber\\
    &= \sum_{k=0}^\infty \left[ \sqrt{\left<u^2\right>} \bOmega^{-1} \bPhi^\top \bP \bSigma^k \bP^{-1} \bm{w} \right] \hat{u}_{t-k} \nonumber\\
    &+ \left[ \sqrt{\left<v^2\right>} \sum_{k=0}^\infty \bOmega^{-1} \bPhi^\top \bP \bSigma^k \bP^{-1} \bm{w} \cos(\omega k) \right] \hat{v}_{1,t} 
    - \left[ \sqrt{\left<v^2\right>} \sum_{k=0}^\infty \bOmega^{-1} \bPhi^\top \bP \bSigma^k \bP^{-1} \bm{w} \sin(\omega k) \right] \hat{v}_{2,t} 
\end{align}
Since the MF is equivalent to the square norm of the coefficient vector, the MFs with respect to $\hat{u}_{t-k}$, $\hat{v}_{1,t}$, and $\hat{v}_{2,t}$ are described by 
\begin{align}
    M(\hat{u}_{t-k}) 
    &= \left<u^2\right> \left|\left| \bOmega^{-1} \bPhi^\top \bP \bSigma^k \bP^{-1} \bm{w} \right|\right|^2, \\
    M(\hat{v}_{1,t}) 
    &= \left<v^2\right> \left|\left| \sum_{k=0}^\infty \bOmega^{-1} \bPhi^\top \bP \bSigma^k \bP^{-1} \bm{w} \cos(\omega k) \right|\right|^2, \\
    M(\hat{v}_{2,t}) 
    &= \left<v^2\right> \left|\left| \sum_{k=0}^\infty \bOmega^{-1} \bPhi^\top \bP \bSigma^k \bP^{-1} \bm{w} \sin(\omega k) \right|\right|^2, 
\end{align}
respectively. 

\subsection{MF of autocorrelated noise}
Finally, we show the MF with general autocorrelated noise.

\subsubsection{Decomposition with current random component}
We assume that an autocorrelated noise $v_t$ can be additively decomposed into a random element $n_t$ and an autocorrelated element $a_t$ as follows: 
\begin{align*}
    v_{t} &= c_n n_{t} + c_a a_{t}, 
\end{align*}
where the two elements have time averages $\left< n_t\right>=\left< 
a_t\right>=0$, and the autocorrelatoin of $n_t$ and $a_t$, and cross correlation between $n_t$ and $a_t$ are 
\begin{align}
    \left< n_{t-i} n_{t-j} \right> 
    = 
    \begin{cases}
        1 & (i=j) \\
        0 & (i\neq j)
    \end{cases}, ~~
    \left< a_{t}, a_{t} \right> = 1, ~~
    \left< n_{t-i}, a_{t-j} \right> = 0, 
\end{align}
respectively. 
We define orthonormal bases $\tilde{a}_{t-k}$ for the autocorrelated elements $a_{t-k}$ using the Gram-Schmidt orthogonalization as 
\begin{align}
    \hat{a}_{t-k} = a_{t-k} - 
    \sum_{i=0}^{k-1} 
    \left< \tilde{a}_{t-i} {a}_{t-k} \right> \tilde{a}_{t-i}, ~~
    \tilde{a}_{t-k} = \frac{\hat{a}_{t-k}}{||\hat{a}_{t-k}||}.
    \label{gram_schmidt_0}
\end{align}
Therefore, $k$-delayed autocorrealted element $a_{t-k}$ is decomposed into the orthonormal basis $\{\tilde{a}_{t-k}\}$ as 
\begin{align*}
    a_{t-k} &= \sum_{i=0}^k \tilde{c}_{ki} \tilde{a}_{t-i}. 
\end{align*}
According to the inner product in Eq.~(\ref{eq:inner_product}), delayed noise $v_{t-k}$ is composed of orthonormal bases of $n_{t-k}$ and $\{a_{t-i}\}_{i=0}^k$ because $v_{t-k}$ can be decomposed into 
\begin{align*}
    v_{t-k} &= c_n n_{t-k} + \sum_{i=0}^k \tilde{c}_{ki} \tilde{a}_{t-i}. 
\end{align*}
\begin{align}
    \left< n_{t-i} n_{t-j} \right> 
    = 
    \begin{cases}
        1 & (i=j) \\
        0 & (i\neq j)
    \end{cases}, ~~
    \left< \tilde{a}_{t-i}  \tilde{a}_{t-j} \right> 
    = 
    \begin{cases}
        1 & (i=j) \\
        0 & (i\neq j)
    \end{cases}, ~~
    \left< n_{t-i}, \tilde{a}_{t-j} \right> = 0, 
\end{align}
As $\{u_{t-k}\}$, $\{n_{t-k}\}$, and $\{\tilde{a}_{t-k}\}$ are orthogonal from each other, we can reduce the state as 
\begin{align}
    \hat{\bx}_{t} 
    &= \sum_{k=0}^\infty \bOmega^{-1} \bPhi^\top \bP \bSigma^k \bP^{-1} \bm{w} u_{t-k} 
    + \sum_{k=0}^\infty \bOmega^{-1} \bPhi^\top \bP \bSigma^k \bP^{-1} \bm{w} v_{t-k} \nonumber\\
    &= \sum_{k=0}^\infty \sqrt{\left<u^2\right>} \bOmega^{-1} \bPhi^\top \bP \bSigma^k \bP^{-1} \bm{w} \hat{u}_{t-k} 
    + c_n \sum_{k=0}^\infty \bOmega^{-1} \bPhi^\top \bP \bSigma^k \bP^{-1} \bm{w} n_{t-k} \nonumber\\
    &+ \sum_{k=0}^\infty \bOmega^{-1} \bPhi^\top \bP \bSigma^k \bP^{-1} \bm{w} \sum_{i=0}^k \tilde{c}_{ki} \tilde{a}_{t-i} 
    \nonumber\\
    &= \sum_{k=0}^\infty \left[ \sqrt{\left<u^2\right>} \bOmega^{-1} \bPhi^\top \bP \bSigma^k \bP^{-1} \bm{w} \right]\hat{u}_{t-k} 
    + \sum_{k=0}^\infty \left[ c_n \bOmega^{-1} \bPhi^\top \bP \bSigma^k \bP^{-1} \bm{w} \right] n_{t-k} \nonumber\\
    &+ \sum_{k=0}^\infty \sum_{i=0}^k
    \bOmega^{-1} \bPhi^\top \bP \bSigma^k \bP^{-1} \bm{w}
    \tilde{c}_{ki} \tilde{a}_{t-i} 
    \nonumber\\
    &= \sum_{k=0}^\infty \left[ \sqrt{\left<u^2\right>} \bOmega^{-1} \bPhi^\top \bP \bSigma^k \bP^{-1} \bm{w} \right]\hat{u}_{t-k} 
    + \sum_{k=0}^\infty \left[ c_n \bOmega^{-1} \bPhi^\top \bP \bSigma^k \bP^{-1} \bm{w} \right] n_{t-k} \nonumber\\
    &+ \sum_{k=0}^\infty \left[ \sum_{i=k}^\infty \tilde{c}_{ik} \bOmega^{-1} \bPhi^\top \bP \bSigma^i \bP^{-1} \bm{w} \right] \tilde{a}_{t-k}. 
\end{align}
The MFs with respect to $\hat{u}_{t-k}$, $n_{t-k}$, and $a_{t-k}$ are described by 
\begin{align}
    M(\hat{u}_{t-k}) 
    &= \left<u^2\right> \left|\left| \bOmega^{-1} \bPhi^\top \bP \bSigma^k \bP^{-1} \bm{w} \right|\right|^2, \\
    M(n_{t-k}) 
    &= c_n^2 \left|\left| \bOmega^{-1} \bPhi^\top \bP \bSigma^k \bP^{-1} \bm{w} \right|\right|^2, \\
    M(a_{t-k}) 
    &= \left|\left| \sum_{i=k}^\infty \tilde{c}_{ik} \bOmega^{-1} \bPhi^\top \bP \bSigma^i \bP^{-1} \bm{w} \right|\right|^2, 
\end{align}
respectively.

\section{The noise with little disturbance effects}
In the main text, we have seen that, with the bases whose number is finite, the noise has little disturbance effects in infinite-dimensional systems, which indicates that the following equation holds:
\begin{align}
\lim_{N\to \infty}\frac{\Msumu}{N} 
= 
\lim_{N\to \infty}\frac{1}{N}\sum_{i=1}^{N}
\frac{1}{ 1 + r \lambda[\bm{C}_v]_i} = 1. \label{def_little_disturb_unlimited_bases}
\end{align}
Moreover, we prove that, even if the noise with the infinite number of bases could show little disturbance under a certain condition. 
According to the completeness property and Eq.~(\ref{MC_diff_large_func}), 
$\Msumv=N-\Msumu=
\sum_{i=1}^{N}
\frac{r \lambda[\bm{C}_v]_i}{ 1 + r \lambda[\bm{C}_v]_i}
$.
Therefore Eq.~(\ref{def_little_disturb_unlimited_bases}) can be replaced by
\begin{align}
\lim_{N\to \infty}\frac{\Msumv}{N} 
= \lim_{N\to \infty} (1-\frac{\Msumu}{N})
\label{eq:little_inhibitory}
\end{align}
Two cases satisfying Eq.~(\ref{eq:little_inhibitory}) exist: (1) $\lim_{N\to\infty} \Msumv < \infty$, (2) $\lim_{N\to\infty} \Msumv/N=\infty$.
For each of them, we show one typical example.

\subsection{$\lim_{N\to\infty} \Msumv < \infty$}
We consider a typical case:
\begin{align}
\sum_{i=1}^\infty
\frac{r \lambda[\bm{C}_v]_i}{ 1 + r \lambda[\bm{C}_v]_i}
= \lim_{N\to \infty}\Msumv
< \infty . \label{ex_enough_converge_condition}
\end{align}
Note that, when Eq.~(\ref{ex_enough_converge_condition}) holds, it is easily proved that the series $\frac{r \lambda[\bm{C}_v]_n}{ 1 + r \lambda[\bm{C}_v]_n}$ should converge to $0$, which is equivalent to
\begin{align}
    \lim_{n\to \infty} \lambda[\bm{C}_v]_{n} =0
\end{align}
Among several types of ratio test for evaluating infinite sum, we select a representing d’Alembert’s ratio test.
Defining the ratio as $L$, we derive the condition in which $\Msumv$ absolutely converges:
\begin{align}
L 
&=
\lim_{n\to \infty}\frac{r \lambda[\bm{C}_v]_{n+1}}{ 1 + r \lambda[\bm{C}_v]_{n+1}}
\frac{ 1 + r \lambda[\bm{C}_v]_{n}}{r \lambda[\bm{C}_v]_{n}}\\
&=
\lim_{n\to \infty}
\frac{\lambda[\bm{C}_v]_{n+1}}{\lambda[\bm{C}_v]_{n}}
\frac{ 1/r + \lambda[\bm{C}_v]_{n}}{ 1/r + \lambda[\bm{C}_v]_{n+1}}\\
&=
\lim_{n\to \infty}
\frac{\lambda[\bm{C}_v]_{n+1}}{\lambda[\bm{C}_v]_{n}}
\frac{ 1/r + 0}{ 1/r + 0}\\
&=
\lim_{n\to \infty}
\frac{\lambda[\bm{C}_v]_{n+1}}{\lambda[\bm{C}_v]_{n}} < 1.
\end{align}
We can confirm that this condition is independent of $r$. 

\subsection{$\lim_{N\to\infty} \Msumv=\infty$}
Here, we consider one example the case of $1/f$-like noise, and prove that the noise satisfies the condition Eq.~(\ref{def_little_disturb_unlimited_bases}). 
The PSD $S_i$ of $1/f$-like noise is described by
\begin{align}
\log{S_i}=-\beta\log{f_i} + \gamma',\\
\lambda[\bm{C}_v]_i
= S_i
= e^{\gamma'}/f_i^{\beta}
= \gamma/f_i^{\beta},
\end{align}
where $f_i=i$ is frequency, and $\gamma=e^{\gamma'}$.
We conduct d’Alembert’s ratio test on $1/f$-like noise,
\begin{align}
L =
\lim_{n\to \infty}
(\frac{\gamma_N}{f_{n+1}^{\beta}})/
(\frac{\gamma_N}{f_{n}^{\beta}})
=
\lim_{n\to \infty}
(\frac{n}{1+n})^\beta
=
\lim_{n\to \infty}
(\frac{1}{1+\frac{1}{n}})^\beta
=1,
\end{align}
which shows that we cannot prove Eq.~(\ref{eq:little_inhibitory}) using this approach.
Therefore, we prove the condition using another method. 
Firstly, we prove the case of $\beta=1$. 
The memory capacities of $u$ and $v$ become 
\begin{align}
\Msumu 
&= 
2 \times \sum_{i=1}^{N/2}\frac{1}{ 1 + r \lambda[\bm{C}_v]_i}
=
2 \times \sum_{i=1}^{N/2}\frac{1}{ 1 + r \frac{\gamma}{f_i^{\beta}}}
=
2 \times \sum_{i=1}^{N/2}\frac{f_i^{\beta}}{ f_i^{\beta} + r \gamma},
\\
\Msumv 
&=
2 \times \sum_{i=1}^{N/2}\frac{ r \gamma}{ f_i^{\beta} + r \gamma}
=
2 \times \sum_{i=1}^{N/2}\frac{ r \gamma}{ i^{\beta} + r \gamma},
\end{align}
respectively.
Considering the condition of Eq.~(\ref{trace-Cv_sum-PSD}), $\sum_{i=1}^{N} \lambda[\bm{C}_v]_i = 2 \sum_{i=1}^{N/2} \gamma f_i^{-\beta}=N$,
we can calculate
$\gamma
=
\frac{N}{2\sum_{i=1}^{N/2} f_i^{-\beta}}
=
\frac{N}{2\sum_{i=1}^{N/2} i^{-\beta}}
$.
We use the following property:
\begin{align}
\int_n^{n+1}f(x)dx < 
f(n) < 
\int_{n-1}^{n}f(x)dx,
\\
\int_1^{1+N/2}f(x)dx <
\sum_{i=1}^{N/2} f(i) < 
f(0) + \int_{1}^{N/2} f(x)dx,\label{limit_sum_int}
\end{align}
where $f(x)$ is a arbitrary monotonically decreasing function.
If $f(x)=x^{-\beta}$, Eq.~(\ref{limit_sum_int}) becomes
\begin{align}
\int_1^{1+N/2}\frac{1}{x^\beta}dx <
\sum_{i=1}^{N/2} i^{-\beta} < 
1+\int_{1}^{N/2}\frac{1}{x^\beta}dx. \label{limit_sum-i_beta_int}
\end{align}

First, we consider the case $\beta=1$, the order of $\Msumv$ is introduced as follows.
To begin, the range of $\gamma$ is computed using Eq.~(\ref{limit_sum-i_beta_int}):
\begin{align}
\log(1+N/2) < \sum_{i=1}^{N/2} i^{-1} < 1+\log(N/2)
\\
\frac{1}{ 1+\log(N/2)} <
\frac{1}{\sum_{i=1}^{N/2} i^{-1}} <
\frac{1}{\log(1+N/2)}
\\
\frac{1}{2}\frac{N}{ 1+\log(N/2)} <
\gamma <
\frac{1}{2}\frac{N}{\log(1+N/2)}
\end{align}
Subsequently, we define two functions $l(i, N)$ and $g(i, N)$ which represent the functions suppressing from below and above 
on each term of the series $\frac{1}{\frac{i^\beta}{r\gamma} + 1 }$ in the summation of $\Msumv$,
\begin{align}
\Msumv
=
2 \times \sum_{i=1}^{N/2} \frac{ 1 }{ i/(r \gamma) +1}
\\
\frac{i}{\frac{r}{2}\frac{N}{\log(1+N/2)}}+1
<
\frac{i}{r\gamma}+1
<
\frac{i}{\frac{r}{2}\frac{N}{ 1+\log(N/2)}}+1
\\
\frac{1}{\frac{i}{\frac{r}{2}\frac{N}{ 1+\log(N/2)}}+1}
<
\frac{1}{\frac{i}{r\gamma}+1}
<
\frac{1}{\frac{i}{\frac{r}{2}\frac{N}{\log(1+N/2)}}+1}
\\
l(i, N) < \frac{1}{\frac{i}{r\gamma}+1} < g(i, N)
\end{align}
Therefore, $l(i, N)$ and $g(i, N)$ can be defined:
\begin{align}
l(i, N) = \frac{1}{\frac{ 2 i (1+\log(N/2))}{rN}+1}, ~~ 
g(i, N) = \frac{1}{\frac{ 2 i \log(1+N/2)}{rN}+1}
\\
2 \sum_{i=1}^{N/2} l(i, N) < \Msumv < 2 \sum_{i=1}^{N/2} g(i, N)
\end{align}
Here, we use Eq.~(\ref{limit_sum_int}) again:
\begin{align}
2\sum_{i=1}^{N/2} g(i, N) 
&=
\sum_{i=1}^{N/2} \frac{1}{\frac{i \log(1+N/2)}{rN} + \frac{1}{2}}
=
\frac{rN}{\log(1+N/2)} \sum_{i=1}^{N/2} \frac{1}{i + \frac{1}{2}\frac{rN}{\log(1+N/2)}}
\\
&<
\frac{rN}{\log(1+N/2)} \left(\frac{1}{\frac{1}{2}\frac{rN}{\log(1+N/2)}} +
\int_{1}^{N/2} \frac{1}{x + \frac{1}{2}\frac{rN}{\log(1+N/2)}} dx
\right)
\\
&=
2 +
\frac{rN}{\log(1+N/2)}
\left[ \log(x + \frac{1}{2}\frac{rN}{\log(1+N/2)}) \right]_{1}^{N/2} 
\\
&=
2 + \frac{rN}{\log(1+N/2)} \left(
\log(\frac{N}{2} + \frac{rN}{2\log(1+N/2)}) 
-\log(1 + \frac{rN}{2\log(1+N/2)})
\right)
\\
&=
2 + \frac{rN}{\log(1+N/2)}
\log\left(
\frac{\frac{N}{2} + \frac{rN}{2\log(1+N/2)}}{1 + \frac{rN}{2\log(1+N/2)}}
\right)
\\
&=
2 + \frac{rN}{\log(1+N/2)}
\log\left(
\frac{1 + \frac{r}{\log(1+N/2)}}{\frac{2}{N} + \frac{r}{\log(1+N/2)}}
\right)
\\
&=
2 + \frac{rN}{\log(1+N/2)}
\log\left(
\log(1+N/2) \times
\frac{1 + \frac{r}{\log(1+N/2)}}{\frac{2\log(1+N/2)}{N} + r}
\right)
\\
&=
2 +
\frac{rN}{\log(1+N/2)} \left(
    \log(\log(1+N/2)) + 
    \log\left(
    \frac{1 + \frac{r}{\log(1+N/2)}}{r + \frac{2\log(1+N/2)}{N}}
    \right)
\right)
\end{align}
\begin{align}
\therefore
\lim_{N\to \infty}\frac{\Msumv}{N}
&<
\lim_{N\to \infty}
\frac{2}{N} +
\frac{r}{\log(1+N/2)} \left(
\log(\log(1+N/2)) + 
    \log\left(
    \frac{1 + \frac{r}{\log(1+N/2)}}{r + \frac{2\log(1+N/2)}{N}}
    \right)
\right)
\\
&=
\lim_{N\to \infty}
0 + \left(
0 \times \log\frac{1+0}{r+0} + \frac{r\log(\log(1+N/2)}{\log(1+N/2)}
\right)
\\
&=
\lim_{N\to \infty}
\frac{r\log(\log(1+N/2)}{\log(1+N/2)}
\\
&=0 ~
\left( \because
\lim_{N\to \infty}
\frac{\log N}{N} 
= 0
\right).
\end{align}
We can see that $\Msumv/N < o(\log(\log N)/\log N)$.
This proves that $1/f$ noise has little inhibitory effects on memory, and this property is independent from the noise intensity $r$.
In addition, because
\begin{align}
2\sum_{i=1}^{N/2}l(i, N) &= 2\sum_{i=1}^{N/2} \frac{1}{\frac{ 2 i (1+\log(N/2))}{rN}+1}\\
&= \frac{1+\log(N/2)}{rN}\sum_{i=1}^{N/2} \frac{1}{i + \frac{1}{2}\frac{rN}{(1+\log(N/2))} }\\
&> \frac{1+\log(N/2)}{rN}\sum_{i=1}^{N/2} \frac{1}{i + \frac{1}{2}\frac{rN}{\log(N/2)} },
\end{align}
we can similarly obtain $o(\log(\log N)/\log N) < \Msumv/N$.
Therefore, $\Msumv/N = o(\log(\log N)/\log N)$.

Next, we prove the case of $\beta>1$.
the line of $\log{S_{i, 1}}=\log{f_i} + \gamma'$ and $\log{S_{i, \beta}}=-\beta\log{f_i} + \gamma'$ has one intersection. The PSD where $f_i$ are smaller (larger) than the intersection holds $S_{i, 1}<S_{i, \beta}$ ($S_{i, 1}>S_{i, \beta}$).
According to Eq.~(\ref{order_Csumu_decrease_lam}), we obtain
\begin{align}
    {\Msumu}^{1} \leq {\Msumu}^{\beta},
\end{align}
where ${\Msumu}^{1}$ and ${\Msumu}^{\beta}$ represent 
the $\Msumu$ of $1/f$ noise and $1/f^\beta$ $(\beta\geq1)$ noise, respectively. This produces
\begin{align}
    N-{\Msumu}^{1} \geq N-{\Msumu}^{\beta},\\
    {\Msumv}^{1} \geq {\Msumv}^{\beta},
\end{align}
where ${\Msumv}^{1}$ and ${\Msumv}^{\beta}$ represent
the $\Msumv$ of $1/f$ noise and $1/f^\beta$ $(\beta\geq1)$ noise, respectively. 
Using $\lim_{N\to \infty} \frac{{\Msumv}^{1}}{N} = 0$ and $\frac{{\Msumv}^{\beta}}{N}\geq 0$,
\begin{align}
    \lim_{N\to \infty} \frac{{\Msumv}^{1}}{N} \geq \lim_{N\to \infty} \frac{{\Msumv}^{\beta}}{N}\geq 0,
    \\
    \lim_{N\to \infty} \frac{{\Msumv}^{\beta}}{N} = 0.
\end{align}
These results prove that $1/f^\beta ~(\beta\geq 1)$ noise has little inhibitory effects on memory within infinite dimensional systems, and this property is independent from the noise intensity $r$.

\section{Further Discussion}
In this study, to evaluate the effects of autocorrelated noise on memory,
we have introduced not only the MF of i.i.d. input but also that of autocorrelated noise.
Considering the noise as an input, this definition of MF enables us to evaluate the memory of any inputs.
We reveled that the number and intensities of linearly independent bases in 
an input determine the amount of memory that the system retains about the input.
This result gives insights into tasks utilizing short-term memory in any linear recurrent networks.
One of those tasks is forecasting the input signals that have not been injected, which has been characterized by the indicator called forecasting capacity (FC)~\cite{gonon2020memory}.
Previous studies have revealed that the autocorrelation of input affects FC, suggesting the relationship between our results.
In the future, we could consider examining how these tasks that exploit short-term memory connect with our findings.


\bibliography{main}